
\documentclass{article}

\usepackage{microtype}
\usepackage{graphicx}
\usepackage{subcaption}
\usepackage{booktabs} 
\usepackage{hyperref}


\usepackage[accepted]{icml2026}



\usepackage{amsmath}
\usepackage{amssymb}
\usepackage{amsthm}
\usepackage{mathtools}

\usepackage{thmtools}
\usepackage{thm-restate}

\usepackage{url}
\usepackage{packages}
\usepackage{commands}
\usepackage{CL_macros}
\usepackage{mymacros}
\usepackage{algorithm}
\usepackage{algorithmic}
\usepackage{balance}


\theoremstyle{plain}
\newtheorem{theorem}{Theorem}[section]

\newtheorem{lemma}[theorem]{Lemma}
\newtheorem{corollary}[theorem]{Corollary}
\theoremstyle{definition}
\newtheorem{definition}[theorem]{Definition}
\newtheorem{assumption}[theorem]{Assumption}
\theoremstyle{remark}
\newtheorem{remark}[theorem]{Remark}

\usepackage[textsize=tiny]{todonotes}

\icmltitlerunning{Spectral Collapse}

\begin{document}

\twocolumn[
  \icmltitle{Spectral Collapse Drives Loss of Plasticity in Deep Continual Learning}
  \icmltitlerunning{Spectral Collapse Drives Loss of Plasticity}



  \icmlsetsymbol{equal}{*}

  \begin{icmlauthorlist}
    \icmlauthor{Arjun Prakash}{equal,yyy}
    \icmlauthor{Naicheng He}{equal,yyy}
    \icmlauthor{Kaicheng Guo}{equal,yyy}
    \icmlauthor{Saket Tiwari}{yyy}
    \icmlauthor{Tyrone Serapio}{yyy}
    \icmlauthor{Ruo Yu Tao}{yyy}
    \icmlauthor{Amy Greenwald}{yyy}
    \icmlauthor{George Konidaris}{yyy}
  \end{icmlauthorlist}

  \icmlaffiliation{yyy}{Department of Computer Science, Brown University, Providence, RI, USA}

  \icmlcorrespondingauthor{Arjun Prakash}{arjun\_prakash.edu}
  \icmlcorrespondingauthor{Naicheng He}{naicheng\_he@brown.edu}

  \icmlkeywords{Machine Learning, ICML}

  \vskip 0.3in
]



\printAffiliationsAndNotice{\icmlEqualContribution, order drawn from random}

\begin{abstract}
We investigate why deep neural networks suffer from loss of plasticity in continual learning, and thus fail to learn new tasks without reinitializing parameters. We show that this failure is preceded by Hessian spectral collapse at new-task initialization, where meaningful curvature directions vanish and gradient descent becomes ineffective. Analyzing a linearized ReLU network, we derive explicit $\epsilon$-rank conditions for successful training and prove that the loss-weighted Gram matrix is spectrally equivalent to the Generalized Gauss-Newton approximation, thereby relating NTK dynamics to Hessian curvature. Targeting spectral collapse directly, we then discuss the Kronecker factored approximation of the Hessian, which motivates two regularization enhancements: maintaining high effective feature rank and applying L2 penalties. Experiments on continual supervised and reinforcement learning tasks confirm that combining these two regularizers effectively preserves plasticity.

\end{abstract}

\section{Introduction}
Human intelligence is marked not by mastering a single task, but by the ability to continually adapt to new challenges, through the accumulation and refinement of skills \sarjun{}{and knowledge}. 
Likewise, a central aspiration of artificial intelligence is to create systems that can learn throughout their lifetimes.
Endowing artificial systems  with this ability would ensure that they remain useful, relevant, and robust in an open world \citep{javed2024big}.

Research on deep learning and deep reinforcement learning has demonstrated remarkable progress, reaching human and even superhuman performance in image recognition \citep{simeoni2025dinov3}, game playing \citep{silver2017mastering}, and reasoning \citep{meta2022human}. 
But these advances have largely been realized under stationary data distributions. 
\mydef{Continual learning} concerns a system's ability to perform well through a sequence of evolving tasks.
Within this framework, two complementary objectives are often distinguished: 
(i) \emph{maintaining plasticity}, meaning the capacity to acquire new knowledge \citep{dohare_loss_2024}, and
(ii) \emph{preventing catastrophic forgetting}, where the learner forgets how to solve tasks it previously mastered \citep{kirkpatrick_overcoming_2017, catastrophic_forgetting, baker2023domain}.

Our work explores plasticity loss from an optimization perspective.
Despite significant empirical progress in the area \citep{lyle2023understandingplasticityneuralnetworks, lyle2024disentangling, abbas2023loss}, there remains no unifying consensus on what drives loss of plasticity \citep{klein2024plasticitylossdeepreinforcement}.
Several disparate explanations have been investigated, including inactive neurons which output constants regardless of the inputs \citep{bjorck2021high, sokar2023dormant}, the reduction in feature expressiveness associated with large parameter norms \citep{lyle2024disentangling}, and 
overfitting \citep{igl2020transient}.
Recent work has begun to connect plasticity loss to second-order properties, such as the stable rank of the Hessian or layer-wise spectral conditioning \citep{lewandowski2024directionscurvatureexplanationloss, lewandowski2024learning}. Building on these insights, we argue that these seemingly disparate observations are in fact different facets of the same phenomenon, which we call \mydef{spectral collapse}: a degeneration of the Hessian eigenspectrum, indicating loss of curvature in most directions. An example comparison of Hessian eigenspectrum with and without the spectral collapse is shown in \Cref{fig:fig1}: \sarjun{}{in the BP run, the task-50 spectrum has a broad positive bulk, indicating many usable curvature directions; but by task 300, most spectral mass has collapsed, leaving only a few isolated non-zero eigenvalue directions. In contrast, L2-ER preserves a broader spectral bulk across tasks.}

    Our main contributions can be summarized as follows:
\begin{enumerate}
    \item We define Hessian spectral collapse, and identify it as a central mechanism underlying plasticity loss. To support this claim, we provide extensive empirical evidence across diverse continual learning benchmarks, demonstrating that existing plasticity-preserving interventions designed to target seemingly different phenomena all prevent spectral collapse.
    
    \item We analyze a linearized ReLU network and derive explicit $\epsilon$-rank conditions for successful training. We establish that the loss-weighted Gram matrix is spectrally equivalent to the Generalized Gauss-Newton (GGN) matrix, thereby relating function-space analysis to parameter-space curvature. 
    
    \amy{add something to the end of 2 or the start of 3 about how  computing the GGN Hessian is intractable, so we instead use the KFAC approximation, which enables us to compute effective rank}
    
    \item Leveraging this theory, we propose $L2$-ER regularization, intended to target Hessian eigenspectrum collapse directly, and show that it matches or exceeds existing methods on various continual supervised learning and reinforcement learning benchmarks.%
    \footnote{Code is available at \href{https://github.com/KevinGuo27/lop-jax}{here}.}
\end{enumerate}

\begin{figure}[htbp]
  \centering
  \includegraphics[width=1.0\linewidth]{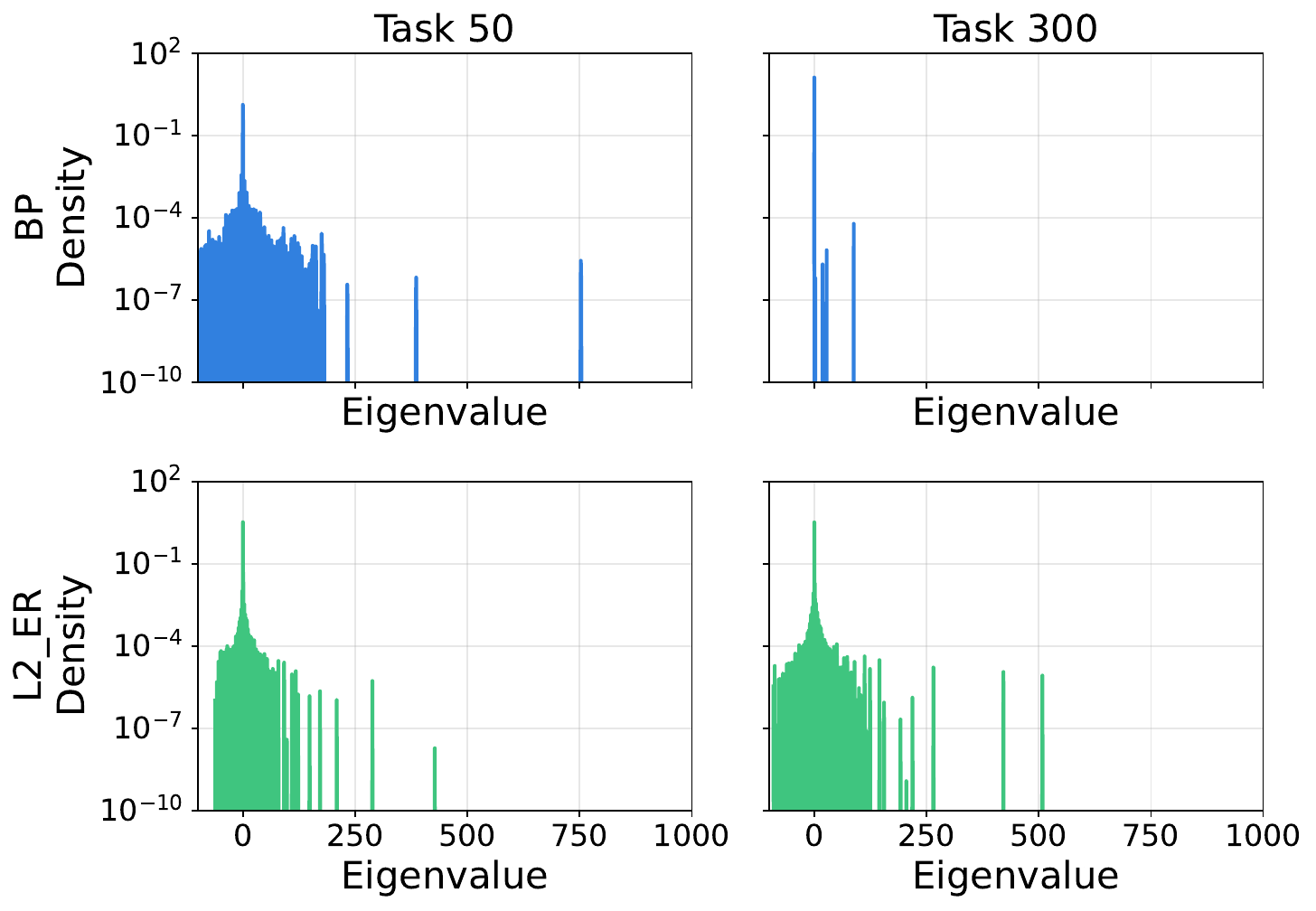}

\caption{The Hessian eigenspectrum on continual Imagenet at task initialization. The top row shows the eigenspectrum on a network using standard backpropagation (BP). The bottom row shows the eigenspectrum using $L2$-ER regularization (ours).}
  \label{fig:fig1}
\end{figure}

\section{Related Work}

\paragraph{Causes of Plasticity Loss}

Several pathologies have been proposed to explain loss of plasticity in continual learning, both supervised and reinforcement \citep{lyle2023understandingplasticityneuralnetworks, klein2024plasticitylossdeepreinforcement}. 
For example, \mydef{dormant neurons}, which output a positive value on all inputs, reduce a network's expressive power, while \mydef{dead neurons}, which output zero on all inputs, prevent gradient flow \citep{montufar2014number, sokar2023dormant}. 
Reductions in \mydef{effective feature rank}, which measures the intrinsic dimensionality of feature representations and directly relates to the persistence of dead neurons, has also been linked to loss of plasticity \citep{roy2007effective, dohare_loss_2024}, as has \mydef{parameter norm growth}, which leads to ill-conditioned Hessians that cause numerical instability \citep{obando2024value, lyle2024disentangling}. 
\citet{lyle2024disentangling} propose a ``swiss cheese" model that attempts to preserve plasticity by targeting all these pathologies simultaneously.


\paragraph{Mitigating Plasticity Loss}

Existing methods for mitigating loss of plasticity target one or more of the aforementioned pathologies \citep{klein2024plasticitylossdeepreinforcement}.
Broadly speaking, these approaches fall into two categories: (i) {continuous interventions} that apply at every optimization step, and (ii) intermittent interventions that periodically reset or perturb parameters \citep{juliani2024study}.
For example, \citet{lyle2024disentangling} shows that combining $L2$ regularization to control parameter norm growth with layer normalization to stabilize pre-activation distributions is highly effective. 
Other approaches regularize the singular values of the layer-wise Jacobians \citep{lewandowski2024learning} to maintain parameter distributions close to a uniform random initialization \citep{he2015delving, hinton2006reducing}.
Architectural changes such as PELU \citep{godfrey2019evaluation}, CRELU \citep{shang2016understanding, abbas2023loss}, and adaptive rational activations \citep{delfosse2021adaptive} avoid the saturation of neurons. In contrast, periodic interventions, such as Shrink \& Perturb \citep{ash2020warm}, continual backpropagation (CBP) \citep{dohare_loss_2024}, and ReDo \citep{sokar2023dormant}, selectively reset dormant neurons.
Empirically, hybrid strategies---typically involving $L2$ regularization combined with either architectural or periodic interventions---exhibit the strongest performance \citep{lyle2024disentangling, juliani2024study}, implying that plasticity loss can emerge from multiple mechanisms.

\paragraph{Approximate Second-Order Methods}

Our work contributes to a recent line of inquiry that connects loss of plasticity to second-order information. 
Prior work by \citet{lewandowski2024directionscurvatureexplanationloss} provides strong empirical evidence that loss of plasticity correlates with a decline in the stable rank of a partial Hessian, proposing a Wasserstein penalty to maintain characteristics of the initial parameter distribution. 
In a complementary approach, \citet{lewandowski2024learning} focus on the spectral properties of the layer-wise weight matrices, rather than the Hessian, using spectral regularization to keep the largest singular value near one, thereby maintaining favorable conditioning and gradient diversity. 
Similarly, Parseval regularization \citep{chung2024parseval} improves weight-matrix conditioning by enforcing orthogonality in the weight matrices to preserve gradient norms. \arnie{Need to work on this part after writing contributions.}

\paragraph{Neural Tangent Kernel in Continual Learning}

The neural tangent kernel (NTK) \citep{jacot2018neural} provides a tractable lens on continual learning by approximating training as kernel regression around initialization. Prior work analyzes continual learning in the NTK regime through transfer and forgetting \citep{doan2021theoretical} or uses empirical-NTK degeneracies as a diagnostic \citep{lyle2024disentangling}. Our analysis isolates a concrete failure mode: \sarjun{}{learnability is controlled by how much of the task error residual lies in the \emph{fast} eigenspace, meaning the span of those Gram-matrix eigenvectors whose eigenvalues are large enough to contract substantially within the budgeted number of gradient steps.} 
\samy{We then derive explicit bounds tying this to the Hessian/Gauss--Newton spectrum.}{} \amy{putting this sentence right here seems to me to detract from the flow, but maybe i am missing its point. i.e., what is the ``this"? but much more importantly, what is the significance of the bounds?} 
For permuted MNIST, we prove the NTK is invariant to input permutations, \samy{showing}{which implies} that plasticity loss arises from progressive shrinkage of the fast subspace rather than task-induced kernel drift. While \citet{pmlr-v267-tang25g} connect rank collapse to output variability, we show that once the fast eigenspace collapses, learning fails even with stable outputs---motivating L2-ER, which targets the underlying geometry directly. Our work is also related to Singular Learning Theory (SLT) \cite{Watanabe_2009}, which is concerned with understanding singularities in the Fisher information matrix of neural networks. The empirical NTK and Fisher information matrices are equivalent up to reversal and scaling \cite{karakida2021pathological} and share non-zero eigenvalues. SLT explains why models move toward singular regions, while our work studies the continual-learning consequence of this phenomenon. 
\section{Preliminaries}

\paragraph{Notation}
We use $\tasknum$ to index tasks, $\iter$ for optimization iterates, $\layer$ for layers, $\samples$ and $\sample$ for samples, $\classes$ for output coordinates, and $\eigindex$ for eigenvalues.
A layer-specific matrix at task $\tasknum$ and iterate $\iter$ is written $\mathbf{M}^{[\layer],(\iter)}_\tasknum$, with its $(a,b)$-entry written as $[\mathbf{M}^{[\layer],(\iter)}_\tasknum]_{ab}$. We write $\datax[\tasknum, \samples]$ and $\datay[\tasknum, \samples]$ for the $\samples$th data sample in dataset $\dataX$.
We also write $[a]$ to denote the set of integers $\{1, \ldots, a\}$, for some $a \in \ints$.


\paragraph{Deep Neural Networks}
A multi-layered-perceptron (MLP), also called a feedforward 
neural network, is constructed by composing $\numlayers$ functions, i.e., $\linout[][\numlayers] \circ \cdots \circ \linout[][1]$, one per layer.
Given an input (resp.\ output) dimension of $\layerdim[\layer - 1]$ (resp.\ $\layerdim[\layer]$), the function at layer $\layer$, for $\layer \in [\numlayers]$, is a map $\linout[][\layer]: \R^{\layerdim[\layer-1]} \to \R^{\layerdim[\layer]}$. 
This function $\linout[][\layer]$ is the composition of an element-wise non-linear activation function  $\nonlin[\layer]: \R \to \R$ and a weight matrix $\weights[][\layer] \in \R^{\layerdim[\layer] \times \layerdim[\layer-1]}$, so that $\linout[][\layer] = \nonlin[\layer] \circ \weights[][\layer]$. 
Our activation function of choice is \relu, defined as $\max \{ 0, x \}$.
The dimension at layer $l$ indicates the number of neurons at that layer.
The total number of hidden neurons 
is $\layerdim \doteq \sum^{\numlayers-1}_{\layer=1} \layerdim[\layer]$.

We collect all parameters into a single vector using the \texttt{vec} operator, so that the MLP parameterization is represented by $\params \doteq \operatorname{vec}\big(\weights[][1],\ldots,\weights[][\numlayers]\big)\in\mathbb{R}^{\numparams}$, where
$\numparams = \sum_{\layer=1}^{\numlayers}\layerdim[\layer]\layerdim[\layer-1]$ is the total parameter count. 
Denoting the input (resp.\ output) dimension of the MLP by $\inputdim \doteq \layerdim[0]$ (resp.\ $\outputdim \doteq \layerdim[\numlayers]$), 
the MLP map is
$\linout[\params]: \R^{\inputdim} \to \R^{\outputdim}$. 

A task $\tasknum$ is described by a loss function $\lossfunc[\tasknum]$ and a data distribution $\datadist[\tasknum]$. 
Given such a task $\tasknum$, a learning algorithm seeks parameter values that minimize the expected loss $\innerobj[][\tasknum](\inner) \doteq \Ex_{\datadist[\tasknum]}[ \lossfunc[\tasknum](\logits[\tasknum] , \datay[\tasknum])]$ on the task, where $\logits[\tasknum] = \linout[{\params[\tasknum]}](\datax[\tasknum])$.

The gradient of the network is the vector of first derivatives of the loss function w.r.t.\ the network parameters, i.e., $\grad[\params] \lossfunc[\tasknum] \in \R^{\numparams}$, while the main object of our investigation, the \mydef{Hessian}, is the matrix of second derivatives, i.e., $\Hessian \doteq \grad[\params][2] \lossfunc[\tasknum] \in \R^{\numparams \times \numparams}$. Together, the Jacobian and the Hessian capture up to second-order changes in the loss, i.e., when the parameters change by a vector $\delta \in \R^{\numparams}$, the loss can be approximated as $\innerobj[][\tasknum] (\params + \delta) \approx \innerobj[][\tasknum](\params) + \grad[\params] \innerobj[][\tasknum]^{\top} \delta + \frac{1}{2}\delta^{\top} \grad[\params][2] \innerobj[][\tasknum] \delta$.

\paragraph{The Hessian}
The Hessian offers valuable insight into the loss landscape of a neural network \citep{pouplin_curvature_2023}.
For example, one line of research is concerned with large connected regions in weight space where the error remains approximately constant \citep{schimdhuber};
such regions may be less prone to overfitting and result in smaller generalization gaps \citep{NEURIPS2021_bcb41ccd}. 
Since the Hessian $\Hessian$ is symmetric, there exists an orthonormal basis of eigenvectors $\{ \eigenvec[q] \}$ and real eigenvalues $\{ \eigenval[q] \}$ satisfying $\Hessian \eigenvec[q] = \eigenval[q] \eigenvec[q]$, for all $q \in \{ 1, \ldots, \numparams \}$.
Each eigenvalue $\eigenval[q]$ measures the second-order curvature of the loss in the direction of eigenvector $\eigenvec[q]$.
A large eigenvalue indicates a \samy{rapid second-order change in the loss}{high local curvature in the loss landscape} in the direction of the corresponding eigenvector, while an eigenvalue near zero indicates low local curvature in that direction.
\sarjun{}{Given the importance of the eigenvalues of the Hessian, the distribution of all eigenvalues, known as the \mydef{eigenspectrum}, provides more information than any single scalar summary, such as the largest eigenvalue or the trace.} It is possible decompose the Hessian further using the chain rule, to obtain the Gauss-Newton decomposition \citep{zhao2024theoretical}:
\begin{align}
\grad[\params][2]\lossfunc &= \Hessian = \underbrace{ \frac{1}{\numsamples} \sum_{\samples=1}^{\numsamples} \Jacobian[ \samples][\top] \left[ \grad[{\logits}][2] \lossfunc(\logits[ \samples], \datay[\samples][]) \right] \Jacobian[\samples]}_{\Hessian[\GGN]} \nonumber \\
&+ \underbrace{\frac{1}{\numsamples}\sum_{\samples=1}^{\numsamples} \sum_{\classes=1}^{\outputdim}\left[\grad[\logits] \lossfunc (\logits[\samples], \datay[\samples]) \right]_\classes\nabla_\params^2 [\logits[\samples]]_{\classes}}_{\residual},
\label{eqn:ggn}
\end{align}
where at the $\samples$th sample, $\Jacobian[\samples] \in \R^{\outputdim \times \numparams}$ is the Jacobian of the model w.r.t.\ its parameters (i.e., $\grad[\params] \linout[\params](\datax[n])$), 
$\grad[\logits][2] \lossfunc \in \R^{\outputdim \times \outputdim}$ is the Hessian of the loss w.r.t.\ the model outputs \sarjun{}{(i.e., the pre-softmax logits in classification or the scalar output in regression),}
$\left[\grad[\logits] \lossfunc \right]_\classes \in \R^{\outputdim \times 1}$ is the gradient in the $\classes$-th output dimension of the loss w.r.t.\ to the model's output, 
%
%
and  $\nabla_\params^2 [\logits[\samples]]_{\classes}$  is the Hessian of the $\classes$-th output element w.r.t.\ the model parameters. 
Simply dropping the residual term $\residual$, yields the Generalized Gauss-Newton (GGN) matrix
$\Hessian[\GGN]$, a faithful proxy for the Hessian \citep{dangel2025kroneckerfactoredapproximatecurvaturekfac}. \amy{i would assume there are some conditions under which $\Hessian[\GGN]$ is a faithful proxy for the Hessian? otoh, i believe that dropping the term improves stability.}

\paragraph{Neural Tangent Kernel (NTK)}
Given the the complexity of deep neural networks with non-linearities, the neural tangent kernel (NTK) has been proposed as an abstraction to characterize the training dynamics of (stochastic) gradient descent \samy{with a infinitesimally small learning rates}{} in the infinite-width limit (i.e., $\layerdim \to \infty)$ \cite{jacot2018neural}.\amy{the infinite width limit can probably also be crossed out. neither is really important here, i don't think.}
The main principle behind NTK theory is to approximate a function by linearizing a neural network around its initialization. 
The seminal work by \citeauthor{jacot2018neural} shows that if $\linout$ is an appropriately scaled neural network, with $\params$ initialized i.i.d.\ from standard (or isotropic) Gaussian's (i.e. $\params \sim \initdist$), then $\linout[\params](\dataX) \approx \linout[{\params[][][0]}](\dataX) + \Jacobian(\params - \params[][][0])$. \arjun{F hat?} 
The linearized model $\linout[\params]$ can be interpreted as a kernel method, where the Jacobian is the feature map.
The kernel induced by the feature map is the NTK and is dependent on the random initialization $\params[][][0]$. 
The neural tangent kernel $\ntker[\params]: \datasetx \times \datasetx \to \R^{ \outputdim \times \outputdim}$ is given by $\ntker[\params](\datax, \datax[][\prime]) \doteq \Ex_{\params \sim \initdist} [   \langle \frac{\partial \linout[\params](\datax)}{\partial \params}, \frac{\partial \linout[\params](\datax[][\prime])}{\partial \params} \rangle |_{\params = \params[][][0]} ]$.
Evaluating this kernel function on a finite dataset of since $\numsamples$ (i.e., for all  $\datax[\samples], \datax[\sample] \in \dataX)$ gives the empirical NTK matrix
    $\ntker[\params] (\datax[\samples], \datax[\sample]) = \Jacobian \Jacobian[][\top]$.
This matrix, which captures all pairwise inner products among vectors in the set $\dataX$, is called the \mydef{Gram matrix} and is denoted by $\Gram \in \R^{\numsamples\outputdim\times \numsamples\outputdim}$. \arjun{come back to this} \amy{why $nO\times nO$? i don't know how many data points there are, maybe $N$?, if so, the dims of the Gram matrix should just be $N\times N$. n'est-ce pas?}
The celebrated result from NTK analysis is that as $\layerdim \to \infty$, the NTK $\ntker[\params]$ converges in probability to an explicit deterministic limit $\ntker[\params][\infty] = \Ex_{\params \sim \initdist}[\Gram]$ \citep[Theorem 1]{jacot2018neural}.
This deterministic limit only depends on the variance parameter of the Gaussian initialization and the choice of non-linearity. 
The NTK limit holds for \relu activations \cite{arora2019exact} and LeCun initialization \cite{jacot2018neural}.

\section{Continual Learning and Plasticity Loss}

Assume an agent  facing a sequence of $\numtasks$ tasks whose decision-making capabilities are dictated by a neural network with parameters $\params \in \R^{\numparams}$. Each task $\tasknum$ is characterized by an evaluation metric $\eval[\tasknum] : \R^{\numparams} \to \R$, such as accuracy for supervised learning or expected return for reinforcement learning. 
In continual learning, we seek a sequence of parameters $\{ \params[\tasknum][*] \}_{\tasknum}$ s.t.\ 
$\params[\tasknum][*] \in \argmax_{{\params[\tasknum]}} \eval[\tasknum] (\params[\tasknum])$, for all $\tasknum \in [\numtasks]$.

Since evaluation metrics may be non-differentiable, we further assume a (possibly regularized) surrogate expected loss function $\innerobj[][\tasknum]: \R^{\numparams} \to \R$, for each task $\tasknum$, which is intended to maximize the evaluation metric, but is generally better behaved.
Additionally, we assume the learner has a budget of $\numiters$ 
learning steps per task.
A solution to the continual learning problem is 
generated by an algorithm $\clalgo: \R^{\numparams} \to \R^{\numparams}$, which outputs parameters for the current task $\tasknum$ that optimize $\tasknum$'s surrogate loss function $\innerobj[][\tasknum]$, given the parameters learned during the previous task, i.e., the parameters for task $\tasknum+1$ are initialized as the learned parameters of task $\tasknum$: i.e., $\inner[\tasknum+1][(0)] \doteq \inner[\tasknum][(\numiters)]$.


\begin{definition}[Successful Training]
\label{def: plastic}
In a continual learning problem, given finite update budget $\numiters$ and a small error tolerance $\terror$,
we write $\indi (\lossfunc[\tasknum](\params[\tasknum][(\numiters)]) \leq \terror)$ to indicate \mydef{successful training} on task $\tasknum$. 
We say a neural network is \mydef{plastic} if it successfully trains on all tasks $\tasknum \in [\numtasks]$. 
\end{definition}

In the remainder of this section, we analyze how spectral information of the NTK Gram matrix governs successful training. This analysis motivates the design of our regularizer, which preserves neural network plasticity by controlling the eigenspectrum of the Hessian.

\subsection{Linearized \relu networks}
We now instantiate the NTK framework for a linearized ReLU network.  
Assume input dimension $\inputdim$, width $\layerdim$, and output dimension $\outputdim$.
Denote the first-layer weights by $\weights[][1] = [\weight[1][1], \dots, \weight[\layerdim][1]] \in \R^{\inputdim \times \layerdim}$
where $\weight[h][1] = [\weights[h][1]]_{:,h} \in \R^{\inputdim}$ is the input weight vector of hidden unit $h$.
\sarjun{}{We fix the output weights and the ReLU gates, so the only trainable parameters in this linearized model are the first-layer weights.}
Define the fixed output-layer weights
by $\weights[][2] = [\weight[1][2], \dots, \weight[\layerdim][2]]^\top \in \R^{\layerdim \times \outputdim}$
(i.e.,  $\weight[h][2] = [\weights[][2]]^\top_{h,:} \in\R^{\outputdim}$)
%
%
Then, with $\nonlin(x)=\max\{0,x\}$, the output of a linearized \relu network on input $\datax \in \R^{\inputdim}$ is given by:
\begin{align*}
    \ntkout[\params][](\datax)
    \doteq
    \frac{1}{\sqrt{\layerdim}}
    \sum_{h=1}^{\layerdim} \weight[h][2] \, \sigma((\weight[h][1])^\top \datax) \in 
    \R^{\outputdim}. 
\end{align*}
We fix $\weights[][1][0]$ to be a frozen reference copy of the initialized first-layer matrix $\weights[][1]$. 
With this fixed initialization, we define the fixed ReLU gate function $ \gatefunc[h](\datax)\doteq \mathbf{1}\{(\weight[h][1][0])^{\top} \datax>0\}, h \in 1,... \layerdim$, and gate matrix: $\gatevec(\datax)\doteq \diag(\gatefunc(\datax)) \in \R^{\layerdim \times \layerdim}$. By construction, the only trainable parameters are $\weights[][1]$, initialized at $\weights[][1]=\weights[][1][0]$, 
and the model prediction is \arjun{check transpose}:
\begin{align}
    \ntkout[\params](\datax) =& \frac{1}{\sqrt{\layerdim}}\linout[\weights[][1]](\datax)^{\top} \weights[][2] \\ 
    =& \frac{1}{\sqrt{\layerdim}} (\weights[][2])^\top \gatevec(\datax) (\weights[][1])^\top \datax \in \R^{\outputdim}.
\end{align}



\paragraph{Continual Leaning with NTKs}
Given task $\tasknum$ with dataset $(\dataX[\tasknum], \dataY[\tasknum]) = \{\datax[\tasknum,\samples], \datay[\tasknum,\samples] \}_{\samples=0}^{\numsamples[\tasknum]}$, the model prediction is $\ntkout[\params](\dataX[\tasknum]) \in \R^{\numsamples[\tasknum] \times \outputdim}$.\amy{why $\R^{\numsamples[\tasknum] \times \outputdim}$ and not $\R^{\outputdim}$?} 
We define the squared-error loss as $\lossfunc[\tasknum](\params) \doteq \frac{1}{2}\norm[{\ntkout[\params](\dataX[\tasknum]) - \dataY[\tasknum]}]^2.$
We define the residual on task $\tasknum$ at iterate $\iter$ as $\resid[\tasknum][][\iter] \doteq \ntkout[{\params[\tasknum][][\iter]}](\dataX[\tasknum]) - \dataY[\tasknum] \in \R^{\samples}$. 
In continual learning, where each task is trained for a finite number of steps $\numiters$, $\inner[\tasknum+1][(0)] \doteq \inner[\tasknum][(\numiters)]$.
\if 0
\begin{align}
    \params[\tasknum][\textrm{\scriptsize end}] = \params[\tasknum][][\numiters], \quad \quad  \params[\tasknum+1][][0] = \params[\tasknum][\textrm{\scriptsize end}] 
\end{align}
\fi
\sarjun{}{Because the ReLU gates are frozen, the model is linear in $\weights[][1]$. Thus the linearized expression, $\ntkout[\params] = \linout[\params]$ is exact for all $\weights[][1]$.} 

Next, we present two lemmas that enable us to define successful training based on $\epsilon$-rank. The proofs and extended explanations are deferred to \Cref{appx:nkt}.

\sarjun{}{\Cref{lem:ntk-dynamics} shows that, under NTK dynamics with squared error loss and learning rate $\learnrate$, gradient descent acts directly on the residual by repeatedly multiplying the residual by $(\I - \learnrate \Gram[\tasknum])$. \Cref{lem:eig-decomp} reveals that by diagonalizing $\Gram[\tasknum]$ each residual component contracts independently at a rate associated with \amy{determined by?} the magnitude of eigenvalue.}

\begin{restatable}[Residual dynamics in the linearized regime]{lemma}{ntkdynamics}
If $\ntkout[\params](\dataX[\tasknum]) \approx \linout[{\params[\tasknum][][0]}](\dataX[\tasknum]) + \Jacobian[\tasknum](\params - \params[\tasknum][][0])$, then gradient descent on $\lossfunc[\tasknum](\params)$
with learning rate $\learnrate$ yields  as the $\iter+1$th iterate $\resid[\tasknum][][\iter+1] = (\I - \learnrate \Gram[\tasknum]) \resid[\tasknum][][\iter]$, where $\Gram[\tasknum] = \Jacobian[\tasknum] \Jacobian[\tasknum][\top] \in \R^{\numsamples \outputdim \times \numsamples \outputdim}$.\amy{dimension is not samples x samples, but rather numsamples x numsamples.} \arjun{check its not numsamples X numsamples O}
Hence, the final value of the residual for task $\tasknum$ is $\resid[\tasknum][][\numiters] = (\I - \learnrate \Gram[\tasknum])^{\numiters} \resid[\tasknum][][0]$. 
\label{lem:ntk-dynamics}
\end{restatable}

\begin{restatable}[Eigenvalue Decomposition]{lemma}
{eigenvaldecomp}
\label{lem:eig-decomp}
Assume $\Gram[\tasknum]$ has eigenpairs $\{ \eigenval[\tasknum, \eigindex], \eigenvec[\tasknum, \eigindex] \}_{\eigindex=1}^{\numsamples}$ with orthonormal $\{ \eigenvec[\tasknum, \eigindex] \}$ and $\eigenval[\tasknum, \eigindex] \geq 0$.
Expand the initial residual:
\begin{align}
    \resid[\tasknum][][0] = \sum^\samples_{\eigindex=1} \residcoef[\tasknum, \eigindex] \eigenvec[\tasknum, \eigindex], \quad \residcoef[\tasknum, \eigindex] \doteq (\eigenvec[\tasknum, \eigindex])^{\top} \resid[\tasknum][][0].
\end{align}
Then, for all iterates $\iter, \resid[\tasknum][][\iter] = \sum^\samples_{\eigindex=1}(1 - \learnrate \eigenval[\tasknum, \eigindex])^{\iter} \residcoef[\tasknum, \eigindex] \eigenvec[\tasknum, \eigindex]$.
\end{restatable}

\subsection{Eigenvalue contraction}
We can now formalize a notion of \mydef{fast} and \mydef{slow eigenspaces} based on an explicit threshold derived from the number of per-task iterates $\numiters$ and the learning rate $\learnrate$.
\begin{definition}
A \emph{fast eigenvalue} must shrink by at least a factor of $\contract \in (0,1)$ after $\numiters$ steps: i.e., $(1 - \learnrate \eigenval)\numiters \leq \contract$. In particular, $\eigenval \geq \epsKg$
where
    $\epsKg \doteq \frac{1 - \contract^{\nicefrac{1}{\numiters}}}{\learnrate} \approx \nicefrac{- \log \contract}{\learnrate \numiters}$.
Given eigenvalues $\eigenval[\tasknum, \eigindex]$, define the fast set $\fastset[\tasknum] \doteq \{ \eigindex : \eigenval[\tasknum, \eigindex] \geq \epsKg \}$ and slow set $\slowset[\tasknum] \doteq \{ \eigindex : \eigenval[\tasknum, \eigindex] < \epsKg \}.$
The \mydef{$\epsilon$-rank} can now be defined as $|\fastset[\tasknum]|$.%
\footnote{In our notation, we omit dependence of $\fastset[\tasknum]$, $\slowset[\tasknum]$ on $\numiters$, $\contract$.}
\amy{just double checking that you mean fast and not slow here, to define the $\epsilon$-rank. don't we care about the ones that don't shrink down to nothing?}
\end{definition}

\begin{definition}[Projectors]
   Given eigenvectors $\eigenvec$ of $\Gram[\tasknum]$, we define the fast (resp.\ slow) projection matrix $\projmat[\tasknum][\fastset] \doteq \sum_{\eigindex \in \fastset[\tasknum]} \eigenvec[\tasknum, \eigindex] \eigenvec[\tasknum, \eigindex][\top]$ (resp.\ $\projmat[\tasknum][\slowset] \doteq \I - \projmat[\tasknum][\fastset]$).
\label{def:proj}
\end{definition}

\begin{restatable}{lemma}
{slowmodes}
\label{lem:slow}
For any task $\tasknum$,
\begin{align}
    \norm[{\resid[\tasknum][][\numiters]}] \geq \norm[{\projmat[\tasknum][\slowset]\resid[\tasknum][][\numiters]}] \geq \contract \norm[{\projmat[\tasknum][\slowset]\resid[\tasknum][][0]}].
\end{align}
\end{restatable}

The consequence of \Cref{lem:slow} is that slow eigenvalues are sticky:
If a component of the initial residual lies in the slow eigenspace, then after $\numiters$ gradient steps at least a $\contract$ fraction of that component remains in that slow subspace.
By \Cref{def: plastic}, successful training on task $\tasknum+1$ requires $\{ \lossfunc[\tasknum](\params[\tasknum+1][][\numiters]) \leq \succrit\}$, which, by \Cref{lem:slow}, can be equivalently stated as $ \{ \Vert \resid[\tasknum+1][][\numiters] \Vert \leq \sqrt{2\succrit} \}$.


\begin{restatable}[Necessary fast-projection condition]{lemma}
{fastcond}
\label{lem:fast-cond}
Using the fast and slow projections, $\epsKg$, and the initial loss residual $\resid[\tasknum][][0]$, a necessary condition to achieve successful training on task $\tasknum$ is
$\contract \norm[{\projmat[\tasknum][\slowset] \resid[\tasknum][][0]}] \leq \sqrt{2\succrit}$.
Equivalently, 
\begin{align}
    \norm[{\projmat[\tasknum][\fastset] \frac{\resid[\tasknum][][0]}{\Vert \resid[\tasknum][][0]\Vert}}] ^2 \geq 1 - \left(\frac{\sqrt{2 \succrit}}{\norm[{\contract \resid[\tasknum][][0]}]^2} \right)
    \end{align}
\end{restatable}

To succeed on a new task, the eigenvalues of the initial Gram matrix must lie overwhelmingly in the fast subspace, unless the initial residual magnitude $\norm[{\resid[\tasknum][][0]}]^2$ is already tiny. When too many eigenvalues fall below the threshold $\epsKg$, the fast subspace shrinks and the necessary condition in \Cref{lem:fast-cond} becomes impossible to satisfy. We refer to this phenomenon as \mydef{spectral collapse}. Next, we demonstrate how spectral collapse governs successful training in a linearized \relu network.

\section{Permuted MNIST}


\amy{when...} \amy{add conditions under which S.C. is observed, as well as references!}

We now analyze Permuted MNIST to show that, under the frozen-gate NTK model, fresh pixel permutations do not change the infinite-width kernel geometry. Thus, when spectral collapse is observed in finite-width continual training, it is attributable to the \samy{}{curvature of the} learned representation\samy{/curvature}{} \samy{}{having} degenerat\samy{ing}{ed} across previous tasks, rather than to the permutations themselves.

The MNIST dataset \cite{lecun1998mnist} is a collection of $\numsamples$ handwritten digits with input dimension $\inputdim = 28 \times 28 = 784$ and labels $\datay \in \{0, \dots, 9\}$.
Let $\permutmat \in \R^{\inputdim\times \inputdim}$ be a permutation matrix with a single 1 in each row and and a of 1 in each column, but not diagonal.
A \emph{permuted MNIST task} is obtained by sampling a fresh permutation $\permutmat[\tasknum]$ and transforming every
input by $\datax \mapsto \permutmat[\tasknum]\datax$, leaving labels unchanged.
Concretely, from an underlying base dataset $\{(\taskdatax[\samples],\taskdatay[\samples])\}_{\samples=1}^{\numsamples}$ we define the task-$\tasknum$ dataset 
    $\dataX[\tasknum] \doteq \{ \permutmat[\tasknum] \taskdatax[\samples] \}_{\samples=1}^{\numsamples}$ and 
    $\dataY[\tasknum] \doteq \{ \taskdatay[\samples] \}_{\samples=1}^{\numsamples}$.
Informally, \Cref{lem:perm-inv-expected-gram} states that $[\ntker[][\infty]]_{o,o'}(\permutmat\datax[\samples],\permutmat\datax[\sample]) = [\ntker[][\infty]]_{o,o'}(\datax[\samples],\datax[\sample])$ for $\sample, \samples \in [\numsamples]$ \arjun{check the kernel is RO or Roxo}. 
This implies the expected MSE loss and frozen-gates Gram matrix
is task-invariant under fresh pixel permutations, up to finite-width fluctuations, which means that loss of plasticity is \emph{not\/} caused by permutations. Instead, spectral collapse is caused by the accumulation of low-rank features and the shrinking of the fast eigenspace during training of prior tasks, which leaves the network ill-conditioned for subsequent permutations.

\paragraph{Softmax Cross-Entropy Loss}
For multi-class cross-entropy with logits $\logits=(\logits[1],\ldots,\logits[\numsamples])\in\R^{\numsamples\outputdim}$ and blockwise softmax $\softmax(\logits)$, the empirical risk $\lossfunc(\logits)=\sum_{\samples=1}^\numsamples \lossfunc(\logits[\samples],\datay[\samples])$ satisfies $\grad[\logits]\lossfunc(\logits)=\softmax(\logits)-\dataY$. In the NTK regime, with the Jacobian frozen at initialization,
gradient descent induces a closed-form update on logits of the form:
\begin{align}
    \logits[][][\iter+1] &= \logits[][][\iter] - \learnrate \Gram[][][0] \grad[\logits]\lossfunc(\logits[][][\iter]), \\
    \Gram[][][0] &= \Jacobian[][][0] (\Jacobian[][][0])^{\top},
\end{align}
where $\Jacobian[][][0]$ is the initial feature map induced by the inputs and $\Gram[][][0] \in \R^{(\numsamples\outputdim)\times(\numsamples\outputdim)}$ \amy{double check dimension} is the corresponding multi-output Gram matrix with blocks
\begin{align*}
   [\Gram[][][0]]_{(\samples,o),(\sample,o')} &= \langle \grad[\params] \lbrack \linout[\params](\datax[\samples
   ])\rbrack_{o}, \grad[\params] 
 \lbrack \linout[\params] (\datax[\sample])\rbrack_{o'} \rangle.
\end{align*}

 This is exactly the discrete-time counterpart of the standard NTK function-space dynamics for general losses \citep{lee2019wide} (i.e., $[\ntker[\params](\datax[\samples], \datax[\sample])]_{o,o'}$). We show in \Cref{appx:ntk-ent} that the linearized weighted error $\errorvec[][][\iter]$ evolves according to 
\begin{align}
    \errorvec[][][\iter+1] &\approx (\I - \learnrate \Curvature[][ 1/2] \Gram[][][0] \Curvature[][ 1/2]) \errorvec[][][\iter], 
\end{align}
where $\Curvature = \grad[\logits][2]\lossfunc(\logits)$. Consequently, for cross-entropy loss, the eigenvalues that determine \sarjun{trainability}{collapse} \amy{this is not a term we've defined or are really using in this paper, i don't think? it also still appears in the appendix.} are those of the weighted Gram matrix, $\Gram[\LW] = \Curvature[][ 1/2] \Gram[][][0] \Curvature[][1/2]$. 
Since by \Cref{lem:perm-inv-expected-gram}, the infinite-width limit of $\Gram[][][0]$ remains task-invariant, the geometry of the optimization landscape is not inherently altered by the permutations themselves. 
The fast set is therefore defined as $\fastset=\{\eigindex:\eigenval[\eigindex](\Gram[\LW])>\epsKg\}$. \Cref{lem:fast-cond} now implies a necessary condition for success on a new task: the initial residual must project negligibly onto the \emph{slow} subspace. 
However, as the size of the fast set $(|\fastset|)$ shrinks due to spectral collapse, this condition becomes statistically impossible to satisfy. 
This yields a direct mechanistic interpretation of \Cref{fig:ntk}: across tasks, training accuracy is positively correlated with the $\epsilon$-rank $(|\{\eigindex:\eigenval[\eigindex](\Gram[\LW])>\epsKg\}|)$. 
$L2$ regularization acts as a counter-force by shifting $\eigenval[\eigindex](\Gram[\LW])$ by a positive constant, ensuring more eigenvalues remain in $\fastset$. \arjun{check} 
Conversely, low-rank feature initialization exacerbates the loss of plasticity by starting with fewer effective directions, thereby accelerating the reduction of $|\fastset|$.

\begin{figure}
    \centering
    \includegraphics[width=\linewidth]{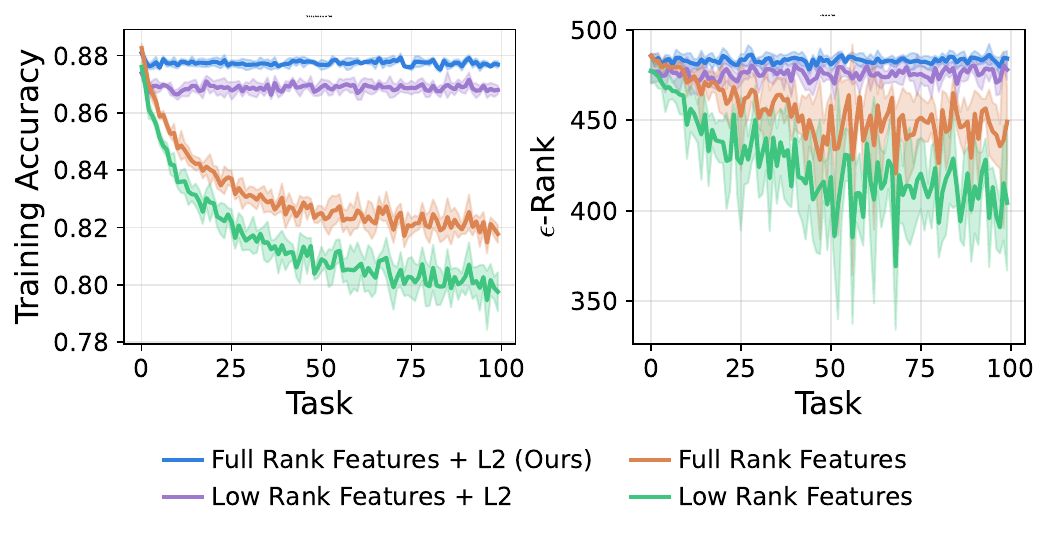}
    \caption{The left plot shows the training accuracy of the width 1000 linearized \relu network. The right plot shows the $\epsKg$-rank of $\Gram[\LW]$ with $\numiters = |\text{train set}|$ and $\contract = 1e^{-3}$. Accuracy and $\epsKg$-rank are correlated.}
    \label{fig:ntk}
\end{figure}



\section{Mitigating Spectral Collapse}\label{sec: ggn}

Our NTK-inspired experiment suggests a practical alternative to directly analyzing the complicated Hessian of a nonlinear neural network. Rather than working with the full Hessian, we analyze training in the linearized regime through the Jacobian of model outputs and a function-space Gram operator. For general losses, the relevant function-space operator is the loss-weighted Gram
\begin{align}
\Gram[\LW] \doteq \Curvature[][ 1/2]\Gram[][0] \Curvature[][ 1/2],\quad \Gram[][0] \doteq \Jacobian \Jacobian[][\top].   
\end{align}
This operator has two crucial properties for our purposes: It is positive semidefinite for standard convex-in-logits losses, and it is spectrally equivalent to the generalized Gauss Newton matrix  $\Hessian[\GGN] \doteq \Jacobian[][\top] \Curvature \Jacobian.$
Indeed, letting $\mA \doteq \Curvature[][1/2]\Jacobian$ yields $\Gram[\LW] = \mA \mA^\top$, while $\Hessian[\GGN] = \mA^\top \mA$, so they share the same nonzero eigenvalues. As a result, the  $\epsilon$-rank of $\Gram[\LW]$ equals that of $\Hessian[\GGN]$, but now expressed in a parameter space where it connects directly to curvature, optimization stability, and regularization. In \Cref{fig:scatter}, we demonstrate the correspondence between successful training and the $\epsilon$-rank of the Hessian matrix in Continual ImageNet. Additional empirical findings appear in \Cref{appendix:cifar,appendix:imagenet,appendix:permuted_mnist}.


\begin{figure}[t]
    \centering
    
    \begin{subfigure}[b]{1.0\linewidth}
        \centering
        \includegraphics[width=\linewidth]{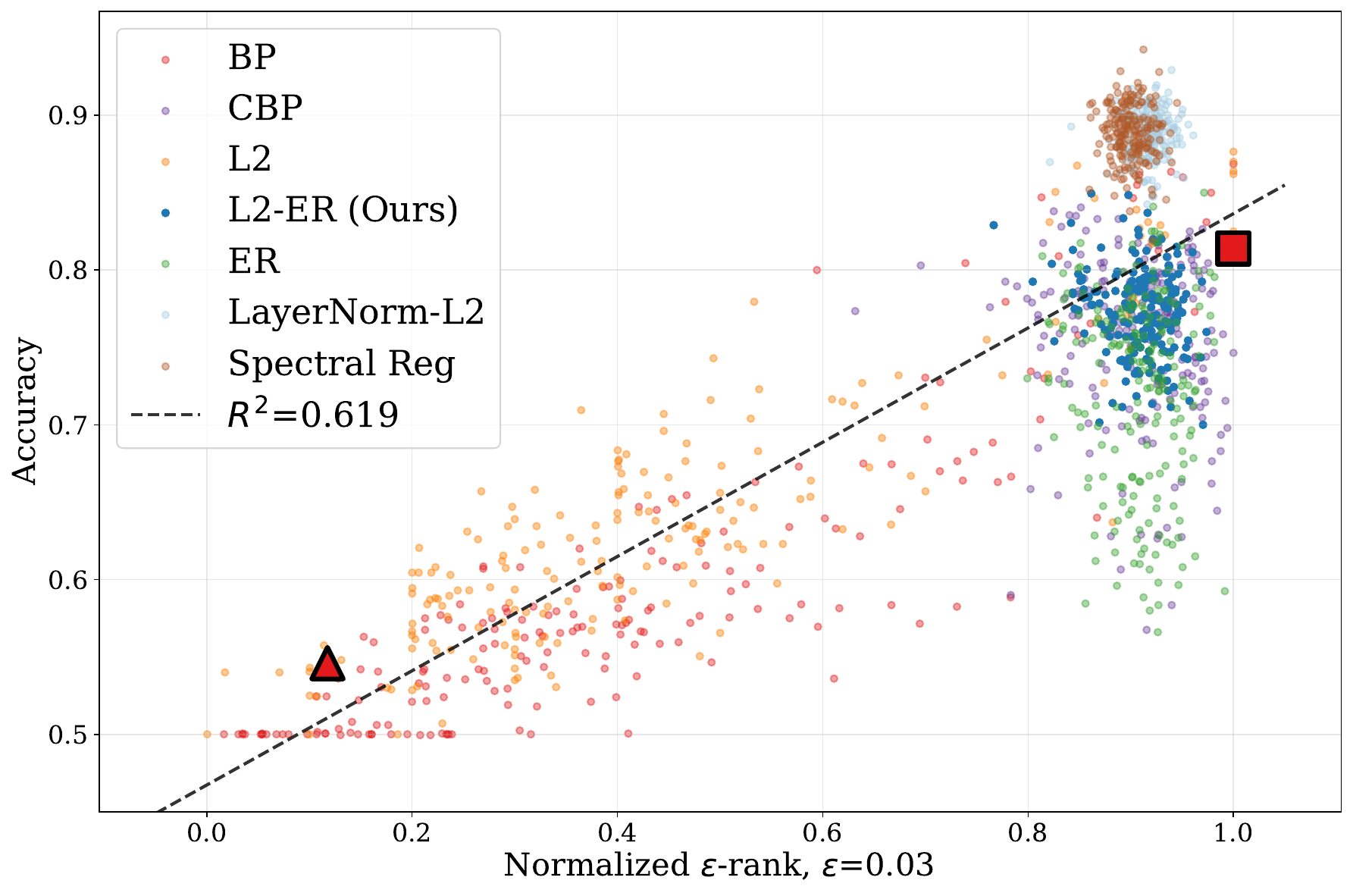}
        \label{fig:scatter_top}
    \end{subfigure}
    

    \begin{subfigure}[b]{0.48\linewidth}
        \centering
        \includegraphics[width=\linewidth]{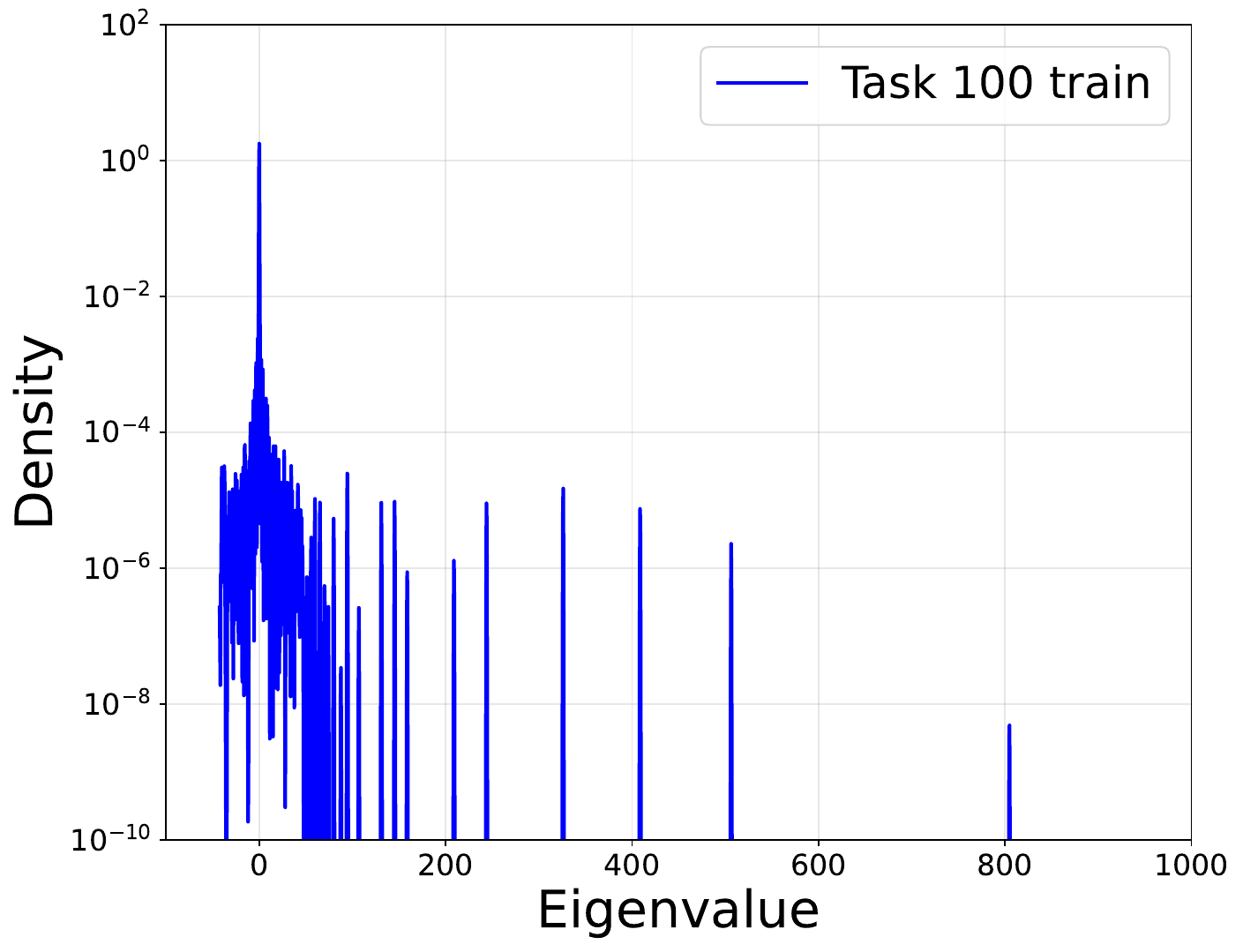}
        \label{fig:spectrum_left}
    \end{subfigure}
    \hfill 
    \begin{subfigure}[b]{0.48\linewidth}
        \centering
        \includegraphics[width=\linewidth]{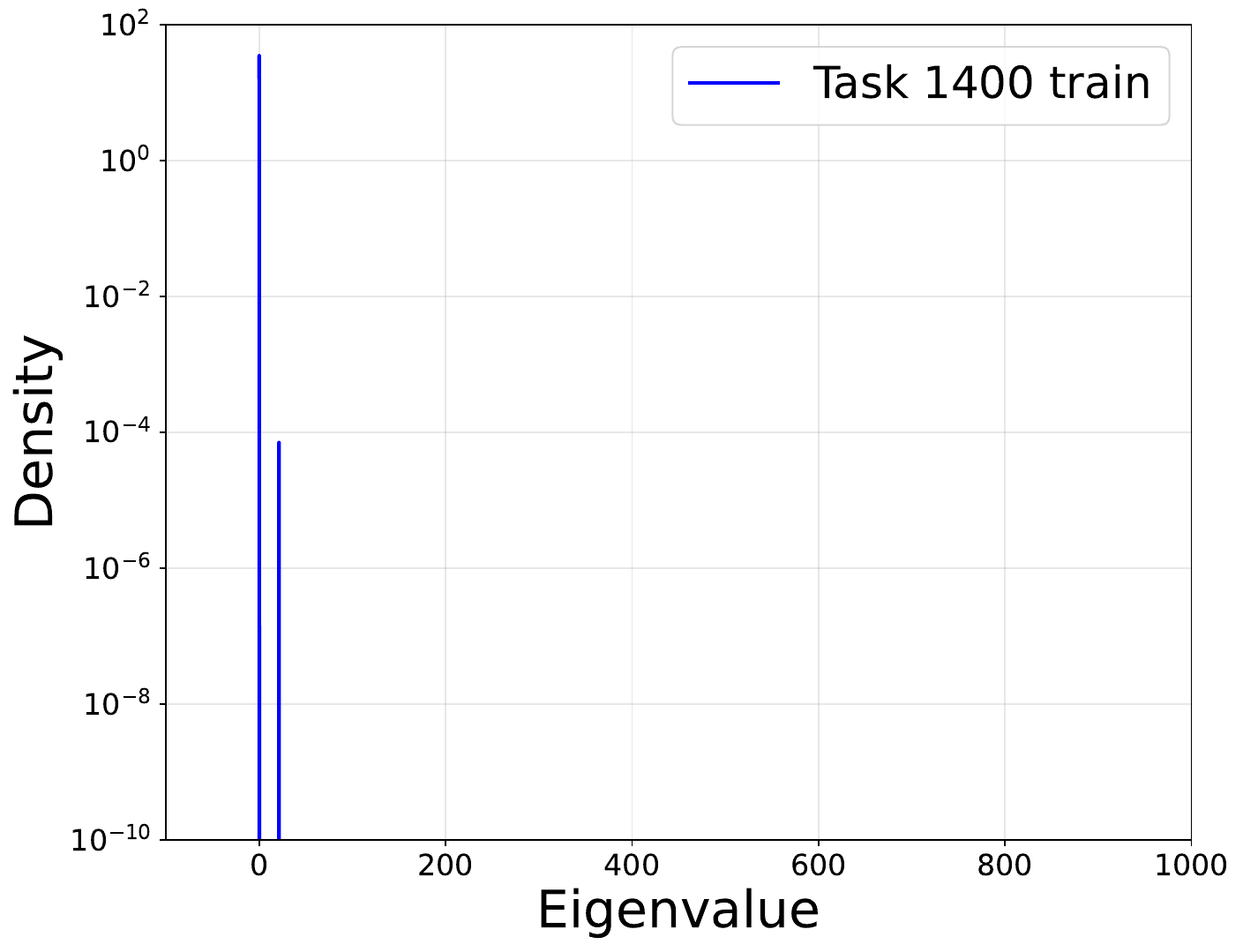}
        \label{fig:spectrum_right}
    \end{subfigure}

    \caption{(Top) Final test accuracy  vs.\ Curvature (i.e., normalized $\epsilon\textbf{-}\rk (\Hessian)$ at task initialization)\protect\footnotemark{} on Continual ImageNet. A linear fit (dotted line) highlights the positive association between curvature and accuracy. (Bottom Left) $\textcolor{red}{\blacksquare}$: BP Hessian eigenspectrum \emph{before spectral collapse} at task 100. (Bottom Right) $\textcolor{red}{\blacktriangle}$: BP Hessian eigenspectrum \emph{after spectral collapse} at task 1400.}
    \label{fig:scatter}
\end{figure}

\footnotetext{The $x$-axis is \mydef{normalized $\epsilon$-rank($\Hessian$)}, meaning $\epsilon \dash \rk (\Hessian)$ divided by $\numparams$, for $\epsilon = 0.03$.}


We therefore treat the $\Hessian[\GGN]$ as the go-between linking the NTK function-space analysis to a parameter-space curvature target. This choice is also computationally motivated: unlike the true Hessian, $\Hessian[\GGN]$ can be approximated efficiently using structured second-order methods (e.g., layerwise block and Kronecker-factor approximations), enabling us to define an explicit regularizer that increases the effective rank of the approximate curvature without ever materializing a the full Hessian, a $\numparams \times \numparams$ matrix. 

In this section, we operationalize this idea: we build a scalable approximation of $\Hessian[\GGN]$, identify which factors control its rank in MLPs, and introduce L2-ER as a simple objective that prevents curvature collapse. 
To do so, we first obtain the Gauss-Newton decomposition of the Hessian from \Cref{eqn:ggn}.



A neural network has parameters corresponding to each layer. 
Given a specific parameter, $[\weights[][l]]_{i,j}$ in layer $l$ and another parameter, $[\weights[][l']]_{i', j'}$, in $l' \neq l)$, in practice $\nicefrac{\partial^{2} \mathcal{L} }{\partial [\weights[][l]]_{i,j} \partial [\weights[][l^\prime]]_{i^\prime, j^\prime}}$ is approximately 0 \citep{becker1988improving}, where $\mathcal{L}$ is the loss. 
Therefore, Hessian admits a natural ``layer-wise block'' structure.

Kronecker-factored approximate curvature (KFAC) is a method to approximate a matrix using only block-diagonal $\{\bm{M}^\layer\}_{\layer\in\numlayers}$ elements.
In particular, KFAC is frequently used for approximating the numerical quantities related to the Hessian like $\Hessian[\GGN]$ or Fisher Information matrices \citep{dangel2025kroneckerfactoredapproximatecurvaturekfac, martens2016second}.
We therefore justify our effective-rank-based regularizer using a KFAC approximation of the Hessian. We then argue that increasing the rank of the KFAC-approximate $\Hessian[\GGN]$ implies increasing the rank of the true Hessian for mean-square-error (MSE) and softmax-cross-entropy losses (SCE). 


As an analogue of a multi-layer MLP, we use a multi-layer linear neural network without bias to justify our regularizer.
The layer $\layer$ in a neural network has an associated weight matrix $\weights[][\layer]$. For a dataset of size $\numsamples$, let $\layerin[\layer-1][\samples]$ be the input activations to the layer $\layer$ for the $\n$th datum, producing pre-activation feature $\preac[\layer][\samples] = \weights[][\layer] \layerin[\layer-1][\samples]$. 
Let $\layergrad[\layer][{\samples, \classes}] = \grad[{\logits[\samples][[\layer]]}] \lbrack \lossfunc(\logits[\samples], \datay[\samples])\rbrack_{o}$ be the gradient of the loss with respect to the layer's output $o$. We can now define $\Hessian[\GGN]$ as a sum of Kroneker products \arjun{check logits vs outputs, z or h?}:
\begin{equation*}
    \Hessian[\GGN] = \accum \sum_{\samples = 1}^{\numsamples} \sum_{\classes=1}^{\outputdim} (\layerin[\layer-1][\samples] \layerin[\layer-1][{\samples}][\top]) \otimes (\layergrad[\layer][{\samples, \classes}]\layergrad[{\layer}][{\samples, \classes}][\top]),
\end{equation*}
 where $\accum$ is an accumulation factor%
\footnote{Typical choices are $\nicefrac{1}{\samples}, 1$, etc. \citep{dangel2025kroneckerfactoredapproximatecurvaturekfac}.} and $\otimes$ denotes the Kronecker product operator.
The term $\layerin[\layer-1][\samples] \layerin[\layer-1][{\samples}][\top]$ captures the input feature statistics, while $\layergrad[\layer][{\samples, \classes}]\layergrad[{\layer}][{\samples, \classes}][\top]$ captures the backpropogated output gradient statistics. We can now apply the computationally tractable KFAC approximation by decoupling the summations over the inputs and gradients \citep{dangel2025kroneckerfactoredapproximatecurvaturekfac}: $\Hessian[\GGN] \approx \estHessian[\GGN]$, where $\estHessian[\GGN]$ is
\begin{align*}
\bigg( \accum \sum_{\samples = 1}^{\numsamples} \layerin[\layer-1][\samples] \layerin[\layer-1][{\samples}][\top] \bigg) \otimes \bigg(\frac{1}{\numsamples} \sum_{\samples = 1}^{\numsamples} \sum_{\classes=1}^{\outputdim} \layergrad[\layer][{\samples, \classes}]\layergrad[{\layer}][{\samples, \classes}][\top] \bigg).
\end{align*}
In our sample-based setting, the approximation error can be expressed as the difference between the average values of the Kronecker products and the Kronecker product of average values (see \Cref{appx:kfac}):
\begin{align*}
    \text{KFAC Error} = \left| \Hessian[\GGN] - \estHessian[\GGN] \right|.
\end{align*}
The approximation error is small when the joint distribution over $\layerin[\layer]$, $\layerin[\layer][][\top]$, $\layergrad[\layer]$, and $\layergrad[\layer][][\top]$ is a multivariate Gaussian, which appears to hold in practice \citep{martens2015optimizing}. Alternatively, the approximation can be viewed as an assumption of statistical independence between $\layerin[\layer-1][\samples] \layerin[\layer-1][\samples][\top]$ and $\layergrad[\layer][\samples] \layergrad[\layer][\samples][\top]$, which holds for deep linear networks with squared-error loss \citep{bernacchia2018exact}. \sarjun{}{Nonetheless, KFAC has been found to capture useful curvature information for convolutional neural networks \cite{grosse2016kronecker}, recurrent neural networks \cite{martens2018kronecker} and vision-transformers \cite{eschenhagen2023kronecker}.}
 

\textbf{Maximizing the Rank of the Input Covariance Matrix} 
For each layer $\layer$, the KFAC approximation $\estHessian[\GGN]$ has the block form $\Exp[\samples] [ \layerin[\layer-1][\samples] \layerin[\layer-1][{\samples}][\top]] \otimes \Exp[\samples] [ \layergrad[\layer][\samples] \layergrad[\layer][{\samples}\top]] $. Since $\rk(\bm{A} \otimes \bm{B}) = \rk{\bm{A}}\cdot\rk{\bm{B}}$, the rank of $\estHessian[\GGN]$ is monotone in the rank of both the input covariance matrix $\Exp[\samples] [\layerin[\layer-1][\samples] \layerin[\layer-1][{\samples}][\top]]$ and the gradient covariance matrix $\Exp[\samples] [ \layergrad[\layer][\samples] \layergrad[\layer][{\samples}\top]]$.
The rank of $\Exp[\samples] [ \layerin[\layer-1][\samples] \layerin[\layer-1][{\samples}][\top]]$ coincides with rank of the input representation of layer $\layer$ itself.
This latter quantity, known as the \mydef{effective feature rank (ER)}, has been shown to be a useful indicator of plasticity \citep{dohare_loss_2024, lyle2023understandingplasticityneuralnetworks}, and standard (i.e., single-task) RL \citep{kumar2020implicit}. 



\textbf{Maximizing the Rank of the Hessian} 
Adding $L2$ regularization is required to ensure the rank of the Hessian is sufficiently high, since $\Hessian[\GGN]$ does not take into account the contribution of the residual matrix $\residual$.  Thus, preserving high (effective) rank of the activation covariances (and avoiding degenerate gradient covariances) directly preserves the dimensionality of the non-collapsed curvature subspace captured by $\Hessian[\GGN]$, motivating an explicit effective-rank penalty on $\Exp[\samples] [ \layerin[\layer-1][\samples] \layerin[\layer-1][{\samples}][\top]]$.

\textbf{L2-ER Regularization} 
Given task $\tasknum$, with data distribution $\datadist[\tasknum]$ and loss function $\lossfunc[\tasknum]$,
standard gradient descent minimizes
$\Ex_{\datadist[\tasknum]}[ \lossfunc[\tasknum]( \linout[\params](\datax), \datay)]$.
We propose the $L2$-ER regularizer, which combines effective rank and $L2$ penalties, leading to a theoretically grounded and practical objective:
\begin{tcolorbox}[colback=gray!5,colframe=black!75,
                  boxsep=0pt,left=2pt,right=2pt,top=2pt,bottom=2pt]
\begin{align*}
    \min_{\params} \innerobj[][\tasknum](\params) &= \min_{\params} \;
    \underbrace{%
        \Ex_{\datadist[\tasknum]}\!\left[
            \lossfunc[\tasknum]( \linout[\params](\datax), \datay)
        \right]%
    }_{\text{Loss}} + \underbrace{%
        \lambda \|\params\|_2^2
    }_{L2 \text{ regularization}} \\
    &- \underbrace{%
        \beta \text{erank}\!\left(
            \sum_{\layer \in \numlayers}
            \Exp[\samples]\!\left[
                \layerin[\layer-1][\samples]\,
                \layerin[\layer-1][{\samples}][\top]
            \right]
        \right)%
    }_{\text{Effective rank penalty}} 
\end{align*}

\end{tcolorbox}

In summary, we add an empirically computable effective rank penalty to the objective that increases the $\epsilon$-rank of the approximate GGN Hessian, $\estHessian[\GGN]$, which in turn affects the bulk of the eigenspectrum of the true Hessian.

\section{Experiments}
\label{sec:experiments}
\begin{figure*}[h!]
  \centering
  \includegraphics[width=1.0\linewidth]{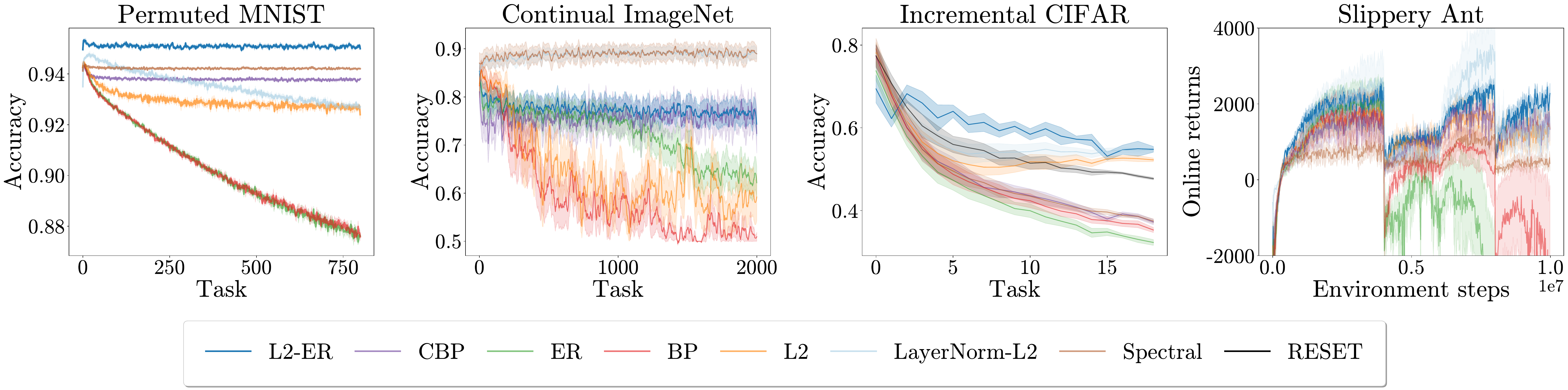}
  \caption{Performance across all four environments. Classification accuracy is reported for Permuted MNIST, Continual ImageNet, and Incremental CIFAR, while online returns are reported for Slippery Ant. Results are averaged across 5 seeds for all environments, except Continual ImageNet, where we used 10 seeds due to higher variance. Shaded regions denote a 95\% confidence interval.}
  \label{fig:performance}
\end{figure*}

We conduct experiments across a range of network architectures and continual learning tasks adapted from supervised and reinforcement learning environments.
\begin{enumerate}[leftmargin=1em]
    \item \textbf{Permuted MNIST} \citep{goodfellow2013empirical}: a variant of the MNIST dataset \citep{lecun1998mnist} where new tasks are created by permuting the pixels in all images in the same way.
    Each permutation represents a new task. We model the classifier with a \relu MLP.
    
    \item \textbf{Continual Imagenet} \citep{dohare_loss_2024}: an adaptation of Imagenet \citep{krizhevsky2012imagenet} where pairs of classes are randomly sampled for binary classification. 
    Distinguishing the two classes in each new class pair is a new task. We use a convolutional neural network. 
    
    \item \textbf{Incremental CIFAR} \citep{dohare_loss_2024}: an adaptation of the CIFAR-100 dataset \citep{krizhevsky2009learning} where classes are added incrementally. 
    Starting with 5 classes, each task adds 5 new classes until all 100 classes are included. We use a ResNet architecture \cite{he2015deep}.
    
    \item \textbf{Slippery Ant} \citep{dohare_loss_2024}: a continual reinforcement learning environment where the friction between the ant and the ground changes every $T$ timesteps, forcing the agent to continually adapt its policy.
\end{enumerate}

We compare L2-ER against six baselines: vanilla backpropagation (BP), L2 regularization (L2), effective rank regularization (ER), continual backpropagation \citep{dohare_loss_2024} (CBP), layer-normalization with L2 \citep{lyle2024disentangling} (LayerNorm-L2), and spectral regularization \citep{lewandowski2024learning} (Spectral). For Incremental CIFAR, where task difficulty increases with the number of classes, we include a RESET baseline that reinitializes parameters at each task change; algorithms outperforming RESET are considered to maintain plasticity. We perform extensive hyperparameter sweeps and report accuracy for supervised learning and online returns for RL. Implementation details appear in \Cref{appendix:permuted_mnist,appendix:imagenet,appendix:cifar,appendix:slippery_ant}.

\paragraph{Performance}
\Cref{fig:performance} depicts results across all four environments. L2-ER maintains plasticity in all, and provides an early performance boost on Permuted MNIST before plasticity loss would typically occur. L2 preserves plasticity in most cases but fails on Continual ImageNet; CBP succeeds except on Incremental CIFAR. LayerNorm-L2 performs well except on Permuted MNIST. Spectral Regularization succeeds except on Incremental CIFAR but requires roughly twice the training time (\Cref{fig:runtime}). ER and BP exhibit plasticity loss across all environments.
\section{Discussion}
Our goal in this work is to analyze continual learning with a simplified model of a neural network, to find and isolate the mechanisms that contribute to loss of plasticity. Our model of choice is the NTK, which approximates
neural network functions by linearizing around initialization.  We show that gradient descent applied to the linearized model under the squared error loss \sarjun{}{progressively reduces the components of the error residual along directions associated with large eigenvalues of the Gram matrix $\Gram=\Jacobian \Jacobian[][\top]$. As a result, components aligned with small eigenvalue directions decay slowly over a finite training horizon. This leads to a simple constraint on learnability: if the new-task residual has substantial mass in these slow directions, then it cannot be reduced sufficiently within the allotted $\numiters$ gradient steps, unless it is already small or happens to align with the fast directions. In this sense, successful learning requires that most of the residuals lie in directions where the model can make rapid progress. 
%


}

However, in the NTK regime under squared loss, the kernel matrix, $\Gram=\Jacobian \Jacobian[][\top]$ remains invariant over training.  As a result, we shift our attention to classification, where the weighted Gram matrix $\Gram[\LW] = \Curvature[][ 1/2] \Gram[][][0] \Curvature[][1/2]$ governs training dynamics.  This distinction is crucial on Permuted MNIST. We prove that, in the infinite-width frozen-gate model, the raw kernel $\Gram[][][0]$ remains invariant to pixel permutations. Thus, the permutations themselves alone are not enough to explain spectral collapse. Empirically, however, the loss-weighted Gram matrix, $\Gram[\LW]$ does undergo spectral collapse, and its $\epsilon$-rank tracks the loss of plasticity due to the loss-weighted geometry that governs classification dynamics.


Having built an understanding of the mechanisms behind plasticity loss using the NTK, we apply our findings to conventional deep non-linear networks.
In particular, we show that the loss-weighted Gram matrix $\Gram[\LW]$,  is spectrally equivalent to the generalized Gauss Newton matrix, $\Hessian[\GGN]$. Consequently, we propose the $L2$-ER regularizer, which aims to force a wide-bulk to the Hessian eigenspectrum to maintain plasticity. We measure the Hessian spectrum at new-task initialization and observe that eigenspectrum collapse precedes failures to learn. When we apply our intervention, the spectrum of the Hessian retains a wide bulk and maintains plasticity. When we remove the intervention, the bulk collapses and so does plasticity. Moreover, we find that other plasticity preserving methods, like CBP also maintain a wide bulk. 

$L2$-ER is much cheaper than explicitly estimating the full Hessian spectrum, and the eigenspectrum diagnostics are not used inside the regularizer. However, the effective-rank penalty does introduce a nontrivial SVD cost, which we amortize over $U$ steps. For each layer $\layer \in L_{ER}$ in the network to be regularized, we collect the activations $\layerin[\layer] \in \R^{d^l}$, and stack them row-wise over a batch of data $B$ as
    $\mathbf{A}^{[l]}_{\mathrm{ER}} = 
    \begin{bmatrix}
    \layerin[\layer][1] 
    \cdots,
    \layerin[\layer][B]
    \end{bmatrix}^\top
    \in \mathbb{R}^{B \times d^l}$.
The amortized SVD cost of $\mathbf A^{[l]}_{\mathrm{ER}}$ is $\sum_{l\in L_{\mathrm{ER}}}
    O\!\left( \frac{1}{U}
    \min\{B (d^l)^2,\; B^2 d^l\}
    \right)$
with memory $O(B d^l)$.

In our experiments, small feature buffers were sufficient, and the ER learning rate was
the most sensitive ER hyperparameter; the ER batch/window mainly stabilized the singular-value estimate. We recommend tuning the task optimizer
first, then the L2 coefficient, and finally the ER learning rate,
typically below the task learning rate. The ER buffer should be
the smallest one that gives a stable singular-value estimate,
and ER updates should be applied as infrequently as possible
while retaining their benefit. For larger models, selected-layer
regularization, bottleneck-layer regularization, randomized or
truncated SVD, and sketching methods may be needed. We do not
yet have language-model results; extending $L2$-ER to such
models will likely require applying the penalty only to selected
representations, adapters, or other low-dimensional modules,
and attention layers may require a more specialized curvature
approximation.

We believe our work opens several directions for future work. \sarjun{}{Having identified when a new task becomes unlearnable, our causal claim that spectral collapse drives loss of plasticity is empirical.} Future work could use tools from singular learning theory \cite{Watanabe_2009} or stochastic gradient dynamics \cite{wu2018sgd, wu2023implicit, xu2026doessgdseekflatness} to understand the evolution of the Hessian eigenspectrum in continual learning. 
%
\section{Conclusion}


We identify Hessian spectral collapse as the central mechanism for plasticity loss in continual learning. Analyzing a linearized ReLU network, we derive explicit $\epsilon$-rank conditions for successful training and establish that the loss-weighted Gram matrix is spectrally equivalent to the Generalized Gauss-Newton approximation, bridging NTK dynamics to parameter-space curvature. Using the KFAC approximation of the Hessian, we propose $L2$-ER, a simple regularizer that empirically prevents spectral collapse and preserves plasticity across diverse benchmarks. These results position spectral collapse as both a unifying explanation and a practical diagnostic for plasticity loss. We leave adapting our theoretical framework to the unique dynamics of RL, where the Hessian is influenced by non-stationary policy and value functions, for future work. The implications of this work extend to \citeauthor{silver2025welcome}'s (\citeyear{silver2025welcome}) long-term vision for adaptive AI systems. Consider a large language model or robot acting as an assistant on a multi-year long project. 
%
%
By providing a stable foundation for continual learning, our findings represent a crucial step towards building more capable and continuously adapting AI systems of the future.



\section*{Acknowledgments}
Arjun Prakash and Ruo Yu Tao were supported by the Office of Naval Research (ONR) grant N00014-22-1-2592 and N00014-24-1-2657. 
Saket Tiwari was supported by ONR REPRISM MURI N00014-24-1-2603, ONR grant 00014-22-1-2592 and partial funding was also provided by the Robotics and AI Institute.

\section*{Impact Statement}
This paper presents work whose goal is to advance the field
of Machine Learning. There are many potential societal
consequences of our work, none which we feel must be
specifically highlighted here.
\balance
\clearpage

\bibliography{references}

@article{zhao2024theoretical,
  title={Theoretical characterisation of the Gauss-Newton conditioning in Neural Networks},
  author={Zhao, Jim and Singh, Sidak Pal and Lucchi, Aurelien},
  journal={arXiv preprint arXiv:2411.02139},
  year={2024}
}

@article{baker2023domain,
title = {A domain-agnostic approach for characterization of lifelong learning systems},
journal = {Neural Networks},
volume = {160},
pages = {274-296},
year = {2023},
issn = {0893-6080},
doi = {https://doi.org/10.1016/j.neunet.2023.01.007},
author = {Megan M. Baker and Alexander New and Mario Aguilar-Simon and Ziad Al-Halah and Sébastien M.R. Arnold and Ese Ben-Iwhiwhu and Andrew P. Brna and Ethan Brooks and Ryan C. Brown and Zachary Daniels and Anurag Daram and Fabien Delattre and Ryan Dellana and Eric Eaton and Haotian Fu and Kristen Grauman and Jesse Hostetler and Shariq Iqbal and Cassandra Kent and Nicholas Ketz and Soheil Kolouri and George Konidaris and Dhireesha Kudithipudi and Erik Learned-Miller and Seungwon Lee and Michael L. Littman and Sandeep Madireddy and Jorge A. Mendez and Eric Q. Nguyen and Christine Piatko and Praveen K. Pilly and Aswin Raghavan and Abrar Rahman and Santhosh Kumar Ramakrishnan and Neale Ratzlaff and Andrea Soltoggio and Peter Stone and Indranil Sur and Zhipeng Tang and Saket Tiwari and Kyle Vedder and Felix Wang and Zifan Xu and Angel Yanguas-Gil and Harel Yedidsion and Shangqun Yu and Gautam K. Vallabha}
}

@misc{klein2024plasticitylossdeepreinforcement,
      title={Plasticity Loss in Deep Reinforcement Learning: A Survey}, 
      author={Timo Klein and Lukas Miklautz and Kevin Sidak and Claudia Plant and Sebastian Tschiatschek},
      year={2024},
      eprint={2411.04832},
      archivePrefix={arXiv},
      primaryClass={cs.AI}
}

@misc{lewandowski2024directionscurvatureexplanationloss,
      title={Directions of Curvature as an Explanation for Loss of Plasticity}, 
      author={Alex Lewandowski and Haruto Tanaka and Dale Schuurmans and Marlos C. Machado},
      year={2024},
      eprint={2312.00246},
      archivePrefix={arXiv},
      primaryClass={cs.LG}
}

@article{lyle2024disentangling,
  title={Disentangling the causes of plasticity loss in neural networks},
  author={Lyle, Clare and Zheng, Zeyu and Khetarpal, Khimya and van Hasselt, Hado and Pascanu, Razvan and Martens, James and Dabney, Will},
  journal={arXiv preprint arXiv:2402.18762},
  year={2024}
}

@inproceedings{sokar2023dormant,
  title={The dormant neuron phenomenon in deep reinforcement learning},
  author={Sokar, Ghada and Agarwal, Rishabh and Castro, Pablo Samuel and Evci, Utku},
  booktitle={International Conference on Machine Learning},
  pages={32145--32168},
  year={2023},
  organization={PMLR}
}

@article{montufar2014number,
  title={On the number of linear regions of deep neural networks},
  author={Montufar, Guido F and Pascanu, Razvan and Cho, Kyunghyun and Bengio, Yoshua},
  journal={Advances in neural information processing systems},
  volume={27},
  year={2014}
}

@inproceedings{roy2007effective,
  title={The effective rank: A measure of effective dimensionality},
  author={Roy, Olivier and Vetterli, Martin},
  booktitle={2007 15th European signal processing conference},
  pages={606--610},
  year={2007},
  organization={IEEE}
}

@article{dohare_loss_2024,
    title = {Loss of plasticity in deep continual learning},
    volume = {632},
    copyright = {2024 The Author(s)},
    issn = {1476-4687},
    doi = {10.1038/s41586-024-07711-7},
    abstract = {Artificial neural networks, deep-learning methods and the backpropagation algorithm1 form the foundation of modern machine learning and artificial intelligence. These methods are almost always used in two phases, one in which the weights of the network are updated and one in which the weights are held constant while the network is used or evaluated. This contrasts with natural learning and many applications, which require continual learning. It has been unclear whether or not deep learning methods work in continual learning settings. Here we show that they do not—that standard deep-learning methods gradually lose plasticity in continual-learning settings until they learn no better than a shallow network. We show such loss of plasticity using the classic ImageNet dataset and reinforcement-learning problems across a wide range of variations in the network and the learning algorithm. Plasticity is maintained indefinitely only by algorithms that continually inject diversity into the network, such as our continual backpropagation algorithm, a variation of backpropagation in which a small fraction of less-used units are continually and randomly reinitialized. Our results indicate that methods based on gradient descent are not enough—that sustained deep learning requires a random, non-gradient component to maintain variability and plasticity.},
    language = {en},
    number = {8026},
    urldate = {2025-01-04},
    journal = {Nature},
    author = {Dohare, Shibhansh and Hernandez-Garcia, J. Fernando and Lan, Qingfeng and Rahman, Parash and Mahmood, A. Rupam and Sutton, Richard S.},
    month = aug,
    year = {2024},
    note = {Publisher: Nature Publishing Group},
    keywords = {Computer science, Human behaviour},
    pages = {768--774},
}

@article{lewandowski2024learning,
  title={Learning continually by spectral regularization},
  author={Lewandowski, Alex and Bortkiewicz, Micha{\'L} and Kumar, Saurabh and Schuurmans, Dale and Ostaszewski, Mateusz and Machado, Marlos C and others},
  journal={arXiv preprint arXiv:2406.06811},
  year={2024}
}

@inproceedings{he2015delving,
  title={Delving deep into rectifiers: Surpassing human-level performance on imagenet classification},
  author={He, Kaiming and Zhang, Xiangyu and Ren, Shaoqing and Sun, Jian},
  booktitle={Proceedings of the IEEE international conference on computer vision},
  pages={1026--1034},
  year={2015}
}

@article{hinton2006reducing,
  title={Reducing the dimensionality of data with neural networks},
  author={Hinton, Geoffrey E and Salakhutdinov, Ruslan R},
  journal={science},
  volume={313},
  number={5786},
  pages={504--507},
  year={2006},
  publisher={American Association for the Advancement of Science}
}

@article{ash2020warm,
  title={On warm-starting neural network training},
  author={Ash, Jordan and Adams, Ryan P},
  journal={Advances in neural information processing systems},
  volume={33},
  pages={3884--3894},
  year={2020}
}

@inproceedings{godfrey2019evaluation,
  title={An evaluation of parametric activation functions for deep learning},
  author={Godfrey, Luke B},
  booktitle={2019 IEEE international conference on systems, man and cybernetics (SMC)},
  pages={3006--3011},
  year={2019},
  organization={IEEE}
}

@inproceedings{abbas2023loss,
  title={Loss of plasticity in continual deep reinforcement learning},
  author={Abbas, Zaheer and Zhao, Rosie and Modayil, Joseph and White, Adam and Machado, Marlos C},
  booktitle={Conference on lifelong learning agents},
  pages={620--636},
  year={2023},
  organization={PMLR}
}

@inproceedings{shang2016understanding,
  title={Understanding and improving convolutional neural networks via concatenated rectified linear units},
  author={Shang, Wenling and Sohn, Kihyuk and Almeida, Diogo and Lee, Honglak},
  booktitle={international conference on machine learning},
  pages={2217--2225},
  year={2016},
  organization={PMLR}
}

@inproceedings{delfosse2021adaptive,
  author = {Quentin Delfosse and Patrick Schramowski and Martin Mundt and Alejandro Molina and Kristian Kersting},
  title = {Adaptive Rational Activations to Boost Deep Reinforcement Learning},
  booktitle = {The Twelfth International Conference on Learning Representations, {ICLR} 2024, Vienna, Austria, May 7-11, 2024},
  year = {2024},
  publisher = {OpenReview.net},
  url = {https://openreview.net/forum?id=g90ysX1sVs},
  timestamp = {Mon, 29 Jul 2024 17:17:49 +0200},
  biburl = {https://dblp.org/rec/conf/iclr/DelfosseSM0K24.bib},
  bibsource = {dblp computer science bibliography, https://dblp.org}
}

@article{juliani2024study,
  title={A study of plasticity loss in on-policy deep reinforcement learning},
  author={Juliani, Arthur and Ash, Jordan},
  journal={Advances in Neural Information Processing Systems},
  volume={37},
  pages={113884--113910},
  year={2024}
}

@article{schimdhuber,
    author = {Hochreiter, Sepp and Schmidhuber, Jürgen},
    title = {Flat Minima},
    journal = {Neural Computation},
    volume = {9},
    number = {1},
    pages = {1-42},
    year = {1997},
    month = {01},
    issn = {0899-7667},
    doi = {10.1162/neco.1997.9.1.1},
    eprint = {https://direct.mit.edu/neco/article-pdf/9/1/1/813385/neco.1997.9.1.1.pdf},
}

@inproceedings{javed2024big,
  title={The big world hypothesis and its ramifications for artificial intelligence},
  author={Javed, Khurram and Sutton, Richard S},
  booktitle={Finding the Frame: An RLC Workshop for Examining Conceptual Frameworks},
  year={2024}
}

@article{lecun1998mnist,
  title={The MNIST database of handwritten digits},
  author={LeCun, Yann},
  year={1998}
}

@article{goodfellow2013empirical,
  title={An empirical investigation of catastrophic forgetting in gradient-based neural networks},
  author={Goodfellow, Ian J and Mirza, Mehdi and Xiao, Da and Courville, Aaron and Bengio, Yoshua},
  journal={arXiv preprint arXiv:1312.6211},
  year={2013}
}

@article{krizhevsky2012imagenet,
  title={Imagenet classification with deep convolutional neural networks},
  author={Krizhevsky, Alex and Sutskever, Ilya and Hinton, Geoffrey E},
  journal={Advances in neural information processing systems},
  volume={25},
  year={2012}
}

@inproceedings{ghorbani2019investigation,
  title={An investigation into neural net optimization via hessian eigenvalue density},
  author={Ghorbani, Behrooz and Krishnan, Shankar and Xiao, Ying},
  booktitle={International Conference on Machine Learning},
  pages={2232--2241},
  year={2019},
  organization={PMLR}
}

@article{pearlmutter1994fast,
  title={Fast exact multiplication by the Hessian},
  author={Pearlmutter, Barak A},
  journal={Neural computation},
  volume={6},
  number={1},
  pages={147--160},
  year={1994},
  publisher={MIT Press}
}

@inproceedings{becker1988improving,
  title={Improving the convergence of back-propagation learning with second order methods},
  author={Becker, Sue and Le Cun, Yann and others},
  booktitle={Proceedings of the 1988 connectionist models summer school},
  volume={2},
  year={1988}
}

@article{golub1969calculation,
  title={Calculation of Gauss quadrature rules},
  author={Golub, Gene H and Welsch, John H},
  journal={Mathematics of computation},
  volume={23},
  number={106},
  pages={221--230},
  year={1969}
}

@article{catastrophic_forgetting,
  author       = {German Ignacio Parisi and
                  Ronald Kemker and
                  Jose L. Part and
                  Christopher Kanan and
                  Stefan Wermter},
  title        = {Continual Lifelong Learning with Neural Networks: {A} Review},
  journal      = {CoRR},
  volume       = {abs/1802.07569},
  year         = {2018},
  eprinttype    = {arXiv},
  eprint       = {1802.07569},
  timestamp    = {Mon, 13 Aug 2018 16:46:54 +0200},
  biburl       = {https://dblp.org/rec/journals/corr/abs-1802-07569.bib},
  bibsource    = {dblp computer science bibliography, https://dblp.org}
}

@article{kirkpatrick_overcoming_2017,
  title={Overcoming catastrophic forgetting in neural networks},
  author={Kirkpatrick, James and Pascanu, Razvan and Rabinowitz, Neil and Veness, Joel and Desjardins, Guillaume and Rusu, Andrei A and Milan, Kieran and Quan, John and Ramalho, Tiago and Grabska-Barwinska, Agnieszka and others},
  journal={Proceedings of the National Academy of Sciences},
  volume={114},
  number={13},
  pages={3521--3526},
  year={2017},
  publisher={National Acad Sciences}
}

@article{krizhevsky2009learning,
  title={Learning multiple layers of features from tiny images},
  author={Krizhevsky, Alex and Hinton, Geoffrey and others},
  year={2009},
  publisher={Toronto, ON, Canada}
}

@inproceedings{lyle2023understandingplasticityneuralnetworks,
  author = {Clare Lyle and Zeyu Zheng and Evgenii Nikishin and Bernardo {\'{A}}vila Pires and Razvan Pascanu and Will Dabney},
  title = {Understanding Plasticity in Neural Networks},
  booktitle = {International Conference on Machine Learning, {ICML} 2023, 23-29 July 2023, Honolulu, Hawaii, {USA}},
  year = {2023},
  pages = {23190--23211},
  publisher = {{PMLR}},
  editor = {Andreas Krause and Emma Brunskill and Kyunghyun Cho and Barbara Engelhardt and Sivan Sabato and Jonathan Scarlett},
  series = {Proceedings of Machine Learning Research},
  url = {https://proceedings.mlr.press/v202/lyle23b.html},
  timestamp = {Mon, 28 Aug 2023 17:23:08 +0200},
  biburl = {https://dblp.org/rec/conf/icml/LyleZNPPD23.bib},
  bibsource = {dblp computer science bibliography, https://dblp.org}
}

@misc{dangel2025kroneckerfactoredapproximatecurvaturekfac,
      title={Kronecker-factored Approximate Curvature (KFAC) From Scratch}, 
      author={Felix Dangel and Bálint Mucsányi and Tobias Weber and Runa Eschenhagen},
      year={2025},
      eprint={2507.05127},
      archivePrefix={arXiv},
      primaryClass={cs.LG}
}

@misc{jax2018github,
  author = {James Bradbury and Roy Frostig and Peter Hawkins and Matthew James Johnson and Chris Leary and Dougal Maclaurin and George Necula and Adam Paszke and Jake Vander{P}las and Skye Wanderman-{M}ilne and Qiao Zhang},
  title = {{JAX}: composable transformations of {P}ython+{N}um{P}y programs},
  version = {0.3.13},
  year = {2018},
}

@inproceedings{obando2024value,
  author = {Johan S. Obando{-}Ceron and Aaron C. Courville and Pablo Samuel Castro},
  title = {In value-based deep reinforcement learning, a pruned network is a good network},
  booktitle = {Forty-first International Conference on Machine Learning, {ICML} 2024, Vienna, Austria, July 21-27, 2024},
  year = {2024},
  pages = {38495--38519},
  publisher = {{PMLR} / OpenReview.net},
  editor = {Ruslan Salakhutdinov and Zico Kolter and Katherine A. Heller and Adrian Weller and Nuria Oliver and Jonathan Scarlett and Felix Berkenkamp},
  series = {Proceedings of Machine Learning Research},
  url = {https://proceedings.mlr.press/v235/obando-ceron24a.html},
  timestamp = {Mon, 09 Feb 2026 17:23:53 +0100},
  biburl = {https://dblp.org/rec/conf/icml/Obando-CeronCC24.bib},
  bibsource = {dblp computer science bibliography, https://dblp.org}
}

@misc{pouplin_curvature_2023,
    title = {On the curvature of the loss landscape},
    doi = {10.48550/arXiv.2307.04719},
    abstract = {One of the main challenges in modern deep learning is to understand why such over-parameterized models perform so well when trained on finite data. A way to analyze this generalization concept is through the properties of the associated loss landscape. In this work, we consider the loss landscape as an embedded Riemannian manifold and show that the differential geometric properties of the manifold can be used when analyzing the generalization abilities of a deep net. In particular, we focus on the scalar curvature, which can be computed analytically for our manifold, and show connections to several settings that potentially imply generalization.},
    urldate = {2025-06-26},
    publisher = {arXiv},
    author = {Pouplin, Alison and Roy, Hrittik and Singh, Sidak Pal and Arvanitidis, Georgios},
    month = jul,
    year = {2023},
    note = {arXiv:2307.04719 [cs]},
    keywords = {Computer Science - Machine Learning},
}

@inproceedings{martens2015optimizing,
  title={Optimizing neural networks with kronecker-factored approximate curvature},
  author={Martens, James and Grosse, Roger},
  booktitle={International conference on machine learning},
  pages={2408--2417},
  year={2015},
  organization={PMLR}
}

@article{lu2022discovered,
    title={Discovered policy optimisation},
    author={Lu, Chris and Kuba, Jakub and Letcher, Alistair and Metz, Luke and Schroeder de Witt, Christian and Foerster, Jakob},
    journal={Advances in Neural Information Processing Systems},
    volume={35},
    pages={16455--16468},
    year={2022}
}

@article{bernacchia2018exact,
  title={Exact natural gradient in deep linear networks and its application to the nonlinear case},
  author={Bernacchia, Alberto and Lengyel, M{\'a}t{\'e} and Hennequin, Guillaume},
  journal={Advances in Neural Information Processing Systems},
  volume={31},
  year={2018}
}

@article{simeoni2025dinov3,
  author = {Oriane Sim{\'e}oni and Huy V. Vo and Maximilian Seitzer and Federico Baldassarre and Maxime Oquab and Cijo Jose and Vasil Khalidov and Marc Szafraniec and Seung Eun Yi and Michael Ramamonjisoa and Francisco Massa and Daniel HAZIZA and Luca Wehrstedt and Jianyuan Wang and Timoth{\'e}e Darcet and Th{\'e}o Moutakanni and Leonel Sentana and Claire Roberts and Andrea Vedaldi and Jamie Tolan and John Brandt and Camille Couprie and Julien Mairal and Herve Jegou and Patrick Labatut and Piotr Bojanowski},
  title = {{DINO}v3},
  journal = {Transactions on Machine Learning Research},
  year = {2026},
  url = {https://openreview.net/forum?id=2NlGyqNjns},
  note = {Featured Certification},
  issn = {2835-8856}
}

@article{silver2017mastering,
  title={Mastering the game of go without human knowledge},
  author={Silver, David and Schrittwieser, Julian and Simonyan, Karen and Antonoglou, Ioannis and Huang, Aja and Guez, Arthur and Hubert, Thomas and Baker, Lucas and Lai, Matthew and Bolton, Adrian and others},
  journal={nature},
  volume={550},
  number={7676},
  pages={354--359},
  year={2017},
  publisher={Nature Publishing Group UK London}
}

@article{meta2022human,
  title={Human-level play in the game of Diplomacy by combining language models with strategic reasoning},
  author={Bakhtin, Anton and Brown, Noam and Dinan, Emily and Farina, Gabriele and Flaherty, Colin and Fried, Daniel and Goff, Andrew and Gray, Jonathan and Hu, Hengyuan and others},
  journal={Science},
  volume={378},
  number={6624},
  pages={1067--1074},
  year={2022},
  publisher={American Association for the Advancement of Science}
}

@misc{schulman2017proximalpolicyoptimizationalgorithms,
      title={Proximal Policy Optimization Algorithms}, 
      author={John Schulman and Filip Wolski and Prafulla Dhariwal and Alec Radford and Oleg Klimov},
      year={2017},
      eprint={1707.06347},
      archivePrefix={arXiv},
      primaryClass={cs.LG}
}

@inproceedings{bjorck2021high,
  author = {Johan Bjorck and Carla P. Gomes and Kilian Q. Weinberger},
  title = {Is High Variance Unavoidable in RL? {A} Case Study in Continuous Control},
  booktitle = {The Tenth International Conference on Learning Representations, {ICLR} 2022, Virtual Event, April 25-29, 2022},
  year = {2022},
  publisher = {OpenReview.net},
  url = {https://openreview.net/forum?id=9xhgmsNVHu},
  timestamp = {Sat, 20 Aug 2022 01:15:42 +0200},
  biburl = {https://dblp.org/rec/conf/iclr/BjorckGW22.bib},
  bibsource = {dblp computer science bibliography, https://dblp.org}
}

@inproceedings{igl2020transient,
  author = {Maximilian Igl and Gregory Farquhar and Jelena Luketina and Wendelin Boehmer and Shimon Whiteson},
  title = {Transient Non-stationarity and Generalisation in Deep Reinforcement Learning},
  booktitle = {9th International Conference on Learning Representations, {ICLR} 2021, Virtual Event, Austria, May 3-7, 2021},
  year = {2021},
  publisher = {OpenReview.net},
  url = {https://openreview.net/forum?id=Qun8fv4qSby},
  timestamp = {Wed, 23 Jun 2021 17:36:40 +0200},
  biburl = {https://dblp.org/rec/conf/iclr/IglFLBW21.bib},
  bibsource = {dblp computer science bibliography, https://dblp.org}
}

@article{yoshida2017spectral,
  title={Spectral norm regularization for improving the generalizability of deep learning},
  author={Yoshida, Yuichi and Miyato, Takeru},
  journal={arXiv preprint arXiv:1705.10941},
  year={2017}
}

@article{silver2025welcome,
  title={Welcome to the era of experience},
  author={Silver, David and Sutton, Richard S},
  journal={Google AI},
  volume={1},
  year={2025}
}

@article{hu2022lora,
  title={Lora: Low-rank adaptation of large language models.},
  author={Hu, Edward J and Shen, Yelong and Wallis, Phillip and Allen-Zhu, Zeyuan and Li, Yuanzhi and Wang, Shean and Wang, Lu and Chen, Weizhu and others},
  journal={ICLR},
  volume={1},
  number={2},
  pages={3},
  year={2022}
}

@article{Savitsky1964,
author = {Savitzky, Abraham. and Golay, M. J. E.},
title = {Smoothing and Differentiation of Data by Simplified Least Squares Procedures.},
journal = {Analytical Chemistry},
volume = {36},
number = {8},
pages = {1627-1639},
year = {1964},
doi = {10.1021/ac60214a047},
eprint = {https://doi.org/10.1021/ac60214a047}
}

@inproceedings{NEURIPS2021_bcb41ccd,
 author = {Cha, Junbum and Chun, Sanghyuk and Lee, Kyungjae and Cho, Han-Cheol and Park, Seunghyun and Lee, Yunsung and Park, Sungrae},
 booktitle = {Advances in Neural Information Processing Systems},
 editor = {M. Ranzato and A. Beygelzimer and Y. Dauphin and P.S. Liang and J. Wortman Vaughan},
 pages = {22405--22418},
 publisher = {Curran Associates, Inc.},
 title = {SWAD: Domain Generalization by Seeking Flat Minima},
 volume = {34},
 year = {2021}
}

@article{chung2024parseval,
  title={Parseval regularization for continual reinforcement learning},
  author={Chung, Wesley and Cherif, Lynn and Meger, David and Precup, Doina},
  journal={Advances in Neural Information Processing Systems},
  volume={37},
  pages={127937--127967},
  year={2024}
}

@book{martens2016second,
  title={Second-order optimization for neural networks},
  author={Martens, James},
  year={2016},
  publisher={University of Toronto (Canada)}
}

@inproceedings{kumar2020implicit,
  author = {Aviral Kumar and Rishabh Agarwal and Dibya Ghosh and Sergey Levine},
  title = {Implicit Under-Parameterization Inhibits Data-Efficient Deep Reinforcement Learning},
  booktitle = {9th International Conference on Learning Representations, {ICLR} 2021, Virtual Event, Austria, May 3-7, 2021},
  year = {2021},
  publisher = {OpenReview.net},
  url = {https://openreview.net/forum?id=O9bnihsFfXU},
  timestamp = {Wed, 23 Jun 2021 17:36:40 +0200},
  biburl = {https://dblp.org/rec/conf/iclr/KumarAGL21.bib},
  bibsource = {dblp computer science bibliography, https://dblp.org}
}

@inproceedings{grosse2016kronecker,
  title={A kronecker-factored approximate fisher matrix for convolution layers},
  author={Grosse, Roger and Martens, James},
  booktitle={International Conference on Machine Learning},
  pages={573--582},
  year={2016},
  organization={PMLR}
}

@inproceedings{martens2018kronecker,
  title={Kronecker-factored curvature approximations for recurrent neural networks},
  author={Martens, James and Ba, Jimmy and Johnson, Matt},
  booktitle={International Conference on Learning Representations},
  year={2018}
}

@article{eschenhagen2023kronecker,
  title={Kronecker-factored approximate curvature for modern neural network architectures},
  author={Eschenhagen, Runa and Immer, Alexander and Turner, Richard and Schneider, Frank and Hennig, Philipp},
  journal={Advances in Neural Information Processing Systems},
  volume={36},
  pages={33624--33655},
  year={2023}
}

@article{jacot2018neural,
  title={Neural tangent kernel: Convergence and generalization in neural networks},
  author={Jacot, Arthur and Gabriel, Franck and Hongler, Cl{\'e}ment},
  journal={Advances in neural information processing systems},
  volume={31},
  year={2018}
}

@article{dwaraknath2023fixing,
  title={Fixing the NTK: From Neural Network Linearizations to Exact Convex Programs},
  author={Dwaraknath, Rajat Vadiraj and Ergen, Tolga and Pilanci, Mert},
  journal={Advances in Neural Information Processing Systems},
  volume={36},
  pages={1539--1559},
  year={2023}
}

@article{arora2019exact,
  title={On exact computation with an infinitely wide neural net},
  author={Arora, Sanjeev and Du, Simon S and Hu, Wei and Li, Zhiyuan and Salakhutdinov, Russ R and Wang, Ruosong},
  journal={Advances in neural information processing systems},
  volume={32},
  year={2019}
}

@article{lee2019wide,
  title={Wide neural networks of any depth evolve as linear models under gradient descent},
  author={Lee, Jaehoon and Xiao, Lechao and Schoenholz, Samuel and Bahri, Yasaman and Novak, Roman and Sohl-Dickstein, Jascha and Pennington, Jeffrey},
  journal={Advances in neural information processing systems},
  volume={32},
  year={2019}
}

@InProceedings{pmlr-v267-tang25g,
  title = 	 {Mitigating Plasticity Loss in Continual Reinforcement Learning by Reducing Churn},
  author =       {Tang, Hongyao and Obando-Ceron, Johan and Castro, Pablo Samuel and Courville, Aaron and Berseth, Glen},
  booktitle = 	 {Proceedings of the 42nd International Conference on Machine Learning},
  pages = 	 {58883--58904},
  year = 	 {2025},
  editor = 	 {Singh, Aarti and Fazel, Maryam and Hsu, Daniel and Lacoste-Julien, Simon and Berkenkamp, Felix and Maharaj, Tegan and Wagstaff, Kiri and Zhu, Jerry},
  volume = 	 {267},
  series = 	 {Proceedings of Machine Learning Research},
  month = 	 {13--19 Jul},
  publisher =    {PMLR},
  pdf = 	 {https://raw.githubusercontent.com/mlresearch/v267/main/assets/tang25g/tang25g.pdf}

}

@inproceedings{doan2021theoretical,
  title={A theoretical analysis of catastrophic forgetting through the ntk overlap matrix},
  author={Doan, Thang and Bennani, Mehdi Abbana and Mazoure, Bogdan and Rabusseau, Guillaume and Alquier, Pierre},
  booktitle={International Conference on Artificial Intelligence and Statistics},
  pages={1072--1080},
  year={2021},
  organization={PMLR}
}

@misc{he2015deep,
  title={Deep residual learning for image recognition. CoRR abs/1512.03385 (2015)},
  author={He, Kaiming and Zhang, Xiangyu and Ren, Shaoqing and Sun, Jian},
  year={2015}
}

@article{karakida2021pathological,
  author = {Ryo Karakida and Shotaro Akaho and Shun{-}ichi Amari},
  title = {Pathological Spectra of the Fisher Information Metric and Its Variants in Deep Neural Networks},
  journal = {Neural Comput.},
  year = {2021},
  volume = {33},
  number = {8},
  pages = {2274--2307},
  doi = {10.1162/NECO\_A\_01411},
  url = {https://doi.org/10.1162/neco\_a\_01411},
  timestamp = {Tue, 07 May 2024 20:26:35 +0200},
  biburl = {https://dblp.org/rec/journals/neco/KarakidaAA21.bib},
  bibsource = {dblp computer science bibliography, https://dblp.org}
}

@book{Watanabe_2009, place={Cambridge}, series={Cambridge Monographs on Applied and Computational Mathematics}, title={Algebraic Geometry and Statistical Learning Theory}, publisher={Cambridge University Press}, author={Watanabe, Sumio}, year={2009}, collection={Cambridge Monographs on Applied and Computational Mathematics}}

@article{wu2018sgd,
  title={How sgd selects the global minima in over-parameterized learning: A dynamical stability perspective},
  author={Wu, Lei and Ma, Chao and others},
  journal={Advances in Neural Information Processing Systems},
  volume={31},
  year={2018}
}

@inproceedings{wu2023implicit,
  title={The implicit regularization of dynamical stability in stochastic gradient descent},
  author={Wu, Lei and Su, Weijie J},
  booktitle={International Conference on Machine Learning},
  pages={37656--37684},
  year={2023},
  organization={PMLR}
}

@misc{xu2026doessgdseekflatness,
      title={Does SGD Seek Flatness or Sharpness? An Exactly Solvable Model}, 
      author={Yizhou Xu and Pierfrancesco Beneventano and Isaac Chuang and Liu Ziyin},
      year={2026},
      eprint={2602.05065},
      archivePrefix={arXiv},
      primaryClass={cs.LG},
      url={https://arxiv.org/abs/2602.05065}, 
}

@misc{wu2022dissectinghessianunderstandingcommon,
      title={Dissecting Hessian: Understanding Common Structure of Hessian in Neural Networks}, 
      author={Yikai Wu and Xingyu Zhu and Chenwei Wu and Annie Wang and Rong Ge},
      year={2022},
      eprint={2010.04261},
      archivePrefix={arXiv},
      primaryClass={cs.LG},
      url={https://arxiv.org/abs/2010.04261}, 
}

@misc{hosseini_2026_18421449,
  author       = {Hosseini, Mohammad and
                  Kerridge, Simon and
                  Allen, Liz and
                  Kiermer, Veronique and
                  Holmes, Kristi},
  title        = {CRediT Roles and Example Research Tasks That Could
                   be Attributed to Them
                  },
  month        = jan,
  year         = 2026,
  publisher    = {Zenodo},
  doi          = {10.5281/zenodo.18421449},
  url          = {https://doi.org/10.5281/zenodo.18421449},
}
\bibliographystyle{icml2026}


\appendix
\clearpage
\onecolumn
\section{Summary of Contributions}
We describe the contributions of each author following the CRediT Taxonomy \cite{hosseini_2026_18421449}.

\paragraph{Arjun} contributed to the conceptualization and the investigation of the Hessian. He was the primary contributor to the NTK analysis and the KFAC interpretation of the the $L2$-ER regularizer. He contributed to the original draft, and the final revision by adapting the NTK analysis and responding to reviewers. He also led the general project administration. 

\paragraph{Naicheng} was the primary contributor to the investigation and conceptualization of spectral collapse. He was the primary contributor to the $L2$-ER regularizer and confirmed its practicality together with Kaicheng. He led the project administration for the NeurIPS ARLET workshop submission and contributed to the original draft. 

\paragraph{Kaicheng} was a contributor to the $L2$-ER regularizer and to confirming its practical effectiveness across continual learning settings. He led the development of the entire codebase and was responsible for the main experimental pipeline. He led the empirical evaluation of the paper, including the performance experiments across all benchmarks and the Hessian spectrum analyses used to diagnose spectral collapse. His works provide the main experimental evidence supporting the paper's claims.
\section{Neural Tangent Kernel Analysis}
\label{appx:nkt}
\subsection{Introduction on NTKs}

For completeness we provide a short introduction to NTKs. The neural tangent kernel (NTK) has been a popular method to characterize the training dynamics of (stochastic) gradient descent with a infinitesimally small learning rates and the infinite-width limit (i.e., $\layerdim \to \infty)$. The main principle behind NTK theory is to approximate the neural network function by linearizing it around it's parameters around the initialization. The seminal work \cite{jacot2018neural} show that if $\linout$ is an appropriately scaled neural network, with $\params$ initialized i.i.d. from standard (or isotropic) Gaussian's (i.e. $\params \sim \initdist$) then:

$$\ntkout[\params](\dataX[\tasknum]) \approx \linout[{\params[\tasknum][][0]}](\dataX[\tasknum]) + \Jacobian[\tasknum](\params - \params[\tasknum][][0]).$$

The linearized model $\ntkout[\params]$ can be interpreted as a kernel method where the Jacobian is the feature map:

\begin{align}
    \ntkfeature[{\params[][][0]}](\dataX)
    \doteq
    \frac{\partial \linout[\params](\dataX)}{\partial \params}\bigg|_{\params = \params[][][0]} = \R^{\numsamples \outputdim \times \paramsdim}.
\end{align}
The kernel induced by the feature map is the NTK and is dependent on the random initialization $\params[][][0]$. The neural tangent kernel $\ntker: \datasetx \times \datasetx \to \R^{\outputdim \times \outputdim}$ is \arjun{add macro for param init dist} \arjun{does it need params in subscript $\ntker[\params]$}:
\begin{align}
    \ntker[\params](\datax, \datax[][\prime]) & \doteq \Ex_{\params \sim \initdist} \bigg[  \bigg \langle \frac{\partial \linout[\params](\datax)}{\partial \params}, \frac{\partial \linout[\params](\datax[][\prime])}{\partial \params}\bigg \rangle \bigg|_{\params = \params[][][0]} \bigg]  \\
    &=  \Ex_{{\params[][][0]} \sim \initdist} \big[ \langle \grad[\params] \linout[{\params[][][0]}](\datax), \grad[\params] \linout[{\params[][][0]}](\datax[][\prime])  \rangle \big]. 
\end{align}
The evaluating the kernel on a finite dataset (i.e. for all pairs $(\datax[][], \datax[][\prime]) \in \dataX)$ gives the empirical NTK matrix:
\begin{align}
    \ntker[\params](\datax, \datax[][\prime]) = \Jacobian \Jacobian[][\top] = \Gram \in \R^{\numsamples\outputdim\times \numsamples\outputdim}. 
\end{align}

The celebrated result from NTK analysis is that as $\layerdim \to \infty$, the NTK, $\ntker[\params]$ converges in probability to an explicit deterministic limit $\ntker[\params][\infty]$ \citep[Theorem 1]{jacot2018neural} where:
\begin{align}
    \ntker[\params][\infty] \doteq \Ex_{\params \sim \initdist}[\Gram].
\end{align}

This deterministic limit only depends on the variance parameter of the Gaussian initialization distribution and the choice of non-linearity. The NTK limit holds for \relu activations \cite{arora2019exact} and LeCun initialization \cite{jacot2018neural}. 

\subsection{Linearized \relu networks}
We now instantiate the NTK framework for the exact experimental model used in our continual learning experiments: a linearized ReLU network \cite{dwaraknath2023fixing}.  
Let the input dimension be $\inputdim$, width be $\layerdim$, and output dimension be $\outputdim$.
Denote the first-layer weights with $\weights[][1]=[\weight[1][1],\dots,\weight[\layerdim][1]]\in\R^{\inputdim \times \layerdim}$ and the fixed output-layer weights with $\weights[][2]=[\weight[1][2],\dots,\weight[\layerdim][2]]^\top\in\R^{\layerdim \times \outputdim}$ (so each hidden unit $h$ has output row $(\weight[h][2])^\top\in\R^{\outputdim}$).
A standard linearized \relu network is:
\begin{align}
    \ntkout[][\params](\datax)
    \doteq
    \frac{1}{\sqrt{\layerdim}}
    \sum_{h=1}^{\layerdim} \weight[h][2] \, \sigma((\weight[h][1])^\top \datax),
    \qquad \nonlin(x)=\max\{0,x\}.
    \label{eq:2layer-relu-ntk-scale}
\end{align}

\paragraph{Linearized \relu Network}
Let $\weights[][1][0] \in\R^{\inputdim \times M}$ be a frozen reference copy of the initialized first-layer matrix $\weights[][1]$. With this fixed initialization, we define the fixed ReLU gate function and gate matrix respectively:
\begin{align}
    \gatefunc(\datax)\doteq \mathbf{1}\{(\weight[h][1][0])^{\top} \datax>0\} \text{ for } h \in {1,..,\layerdim}
    \qquad
    \gatevec(\datax)\doteq \diag(\gatefunc(\datax)).
\end{align}
Note that the gates $\gatefunc(\datax)$ (equivalently, $\gatevec(\datax)$) and the output matrix $\weights[][2]$ remain frozen throughout training. The only trainable parameters are $\weights[][1] \in\R^{d\times M}$, initialized at $\weights[][1]=\weights[][1][0]$.
Thus, the linearized activations are:
\begin{align}
    \linout[\weights[][1]](\datax)
    &\doteq
    \sigma\!\big((\weights[][1][0])^\top \datax \big)
    \;+\;
    \gatefunc(\datax)\odot\big((\weights[][1])^{\top} \datax - (\weights[][1][0])^\top \datax \big)
    \\
    &=
    \gatefunc(\datax)\odot((\weights[][1])^{\top} \datax)\\
    &=\gatevec(\datax)\,(\weights[][1])^{\top} \datax,
    \label{eq:frozen-gate-hidden}
\end{align}
where we used $\sigma((\weights[][1][0])^\top \datax )=\gatefunc(\datax)\odot (\weights[][1][0])^\top \datax$.
The full linearized network the model prediction is
\begin{align}
    \linout[\params](\datax) = \linout[\weights[][1]](\datax)^{\top} \weights[][2] = \weights[][2] \gatevec(\datax) (\weights[][1])^{\top} \datax \in \R^{\outputdim}.
    \label{eq:frozen-gate-logits}
\end{align}
We denote this specific instantiation of the gated-linear tangent kernel (GLTK) with $\glker: \datax \times \datax \to \R^{\outputdim \times \outputdim}$. Specifically, for output coordinates $o,o'\in[\outputdim]$:
\begin{align}
    \glker(\datax,\datax[][\prime])_{(o,o')}
    \;\doteq\;
    \Big\langle \frac{\partial \linout[o](\datax)}{\partial \weights[][1]},
               \frac{\partial \linout[{o'}](\datax[][\prime])}{\partial \weights[][1]}
    \Big\rangle_{\mathrm{F}},
\end{align}

Given the dataset $\{\datax[][1],..,\datax[][\samples]\}$, using $\partial \linout[o](\datax)/\partial \weight[h][1] = [\weight[h][2]]_o\,\gatefunc[h](\datax)\,\datax$,  yields the Gram matrix:
\begin{align}
   [\glker(\datax[][\samples],\datax[][\sample])]_{o,o'} \doteq [\Gram]_{(\samples,o),(\sample,o')} \doteq 
    \;=\;
    \langle \datax[][\samples],\datax[][\sample]\rangle \frac{1}{\layerdim}
    \sum_{h=1}^{\layerdim} \bigg( [\weight[h][2]]_o [\weight[h][2]]_{o'} \, \gatefunc[h](\datax[][\samples])\,\gatefunc[h](\datax[][\sample]) \bigg).
    \label{eq:frozen-gates-kernel}
\end{align}

Furthermore, the taking $\layerdim \to \infty$, the infinite width limit is:
\begin{align}
     [\glker[][\infty]]_{o,o'}(\datax[\samples], \datax[\sample]) \doteq \E_{ \weights[][1][0],\weights[][2]} [\Gram] =  \E_{ \weights[][1][0],\weights[][2]} \!\Big[
    \langle \datax[][\samples],\datax[][\sample]\rangle
    \sum_{h=1}^{\layerdim}  [\weight[h][2]]_o [\weight[h][2]]_{o'} \, \gatefunc[h](\datax[][\samples])\,\gatefunc[h](\datax[][\sample])
    \Big].
\end{align}

\paragraph{Relation to NTK linearization.}
Consider the full two-layer network in \Cref{eq:2layer-relu-ntk-scale} with frozen $V$ and linearize in $W$ around $W^{(0)}$:
$$\ntkout[\params](\dataX[\tasknum]) \approx \linout[{\params[\tasknum][][0]}](\dataX[\tasknum]) + \Jacobian[\tasknum](\params - \params[\tasknum][][0]).$$

In particular, for output coordinate $o \in [\outputdim]$. 

\begin{align}
    \frac{\partial \linout[\params](\datax)}{\partial \weights[][1]} = \frac{1}{\sqrt{\layerdim}} \datax(\gatefunc \odot [\weights[][2]]_{:,o})^{\top}.
\end{align}

In our construction \Cref{eq:frozen-gate-hidden}--\Cref{eq:frozen-gate-logits}, this linearization is
\emph{exact globally}: once gates are frozen, $\linout[\params](\datax)$ is linear in $\weights[][1]$ for all $\weights[][1]$, not just locally around $\weights[][1][0]$.
Therefore, Assumption~\ref{as:local-lin} holds with equality for this experimental model.





\section{NTK Mathematical Results}
Our detailed mathmatical results rely on the following assumptions:
\begin{assumption}[Local Linearization]
For task $\tasknum$, during the next $\numiters$ gradient descent steps starting at $\params[\tasknum][][0]$, we approximate:
\begin{align}
    \ntkout[\params](\dataX[\tasknum]) \approx \linout[{\params[\tasknum][][0]}](\dataX[\tasknum]) + \Jacobian[\tasknum](\params - \params[\tasknum][][0]),
\end{align}

where the Jacobian evaluated on the dataset/batch is:
\begin{align}
    \Jacobian \doteq \frac{\partial \linout[\params](\dataX[\tasknum])}{\partial \params} \bigg|_{\params = \params[\tasknum][][0]} \in \R^{\numsamples \times \paramsdim},
\end{align}
is treated as constant over the window. 
\label{as:local-lin}
\end{assumption}
For the continual NTK, $\Gram = \Jacobian \Jacobian[][\top] \in \R^{\numsamples \times \numsamples}$, we will linearize around $\params[0] =  \params[\tasknum+1][][0]$.

\begin{assumption}[Step size stability]
The learning rate $\learnrate$ is small enough such that $\I - \learnrate \Gram[\tasknum]$ is non-expansive (i.e., $\learnrate \eigenval[\max](\Gram[\tasknum]) \leq 1$).
\label{as:step-stab}
\end{assumption}

In addition, we also assume that our model linear model is initialized from i.i.d. standard Gaussian distributions as an NTK. 




\ntkdynamics*

\begin{proof}
Using $\Cref{as:local-lin}$, $\ntkout[\params](\dataX[\tasknum]) \approx \linout[{\params[\tasknum][][0]}](\dataX[\tasknum]) + \Jacobian[\tasknum](\params - \params[\tasknum][][0])$, so we can approximate the residual with the following: 
\begin{align}
\resid[\tasknum](\params) &= \linout[\params](\dataX[\tasknum]) - \dataY[\tasknum] \\
&\approx \linout[{\params[\tasknum][][0]}](\dataX[\tasknum]) - \dataY[\tasknum] + \Jacobian(\params - \params[\tasknum][][0]) \\
&= \resid[\tasknum][][0] + \Jacobian(\params - \params[\tasknum][][0]).
\end{align}

For the squared error loss $\lossfunc = \frac{1}{2}\Vert \resid(\params) \Vert^2$, with linear model \arjun{call it something else because it is not linear in inputs} $\resid(\params) = \resid[\tasknum][][0] + \Jacobian(\params - \params[\tasknum][][0])$ the gradient is:
\begin{align}
\grad[\params]\lossfunc[\tasknum](\params) = \Jacobian[\tasknum][\top] \resid (\params), 
\end{align}
which means the gradient descent update is:
\begin{align}
    \params[\tasknum][][\iter+1] = \params[\tasknum][][\iter] - \learnrate\Jacobian[\tasknum][\top] \resid[\tasknum][][\iter].
\end{align}

Now using the fact that $\resid[\tasknum][][\iter+1] \approx \resid[\tasknum][][\iter]+ \Jacobian[\tasknum](\params[][][\iter+1] - \params[][][\iter])$, we obtain \arjun{check}:

\begin{align}
    \resid[\tasknum][][\iter+1] &\approx \resid[\tasknum][][\iter] + \Jacobian[\tasknum](\params[][][\iter+1] - \params[][][\iter]) \\
    &= \resid[\tasknum][][\iter] +\Jacobian[\tasknum](-\learnrate \Jacobian[\tasknum][\top] \resid[\tasknum][][\iter]) \\
    &= (\I - \learnrate \Jacobian[\tasknum] \Jacobian[\tasknum][\top]) \resid[\tasknum][][\iter] \\
    &=  (\I - \learnrate \Gram[\tasknum]) \resid[\tasknum][][\iter] 
\end{align}
Which we can repeatedly apply using $\cref{as:local-lin}$ to obtain the final result. 
\end{proof}




\eigenvaldecomp*

\begin{proof}


First, we expand $\resid[\tasknum][][0]$ in its eigenbasis:
\begin{align}
     \resid[\tasknum][][0] = \sum^\samples_{i=1} \residcoef[\tasknum,\eigindex] \eigenvec[\tasknum,\eigindex], \quad \residcoef[\tasknum,\eigindex] \doteq (\eigenvec[\tasknum,\eigindex])^{\top} \resid[\tasknum][0],
\end{align}

and 
\begin{align}
    \resid[\tasknum][][\iter] = (\I - \learnrate \Gram[\tasknum])^\iter \bigg( \sum^\samples_{\eigindex=1} \residcoef[\tasknum,\eigindex] \eigenvec[\tasknum,\eigindex] \bigg)
    \label{eq:resid-eig}
\end{align}

Using the identity of eigenpairs: $\Gram[\tasknum] \eigenvec[\tasknum,\eigindex] = \eigenval[\tasknum,\eigindex] \eigenvec[\tasknum,\eigindex]$ which implies that:
\begin{align}
    (\I - \learnrate \Gram[\tasknum]) \eigenvec[\tasknum,\eigindex] &= \I\eigenvec[\tasknum,\eigindex] - \learnrate (\Gram[\tasknum]\eigenvec[\tasknum,\eigindex]) \\
    &= 1 \cdot \eigenvec[\tasknum,\eigindex] - \learnrate (\eigenval[\tasknum,\eigindex] \eigenvec[\tasknum,\eigindex]) \\
    &= (1 - \learnrate \eigenval[\tasknum,\eigindex]) \eigenvec[\tasknum,\eigindex].
\end{align}
For any power $\iter$ 
\begin{align}
    (\I - \learnrate\Gram[\tasknum])^{\iter} \eigenvec[\tasknum,\eigindex] = (1 - \learnrate \eigenval[\tasknum,\eigindex])^\iter \eigenvec[\tasknum,\eigindex],
\end{align}

since $\eigenval[\tasknum,\eigindex]$ is an eigenvector of $\Gram[\tasknum]$, it is also an eigenvalue of the polynomial $(\I - \learnrate\Gram[\tasknum])$, and by induction of $(\I - \learnrate\Gram[\tasknum])^\iter$ with eigenvalue $(\I - \learnrate\eigenval[\tasknum,\eigindex])^\iter$. 

We can now plugging this back into $\cref{eq:resid-eig}$:
\begin{align}
        \resid[\tasknum][][\iter] = \sum^\samples_{\eigindex=1}(1 - \learnrate \eigenval[\tasknum,\eigindex])^{\iter} \residcoef[\tasknum,\eigindex] \eigenvec[\tasknum,\eigindex].
\end{align}
and squaring yields the desired result. 
\end{proof}

\subsection{Fast and slow modes}

We can now formalize \emph{fast} and \emph{slow} with an explicit threshold derived from $\numiters$ and $\learnrate$. Pick $\gamma \in (0,1)$ . 

\begin{definition}
A \emph{fast mode} should shrink by at least a factor of $\contract$ after $\numiters$ steps. That is: if a mode is fast $(1 - \learnrate \eigenval)\numiters \leq \contract$. 
\end{definition}
Solve for $\eigenval$:
\begin{align}
    (1 - \learnrate \eigenval)^{\iters} \leq \contract \Longleftrightarrow 1 - \learnrate \eigenval \leq \contract^{1/\numiters} \Longleftrightarrow \eigenval \geq \epsKg,
\end{align}

where 
\begin{align}
    \epsKg \doteq \frac{1 - \contract^{1/\numiters}}{\learnrate} \approx \frac{- \log \contract}{ \learnrate \numiters}.
\end{align}

Given eigenvalues $\eigenval[\tasknum][\i]$, define the fast set:
\begin{align}
    \fastset[\tasknum] \doteq \{ \eigindex : \eigenval[\tasknum,\eigindex] \geq \epsKg \},
\end{align}

and slow set:
\begin{align}
    \slowset[\tasknum] \doteq \{ \eigindex : \eigenval[\tasknum,\eigindex] < \epsKg \}.
\end{align}
The $\epsilon$-rank can now be defined as $| \fastset|$.

Finally, we define the fast projector:
\begin{align}
    \projmat[\tasknum][\fastset] \doteq \sum_{i \in \fastset[\tasknum]} \eigenvec[\tasknum,\eigindex] \eigenvec[\tasknum, \eigindex][\top],
\end{align}
and slow projector:
\begin{align}
    \projmat[\tasknum][\slowset] \doteq \I - \projmat[\tasknum][\fastset] = \sum_{i \in \slowset[\tasknum]} \eigenvec[\tasknum,\eigindex] \eigenvec[\tasknum, \eigindex][\top].
\end{align}

\slowmodes*

\begin{proof}
From \Cref{lem:eig-decomp}, 
\begin{align}
    \projmat[][\slowset] = \sum_{\i \in \slowset}(1 - \learnrate \eigenval[i])^{\numiters} \residcoef[][i] \eigenvec[][i].
\end{align}

If $i \in \slowset[\tasknum]$, then $\eigenval[][i] < \epsKg$, hence
\begin{align}
    1 - \learnrate \eigenval[][i] > 1 - \learnrate \epsKg = \contract^{1/\numiters} \implies (1 - \learnrate \eigenval[][i])^\numiters > \contract.
\end{align}
Therefore, each slow coefficient satisfies $|( 1 - \learnrate \eigenval[][i])^\numiters \residcoef[][i] \geq \contract | \residcoef[][i]| $. Squaring and summing gives:
\begin{align}
    \Vert \projmat[\tasknum][\slowset] \resid[][][\numiters]\Vert^2 = \sum_{i \in \slowset} ( 1 - \learnrate \eigenval[][i])^{2\numiters} (\residcoef[][i])^2 \geq \sum_{i \in \slowset} \contract^2(\residcoef[][i])^2 = \contract^2 \Vert \projmat[][\slowset] \resid[][][0] \Vert^2.
\end{align}
Taking square roots closes the proof. 
\end{proof}
\begin{remark}
Slow modes are sticky. If the eigenvalues are in the slow-regime, after $\numiters$ steps, the NTK still retains at least $\contract$ fraction of the eigenvalues less than $\epsKg$. 
\end{remark}

\subsection{Continual Case}
Now we will connect slow-stickiness to the success criterion. For the squared loss on task $\tasknum+1$, 
\begin{align}
    \lossfunc[\tasknum+1](\params[\tasknum+1][][\numiters]) = \frac{1}{2} \Vert \resid[\tasknum+1][][\numiters] \Vert^2. 
\end{align}
\arjun{probably should be the $\calL$ version over the data distribution}. 

Now, we can define a success criterion if the final loss is at most $\succrit$:
\begin{align}
    \success_{\tasknum+1} \doteq \bigg\{ \lossfunc[\tasknum](\params[\tasknum+1][][\numiters]) \leq \succrit \bigg\} =\bigg \{ \Vert \resid[\tasknum+1][][\numiters] \Vert \leq \sqrt{2\succrit} \bigg \} . 
\end{align}

\fastcond*

\begin{proof}
    From \Cref{lem:slow} applied to task $\tasknum+1$:
    \begin{align}
        \Vert \resid[\tasknum+1][][\numiters]\Vert \geq  \contract \Vert \projmat[\tasknum+1][\slowset]\resid[\tasknum+1][][0] \Vert
    \end{align}

If success holds, then $\Vert \projmat[\tasknum+1][\slowset]\resid[\tasknum][][0] \Vert \leq \sqrt{2\succrit}$, so combining:
\begin{align}
   \sqrt{2\succrit} \geq  \Vert \resid[\tasknum+1][][\numiters]\Vert \geq  \contract \Vert \projmat[\tasknum+1][\slowset]\resid[\tasknum+1][][0] \Vert \implies \Vert \projmat[\tasknum+1][\slowset]\resid[\tasknum+1][][0] \Vert  \leq \frac{\sqrt{2 \succrit}}{\contract}.
\end{align}
Now define $V = \frac{\resid[\tasknum+1][][0]}{\Vert \resid[\tasknum+1][][0] \Vert_2} $, 
\begin{align}
    \Vert \projmat[\tasknum+1][][\slowset] \resid[\tasknum+1][][0] \Vert^2 = \Vert \resid[\tasknum+1][][0]\Vert^2 \cdot \Vert\projmat[\tasknum+1][][\slowset] V \Vert^2 \leq \bigg(\frac{ \sqrt{2 \succrit}}{\contract}\bigg)^2 \implies \Vert\projmat[\tasknum+1][][\slowset] V \Vert^2 \leq \bigg( \frac{\sqrt{2 \succrit}}{\contract \Vert \resid[\tasknum+1][][0]\Vert^2} \bigg)^2.  
\end{align}
Finally, since $\projmat[\tasknum +1][\fastset] = \I - \projmat[\tasknum][\slowset]$, 

\begin{align}
    \Vert \projmat[\tasknum +1][\fastset]  V \Vert^2 = 1 - \Vert \projmat[\tasknum +1][\slowset]  V \Vert^2 \geq 1 - \bigg( \frac{\sqrt{2 \succrit}}{\contract \Vert \resid[\tasknum+1][][0]\Vert^2} \bigg)^2
\end{align}
\end{proof}
\begin{remark}
    To succeed on the new task, the eigenvalues of of the initial Gram matrix must lie overwhelmingly in the fast subspace, unless the initial residual magnitude $\Vert \resid[\tasknum+1][][0]\Vert^2$ is already tiny. This means that if the model is successful on the current task, and the new task is similar, we can expect successful training. However, if the model eigenvalues are small and so lie in the slow-subspace, it will not be able to converge to success in $\numiters$ gradient steps. 
\end{remark}

\begin{lemma}[Permutation invariance of the gated-linear Gram]
\label{lem:perm-inv-expected-gram}
\arjun{go with i,j indices, check shape and indices of o}
Consider a linearized \relu network with weights from \Cref{eq:2layer-relu-ntk-scale} initialized with i.i.d Gaussian entries as NTK. Let the input dimension be $\inputdim$, width be $\layerdim$, and output dimension be $\outputdim$. Given inputs indices $\samples,\sample$, (i.e., $\datax[\samples], \datax[\sample] \in \{\datax[][1],..,\datax[][\samples] \}$), and output indices $o,o'$, the GLTK $\glker[][\infty] = \Ex_{\weights[][1][0], \weights[][2]}[\Gram]$ is given by:
\begin{align}
     [\glker[\infty]]_{o,o'}(\datax[\samples], \datax[\sample]) = \E_{\weights[][1][0],\weights[][2]} \!\Big[
    \langle \datax[\samples],\datax[\sample]\rangle \sum_{h=1}^{\layerdim} [\weight[h][2]]_{o}[\weight[h][2]]_{o'}\, \gatefunc[h](\datax[\samples])\,\gatefunc[h](\datax[\sample])
    \Big]. 
\end{align}
Let $\permutmat\in\R^{\inputdim\times \inputdim}$ be a orthogonal permutation matrix \footnote{A matrix with exactly one entry of 1 in each row and column, but not diagonal.} (i.e., $\permutmat[][\top] \permutmat = \I$).  Then, the infinite width kernel $[\glker]_{(o,o')}[\infty](\datax[\samples], \datax[\sample])$ is invariant to the transformation $\projmat$ applied to it's inputs. Specifically for any pair of data $\datax[\samples], \datax[\sample]$ and output indices $o, o'$:
\begin{align}
     [\ntker[][\infty]]_{o,o'}(\permutmat\datax[\samples],\permutmat\datax[\sample]) = [\ntker]_{o,o'}[\infty](\datax[\samples],\datax[\sample]). 
\end{align}
\end{lemma}

\begin{proof}
First, note that inner products are invariant under orthogonal transforms:
$\langle \permutmat\datax,\permutmat\datax[][\prime]\rangle = \langle \datax,\datax[][\prime]\rangle$.
Next, for each hidden unit $h$,
\begin{align}
    \gatefunc[h](\permutmat\datax)
    \;=\;
    \mathbf{1}\{(\weight[h][1][0])^\top \permutmat \datax > 0\}
    \;=\;
    \mathbf{1}\{(\permutmat[][\top] \weight[h][1][0])^\top \datax > 0\}.
\end{align}
Since $\weight[h][1][0]$ is initialized from an isotropic distribution and $\permutmat$ is orthogonal, we have
$\permutmat[][\top] \weight[h][1][0] \equiv \weight[h][1][0]$.
Therefore, for any fixed $\datax,\datax[][\prime]$, the pair
\(
(\gatefunc[h](\permutmat\datax),\gatefunc[h](\permutmat\datax[][\prime]))
\)
has the same distribution as
\(
(\gatefunc[h](\datax),\gatefunc[h](\datax[][\prime]))
\),
and in particular
\(
\E[\gatefunc[h](\permutmat\datax)\gatefunc[h](\permutmat\datax[][\prime])] = \E[\gatefunc[h](\datax)\gatefunc[h](\datax[][\prime])]
\), where the expectation is over $\weight[h][1][0]$. 

Finally, using the explicit kernel form \eqref{eq:frozen-gates-kernel} and independence of $V$ from $\weights[][1][0]$,
\begin{align}
    \E_{\weights[][1][0],\weights[][2]}\!\Big[ [\glker]_{o,o'}(\permutmat\datax[\samples],\permutmat\datax[\sample]) \Big]
    &=
    \E_{\weights[][1][0], \weights[][2]}\!\Big[
    \langle \permutmat\datax[\samples],\permutmat\datax[\sample]\rangle \sum_{h=1}^{\layerdim} [\weight[h][2]]_{o}[\weight[h][2]]_{o'}\, \gatefunc[h](\permutmat\datax[\samples])\,\gatefunc[h](\permutmat\datax[\sample])
    \Big] \\
    &=
    \langle \datax[\samples],\datax[\sample]\rangle \sum_{h=1}^{\layerdim} \E_{\weights[][2]}[[\weight[h][2]]_{o}[\weight[h][2]]_{o'}]\; \E_{\weights[][1][0]}[\gatefunc[h](\permutmat\datax[\samples])\gatefunc[h](\permutmat\datax[\sample])] \\
    &=
    \langle \datax[\samples],\datax[\sample]\rangle \sum_{h=1}^{\layerdim} \E_{\weights[][2]}[[\weight[h][2]]_{o}[\weight[h][2]]_{o'}]\; \E_{\weights[][2]}[\gatefunc[h](\datax[\samples])\gatefunc[h](\datax[\sample])] \\
    &=
    \E_{\weights[][1][0], \weights[][2]}\!\Big[ [\glker]_{(o,o')}(\datax[\samples],\datax[\sample]) \Big],
\end{align}
which proves the claim.
\end{proof}

\begin{remark}
For permuted MNIST, each task applies some permutation $\permutmat[\tasknum]$ to all inputs.
Lemma~\ref{lem:perm-inv-expected-gram} implies the \emph{expected} frozen-gates Gram (and hence its expected spectrum / $\epsilon$-rank)
is task-invariant under fresh pixel permutations, up to finite-width fluctuations.
\end{remark}

\subsection{Extension to Multi-class Cross-Entropy}
\label{appx:ntk-ent}

For multi-class cross-entropy with logits $\logits=(\logits[1],\ldots,\logits[\samples])\in\R^{\samples\outputdim}$ and blockwise softmax $\softmax(\logits)$, the empirical risk $\lossfunc(\logits)=\sum_{i=1}^\samples \lossfunc(\logits[i],\datay[i])$ satisfies $\grad[\logits]\lossfunc(\logits)=\softmax(\logits)-\dataY$.

In the NTK/linearized regime (Jacobian frozen at initialization), discrete gradient descent on parameters induces a closed-form update on training logits of the form:
\begin{align}
    \logits[][][\iter+1] &= \logits[][][\iter] - \learnrate \Gram[][][0] \grad[\logits]\lossfunc(\logits[][][\iter]), \\
    \Gram[][][0] &= \Jacobian[][][0] (\Jacobian[][][0])^{\top},
\end{align}
where $\Jacobian[0]$ is the initialization-time feature map induced by the training inputs and $\Gram[][][0] \in \R^{(\samples\outputdim)\times(\samples\outputdim)}$ is the corresponding multi-output NTK/Gram matrix with blocks $\Gram[{(i,o),(j,o')}][][0] = \langle \grad[\params] \linout[o](\datax[i]), \grad[\params] \linout[o'](\datax[j]) \rangle$. This is exactly the discrete-time counterpart of the standard NTK function-space dynamics for general losses \citep{lee2019wide}. To recover the same fast/slow eigenmode story as in squared loss, we linearize the cross-entropy gradient in logit space around a reference $\logits[][\star]$:
\begin{align}
    \grad[\logits]\lossfunc(\logits[][\star]+\delta \logits) = \grad[\logits]\lossfunc(\logits[][\star]) + \Hessian[][\star]\delta \logits + \text{h.o.t.},
\end{align}
where $\Curvature[] = \grad[\logits]^2\lossfunc(\logits[][\star]) = \mathrm{blockdiag}(\Curvature[1],\ldots,\Curvature[\samples])$ and $\Curvature[i] = \diag(\probvec[i])-\probvec[i]\probvec[i][\top] \succeq 0$ with $\probvec[i] = \softmax(\logits[\star][i])$ \cite{wu2022dissectinghessianunderstandingcommon}. Then,
\begin{align}
    \logits[][][\iter+1] &= \logits[][][*] + \delta \logits[][][\iter] - \learnrate \Gram[][][0] \grad[\logits]\lossfunc(\logits[][][*] + \delta \logits[][][\iter]) \\
    &\approx \logits[][][*] + \delta \logits[][][\iter] - \learnrate \Gram[][][0] \grad[\logits]\lossfunc(\logits[][][*] + \Curvature[][*]\delta \logits[][][\iter]),
\end{align}
which implies
\begin{align}
    \delta \logits[][][\iter+1] \approx (\I - \learnrate \Gram[][][0] \Curvature[][]) \delta \logits[][][\iter] - \learnrate\Gram[][][0] \grad[\logits]\lossfunc(\logits[][*]).
\end{align}

If $\logits[][*]$ is a stationary point, so that $\grad[\logits]\lossfunc(\logits[][*])=0$, the additive term vanishes; otherwise, the local linearization is affine with respect to the linear same linear part.

Define the weighted error $\errorvec[][][\iter] \doteq \Curvature[][ 1/2]\delta \logits[][][\iter]$, so that the local cross-entropy quadratic satisfies $\delta \logits[][\top] \Curvature[][]\delta\logits = \Vert \errorvec \Vert^2_2$; in these coordinates the linearized update becomes
\begin{align}
    \errorvec[][][\iter+1] &\approx (\I - \learnrate \Curvature[][ 1/2] \Gram[][][0] \Curvature[][ 1/2]) \errorvec[][][\iter]. 
\end{align}
Hence, the relevant ``fast eigenvalues'' for cross-entropy are those of the GGN $\Gram[\LW][\star] \doteq \Curvature[][ 1/2] \Gram[][][0] \Curvature[][ 1/2]$, not the raw $\Gram[][][0]$. By \Cref{lem:perm-inv-expected-gram} the infinite width limit of $\Gram[][][0]$ remains task-invariant.

This matches the generalized Gauss--Newton viewpoint: in the linearized model, the parameter-space curvature is $\Jacobian[][\top][0] \Curvature[] \Jacobian[][][0]$, whose nonzero spectrum agrees with $\Curvature[][ 1/2]\Jacobian[][][0](\Jacobian[][][0])\top \Curvature[][ 1/2]$ \citep{martens2015optimizing}.




\subsection{Empirical Results}

\begin{figure}[htbp]
    \centering
    \begin{subfigure}[b]{0.3\textwidth}
        \centering
        \includegraphics[width=\textwidth]{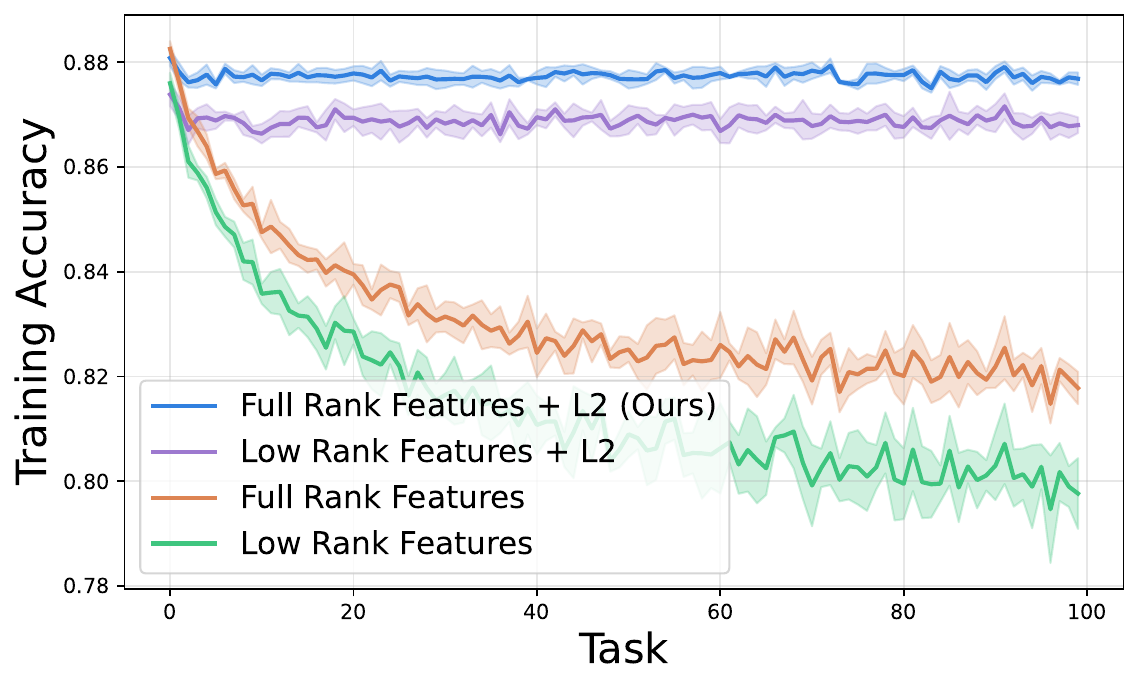}
        \caption{Training Accuracy}
        \label{fig:sub1}
    \end{subfigure}
    \hfill 
    \begin{subfigure}[b]{0.3\textwidth}
        \centering
        \includegraphics[width=\textwidth]{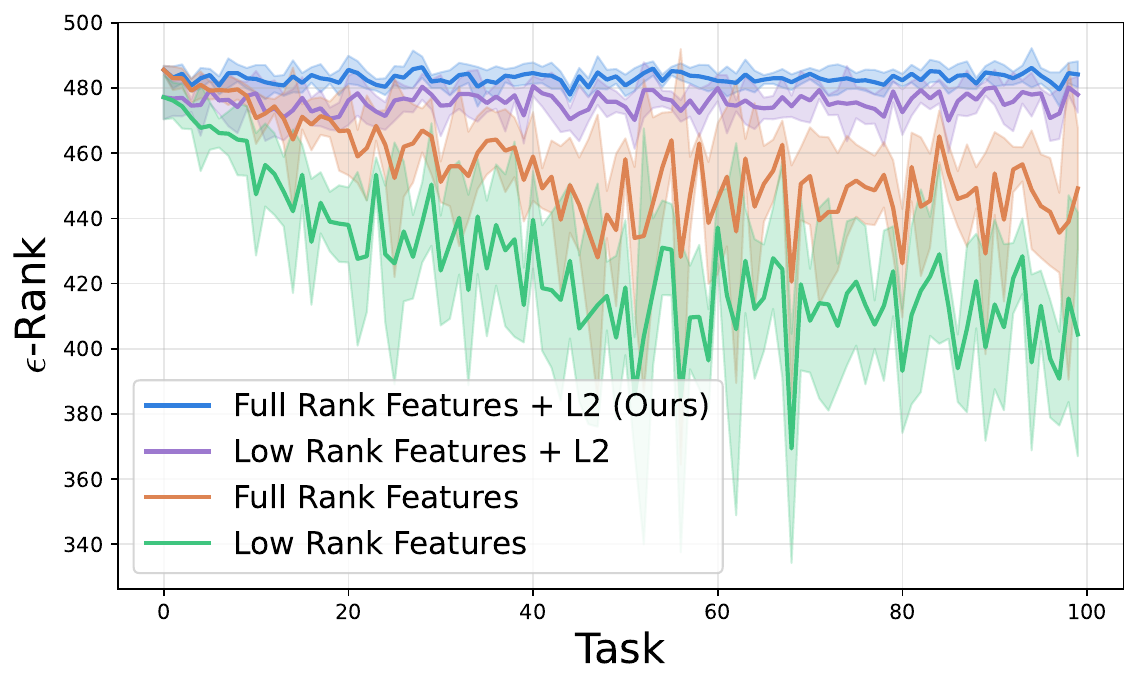}
        \caption{$\epsKg$-rank of $\Gram[CE]$}
        \label{fig:sub2}
    \end{subfigure}
    \hfill 
    \begin{subfigure}[b]{0.3\textwidth}
        \centering
        \includegraphics[width=\textwidth]{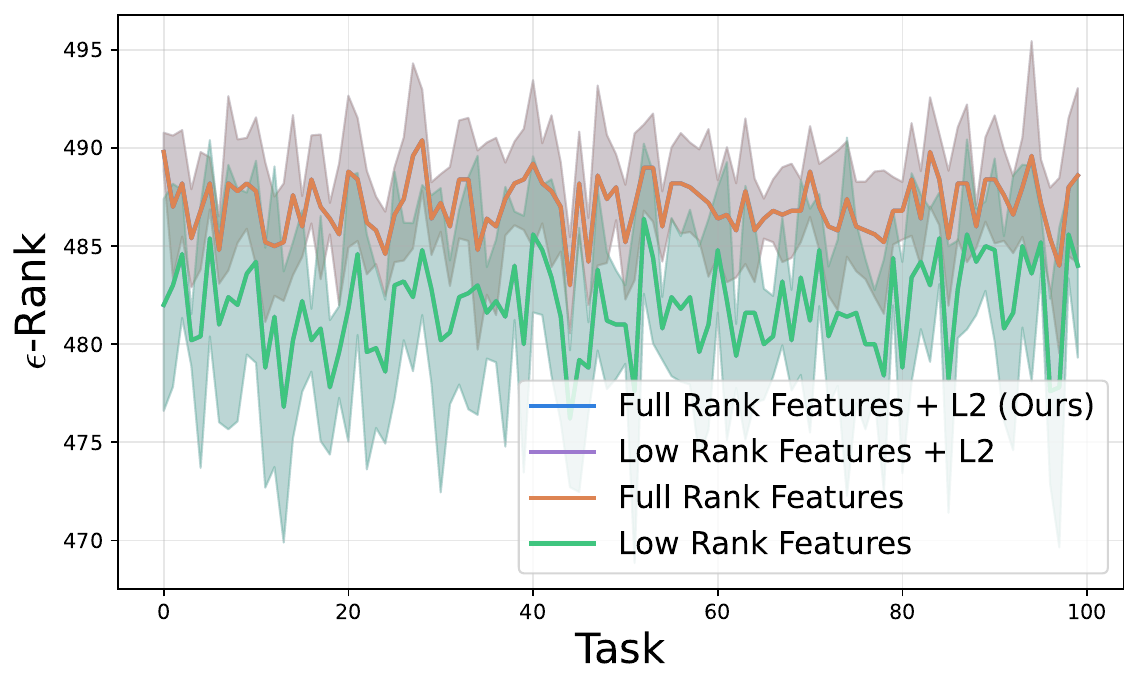}
        \caption{$\epsKg$-rank of $\Gram[][][0]$}
        \label{fig:sub3}
    \end{subfigure}
    \caption{Metics tracked in our linearized \relu network experiment. As expected, the training accuracy correlates with the $\epsKg$-rank of the $\Gram[CE]$, while the $\epsKg$-rank of $\Gram[][][0]$ remains fairly flat on average over training.}
    \label{fig:three_images}
\end{figure}

\clearpage
\subsection{Proofs For Section \ref{sec: ggn}}

\begin{restatable}[]{theorem}{GNNrank}
\label{gnnrank}
Suppose an $\numlayers$-layered MLP $\linout[\params]: \R^{\inputdim} \to \R^{\outputdim}$, and all input data lies in $\ball[\radius](0)$, and either means-squared error or softmax-cross-entropy losses $\lossfunc$. Let the network be $\datax \mapsto \linout[\params](\datax)$ be $\lipconst[f]$-Lipschitz on $\ball[\radius](0)$ 
and let the logit-gradient, $\netoutput \mapsto \grad[\netoutput] \lossfunc(\netoutput, \datay)$ be $\lipconst[g]$-Lipschitz. Define: $\Gamma(\params) := \sup_{\datax \in \ball[\radius](0), 1 \leq i \leq \outputdim} \Vert \grad[\params][2] \netoutput[i](\datax;\params) \Vert_2$ and $\netoutput = \linout[\params](\datax)$
Then, the residual $\residual$ of the Hessian on task $\tasknum+1$
obeys the following operator-norm bound:
\begin{align}
    \Vert \residual(\params[\tasknum+1][0]) \Vert_2 \leq \sqrt{\outputdim} \Gamma(\params[\tasknum])\bigg(\sup_{\datax \in \supp(\datasetx[\tasknum])}[ \grad[\netoutput] \lossfunc(\netoutput[i], \datay)] + \lipconst[f] \lipconst[g] \dist(\datasetx, \datasetx[][\prime])\bigg) \doteq \bm{B}_{\tasknum} .
\end{align}

\end{restatable}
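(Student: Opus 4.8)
The plan is to bound the residual pointwise from its defining expression $\residual = \sum_{i=1}^{\outputdim} \frac{\partial \lossfunc}{\partial \netoutput[i]}\, \grad[\params][2] \netoutput[i]$, evaluated at the frozen parameters $\params = \params[\tasknum]$ but integrated against the new task's data distribution $\datadist[\tasknum+1]$. Since the operator norm is convex, I would first push it inside the data expectation (Jensen) and dominate the expectation by a worst case over new-task inputs, so that it suffices to bound $\Vert \sum_i \frac{\partial \lossfunc}{\partial \netoutput[i]} \grad[\params][2] \netoutput[i] \Vert_2$ at a single arbitrary $\datax[][\prime] \in \supp(\datadist[\tasknum+1])$. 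Applying the triangle inequality and submultiplicativity peels off the scalar loss-gradient coefficients:
$$\Big\Vert \sum_{i=1}^{\outputdim} \frac{\partial \lossfunc}{\partial \netoutput[i]}\, \grad[\params][2] \netoutput[i] \Big\Vert_2 \;\le\; \sum_{i=1}^{\outputdim} \Big| \frac{\partial \lossfunc}{\partial \netoutput[i]} \Big|\; \Vert \grad[\params][2] \netoutput[i] \Vert_2 .$$
Because all inputs (old and new) lie in $\ball[\radius](0)$, each logit-Hessian factor is controlled uniformly by the definition of $\Gamma$, giving $\Vert \grad[\params][2] \netoutput[i] \Vert_2 \le \Gamma(\params[\tasknum])$.

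Next I would convert the resulting $\ell_1$ sum of loss-gradient magnitudes into the $\ell_2$ norm of the full logit gradient by Cauchy--Schwarz in $\R^{\outputdim}$, i.e.\ $\sum_i | \partial \lossfunc / \partial \netoutput[i] | \le \sqrt{\outputdim}\, \Vert \grad[\netoutput] \lossfunc(\netoutput, \datay) \Vert_2$; this is exactly where the $\sqrt{\outputdim}$ prefactor in the statement originates. At this point the bound reads $\Vert \residual \Vert_2 \le \sqrt{\outputdim}\, \Gamma(\params[\tasknum])\, \Vert \grad[\netoutput] \lossfunc(\netoutput, \datay) \Vert_2$, with the logit gradient evaluated at $\netoutput = \linout[\params](\datax[][\prime])$ for the new-task input $\datax[][\prime]$ and frozen $\params = \params[\tasknum]$.

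The crux is transferring this logit-gradient norm, which is measured on new-task data, back to a quantity measured on the old task. For each fixed $\datax[][\prime] \in \datasetx[][\prime]$, I would choose a nearest old point $\datax \in \datasetx$ and chain the two Lipschitz hypotheses:
$$\Vert \grad[\netoutput] \lossfunc(\linout[\params](\datax[][\prime]), \datay) - \grad[\netoutput] \lossfunc(\linout[\params](\datax), \datay) \Vert_2 \le \lipconst[g]\, \Vert \linout[\params](\datax[][\prime]) - \linout[\params](\datax) \Vert_2 \le \lipconst[g] \lipconst[f]\, \Vert \datax[][\prime] - \datax \Vert_2 .$$
Taking the infimum over $\datax \in \datasetx$ replaces $\Vert \datax[][\prime] - \datax \Vert_2$ by the directed (new-to-old) distance, which the symmetric Hausdorff distance dominates, so it is at most $\dist(\datasetx, \datasetx[][\prime])$. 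The reverse triangle inequality then bounds the new-data logit-gradient norm by $\sup_{\datax \in \supp(\datasetx[\tasknum])} \Vert \grad[\netoutput] \lossfunc(\netoutput, \datay) \Vert_2 + \lipconst[f]\lipconst[g]\, \dist(\datasetx, \datasetx[][\prime])$, and substituting back reproduces exactly $\bm{B}_{\tasknum}$.

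I expect the data-shift step, rather than the algebra, to be the main obstacle. It hinges on the covering property that every new point admits a nearby old point within the inter-task distance, so I must ensure the distance appearing here is the (directed, new-to-old) Hausdorff distance consistent with the $\Delta_\tasknum$ used earlier, and that both Lipschitz constants are genuinely uniform over $\ball[\radius](0)$ at the frozen $\params[\tasknum]$. A secondary subtlety is the expectation-to-supremum reduction: if the residual is a population object, Jensen makes the single-point bound suffice, but I would state this explicitly so that the worst-case $\sup_{\datax \in \supp(\datasetx[\tasknum])}$ in the final bound is fully justified rather than assumed.
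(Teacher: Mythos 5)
Your proposal is correct and takes essentially the same route as the paper's proof: push the operator norm through the expectation and the sum, bound each logit Hessian uniformly by $\Gamma(\params[\tasknum])$, extract the $\sqrt{\outputdim}$ factor via Cauchy--Schwarz on the loss-gradient coordinates, and transfer the logit-gradient supremum from new-task to old-task data through the $\lipconst[f]\lipconst[g]\,\dist(\datasetx, \datasetx[][\prime])$ Lipschitz chain. The paper formalizes your nearest-old-point pairing with a coupling measure $\pi$ on the two datasets, but this is the same device, and your explicit note that the directed (new-to-old) distance is dominated by the symmetric Hausdorff distance is, if anything, slightly more careful than the paper's treatment.
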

\begin{proof}
We start with the definition of the residual term of the Hessian:
\begin{align}
    \residual(\params[\tasknum+1][0]) = \Ex_{\datax \sim \datasetx[\tasknum+1]} \bigg[ \sum_i^{\outputdim} \grad[\netoutput] \lossfunc(\netoutput[i], \datay) \grad[\params][2] \netoutput[i](\datax;\params[\tasknum]) \bigg] \quad . 
\end{align}
Next we apply linearity and operator norms:
\begin{align}
    \Vert  \residual(\params[\tasknum+1][0]) \Vert_2 &= \bigg \Vert \Ex_{\datax \sim \datasetx[\tasknum+1]} \bigg[ \sum_i^{\outputdim} \grad[\netoutput] \lossfunc(\netoutput[i], \datay) \grad[\params][2] \netoutput[i](\datax;\params[\tasknum]) \bigg] \bigg \Vert\\
    & \leq \Ex_{\datax \sim \datasetx[\tasknum+1]} \sum_i^{\outputdim} \Vert  \grad[\netoutput] \lossfunc(\netoutput[i], \datay) \Vert_2 \Vert \grad[\params][2] \netoutput[i](\datax;\params[\tasknum]) \Vert_2 
\end{align}

and apply a deterministic worst-case bound followed followed by the Cauchy-Schwartz inequality to obtain:

\begin{align}
    \Vert \residual(\params[\tasknum+1][0]) \Vert_2 \leq  \sup_{\datax \in \supp(\datasetx[][\prime])}\Vert\grad[\netoutput] \lossfunc(\netoutput[i], \datay) \Vert_2 \sqrt{\outputdim} \Gamma(\params) 
\end{align}
Let $\pi$ be a coupling measure on $\datasetx$ and $\datasetx[][\prime]$, and denote $\lossgrad(\datax;\params) = \grad[\netoutput] \lossfunc(\linout[\params](\datax),\datay)$. First operating inside the $\supp$, we apply the triangle inequality:
\begin{align}
    \Vert \grad[\netoutput] \lossgrad(\datax;\params) \Vert_2 &\leq \Vert \lossgrad(\datax[][\prime];\params) \Vert_2 + \Vert \lossgrad(\datax;\params) - \lossgrad(\datax[][\prime];\params) \Vert_2 \\
    &\leq \Vert \lossgrad(\datax[][\prime];\params) \Vert_2 + \lipconst[f] \lipconst[g] \dist(\datasetx, \datasetx[][\prime]) \quad \text{Applying Lipschitz assumptions.}
\end{align}
Now, operating inside the $\supp$ w.r.t. $\pi$ we obtain:
\begin{align}
    \sup_{\datax \in \supp(\datasetx)} \Vert \lossgrad(\datax;\params) \Vert_2 = \sup_{(\datax, \datax[][\prime]) \in \supp(\pi)} \Vert \lossgrad(\datax[][\prime];\params) \Vert_2 \leq \sup_{\datax[][\prime] \in \supp(\datasetx[][\prime])} \Vert \lossgrad(\datax[][\prime];\params) \Vert_2 + \lipconst[f] \lipconst[g] \dist(\datasetx, \datasetx[][\prime])
\end{align}
Finally, bounding the RHS of the inequality from above by an arbitrary constant $\bm{B}$ yields the desired result.
\end{proof}

\begin{corollary}
\label{col:gnnrank}
We can use Weyl's inequality to show the existence of an appropriate $L2$ regularizer to guarantee that increasing $\rk(\Hessian[GGN])$ implies increasing the $\rk(\Hessian)$. Suppose ordered eigenvalues of $\Hessian[GGN]$ (i.e., $\eigenval[1](\cdot) \geq ... \eigenval[n](\cdot))$, then by applying Wely's inequality we obtain

\begin{align}
    \vert \eigenval[i](\Hessian[GGN] + \residual - ) \eigenval[i](\residual) \vert &\leq \Vert \residual \Vert_2 \\
    \eigenval[i](\Hessian[GGN] + \residual) &\geq \eigenval[i](\Hessian[GGN]) - \Vert \residual \Vert_2 \\
    \eigenval[i](\Hessian) & \geq \eigenval[i](\Hessian[GGN]) - \Vert \residual \Vert_2 \\
    \eigenval[\min][+](\Hessian) & \geq \eigenval[\min][+](\Hessian[GGN]) - \Vert \residual \Vert_2 .
\end{align}

The $L2$-regularized Hessian becomes:
\begin{align}
    \eigenval[\min][+](\Hessian + 2\I q) & \geq \eigenval[\min][+](\Hessian[GGN]) - \Vert \residual \Vert_2 + 2q \\
    & \geq \eigenval[\min][+](\Hessian[GGN]) - \Vert \bm{B} \Vert_2 + 2q, 
\end{align}
so if $2q > \Vert \residual \Vert_2 - \eigenval[\min][+](\Hessian[GGN])$, then every positive eigenvalue of $\Hessian[GGN]$ remains positive in the $q$-regularized $\Hessian$. 
\end{corollary}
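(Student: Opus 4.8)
The plan is to prove Corollary~\ref{col:gnnrank} by combining Weyl's eigenvalue perturbation inequality with the residual bound $\bm{B}_\tasknum$ established in Theorem~\ref{gnnrank}. The starting point is the exact Gauss--Newton decomposition $\Hessian = \Hessian[GGN] + \residual$, in which both summands are symmetric. Weyl's inequality for symmetric matrices then yields, index by index, $\vert \eigenval[i](\Hessian) - \eigenval[i](\Hessian[GGN]) \vert \le \Vert \residual \Vert_2$, and in particular the one-sided bound $\eigenval[i](\Hessian) \ge \eigenval[i](\Hessian[GGN]) - \Vert \residual \Vert_2$. Substituting the operator-norm bound $\Vert \residual(\params[\tasknum+1][0]) \Vert_2 \le \bm{B}_\tasknum$ converts this into a deterministic, data-independent lower bound on each eigenvalue of the full Hessian.

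Next I would incorporate the $L2$ penalty. Because the Hessian of $q\Vert\params\Vert_2^2$ is exactly $2q\I$, adding this regularizer shifts every eigenvalue of the Hessian upward by $2q$ while leaving eigenvectors unchanged, so $\eigenval[i](\Hessian + 2q\I) = \eigenval[i](\Hessian) + 2q$. Chaining this with the Weyl bound gives $\eigenval[i](\Hessian + 2q\I) \ge \eigenval[i](\Hessian[GGN]) - \bm{B}_\tasknum + 2q$ for every index $i$.

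The core of the argument is a counting step that turns this eigenvalue bound into a rank statement. Since $\Hessian[GGN] = \Jacobian[][\top]\Hessian[z]\lossfunc\,\Jacobian$ is positive semidefinite for both mean-squared-error and softmax-cross-entropy losses (as $\Hessian[z]\lossfunc$ is PSD for these convex losses), its rank $r \doteq \rk(\Hessian[GGN])$ equals the number of strictly positive eigenvalues, and the smallest of these, $\eigenval[\min][+](\Hessian[GGN])$, lower-bounds all of them. For each index $i \le r$ I would therefore write $\eigenval[i](\Hessian + 2q\I) \ge \eigenval[\min][+](\Hessian[GGN]) - \bm{B}_\tasknum + 2q$, and then choose the regularization strength so that $2q > \bm{B}_\tasknum - \eigenval[\min][+](\Hessian[GGN])$. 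This makes the right-hand side strictly positive, so the top $r$ eigenvalues of the regularized Hessian are all positive, yielding $\rk(\Hessian + 2q\I) \ge r = \rk(\Hessian[GGN])$. Hence any increase in $\rk(\Hessian[GGN])$ propagates to a matching lower bound on the regularized Hessian rank, which is exactly the claim.

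The main obstacle I anticipate is the counting/ordering step rather than the perturbation bound itself: Weyl's inequality controls the $i$-th \emph{ordered} eigenvalue, but to read off a rank one must ensure that the $r$ directions that were positive in $\Hessian[GGN]$ remain above zero \emph{simultaneously}. Using $\eigenval[\min][+](\Hessian[GGN])$ as a uniform floor across all positive indices is what makes this clean, since the regularizer's uniform upward shift $2q$ then dominates the worst-case perturbation $\bm{B}_\tasknum$ for every such index at once. A secondary point to handle carefully is the positive semidefiniteness of $\Hessian[GGN]$, which is what licenses identifying $\rk(\Hessian[GGN])$ with the count of strictly positive eigenvalues and thereby makes the chain of inequalities a statement about rank rather than merely about individual curvature directions.
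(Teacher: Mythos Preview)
Your proposal is correct and follows essentially the same route as the paper: decompose $\Hessian = \Hessian[GGN] + \residual$, apply Weyl's inequality index-by-index, use the operator-norm bound $\Vert \residual \Vert_2 \le \bm{B}_\tasknum$ from Theorem~\ref{gnnrank}, and then shift all eigenvalues by $2q$ via the $L2$ term to force the top $\rk(\Hessian[GGN])$ eigenvalues of the regularized Hessian to stay positive. Your explicit justification that $\Hessian[GGN]$ is PSD (so that rank equals the count of strictly positive eigenvalues) and your uniform-floor counting argument via $\eigenval[\min][+](\Hessian[GGN])$ are cleaner than the paper's sketch, but the underlying approach is identical.
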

\begin{remark}
\Cref{gnnrank} and \Cref{col:gnnrank} justify why $L2$ is needed alongside $ER$ regularization. Under softmax-cross-entropy loss and mean-squared-error loss with bounded targets, $\lossgrad(\datax;\params)$ is also bounded which ensures the theorem is well-posed. The theorem also requires $\Gamma(\params)$ to be uniformly bounded which can be achieved by techniques like spectral regularization (although we do use it in our experiments) \citep{yoshida2017spectral}. In practice, we find that small choices of $L2$ regularization are needed which we report in \cref{table:permuted_mnist_all}, \cref{table:imagenet_all}, and \cref{table:cifar_combined}.
\end{remark}

\clearpage
\section{KFAC Error}
\label{appx:kfac}
For a given layer, $\weights[][l] \in \R^{{d_{in} \times d_{out}}}$, with activations into the layer $\layerin[l] \in \R^{{d_{in}}}$ and gradient outputs $\layergrad[l] \in \R^{{d_{out}}}$. The respective covariances are $\layerin[l]\layerin[l][][\top] \in \R^{{d_{in} \times d_{in}}}$ and $\layergrad[l]\layergrad[l][][\top] \in \R^{{d_{out} \times d_{out}}}$. The resulting KFAC block is of size $(d_{out}d_{in}) \times (d_{out}d_{in})$. The per-entry error, on the $n$th sample, can be obtained by indexing over Kronecker matrix pairs with row-index $(i, \alpha)$ and column index $(j, \beta)$. We use $\cdot^{[l]}_{(i,*)}$ to denote the $i$th row of $(\cdot)$ in layer $l$, and $\cdot^{[l]}_{(*,j)}$ for the $j$th column. Flattening over the Kronecker product, we obtain the following for a given entry in layer $l$:
\begin{align}
    (\Hessian[GGN])^{[l]}_{([i,\alpha],[j,\beta])} =& \frac{1}{\numsamples} \sum^\numsamples_{\samples=1} \sum_{\classes=1}^{\outputdim}  [\layerin[l][\samples]]_{(i,*)}[\layerin[l][\samples]]_{(*,j)} [\layergrad[l][n,o]]_{(\alpha, *)}[\layergrad[l][n,o]]_{(*,\beta)}= \overline{{ [\layerin[l]]_{(i,*)}[\layerin[l]]_{(*,j)} [\layergrad[l]]_{(\alpha, *)}[\layergrad[l]]_{(*,\beta)}}}\\
    (\estHessian[GGN])^{[l]}_{([i,\alpha],[j,\beta])} =& \bigg( \frac{1}{\numsamples} \sum^\numsamples_{m=1} [\layerin[l][n]]_{(i,*)}[\layerin[l][n]]_{(*,j)}\bigg) \cdot \bigg( \frac{1}{\numsamples} \sum^\numsamples_{\samples=1} \sum_{\classes=1}^{\outputdim} [\layergrad[l][m,o]]_{(\alpha, *)}[\layergrad[l][m,o]]_{(*,\beta)}\bigg) = \overline{{ [\layerin[l]]_{(i,*)}[\layerin[l]]_{(*,j)}}} \cdot \overline{{[\layergrad[l]]_{(\alpha, *)}[\layergrad[l]]_{(*,\beta)}}},
\end{align}

where we use $\overline{\cdot}$ to denote the empirical sample average. In this way, we can compute the error as the difference between the average of products and the product of averages; for a fixed index quadruple $(i,j,\alpha, \beta)$, the KFAC error of the $((i,\alpha),(j,\beta))$th-entry is:
\begin{align}
    \Delta_{[i,\alpha],[j,\beta]} = (\Hessian[GGN] - \estHessian[GGN])_{[i,\alpha],[j,\beta]} = \overline{{ [\layerin[l]]_{i}[\layerin[l]]_{j} [\layergrad[l]]_{\alpha}[\layergrad[l]]_{\beta}}} - \overline{{ [\layerin[l]]_{i}[\layerin[l]]_{j}}} \cdot \overline{{[\layergrad[l]]_{\alpha}[\layergrad[l]]_{\beta}}}
\end{align}

\clearpage
\section{Algorithm}
\label{appendix:algorithm}
\subsection{Baselines}
\textbf{Backpropagation (BP)}: We optimize the objective with stochastic gradient descent (SGD) on cross-entropy loss over the current task. Let $\theta$ denote all trainable parameters and $\mathcal{D}_t$ the data for task $\tasknum$. The objective is
\[
\min_{\theta}\; \mathbb{E}_{(x,y)\sim \mathcal{D}_t}\big[L(F_\theta(x), y)\big].
\] \\
\textbf{Vanilla Effective Rank (ER)}: In addition to BP, we collect the output features of each dense layer (excluding convolutional layers) over a fixed number of steps (er-batch in \Cref{tab:pmnist_hyperparams}). From these stacked features, we compute the effective rank \citep{roy2007effective} for each layer and sum across all layers. Then we maximize the effective rank of the network representations. 
\[
\mathcal{L}_{\mathrm{ER}}
= -\frac{1}{L}\sum_{\ell=1}^L \mathrm{ER}(\preac[\layer]),
\qquad
\mathrm{ER}(\preac[\layer]) = \exp\!\Big(-\sum_{i=1}^{d_\ell} p_i^{[\ell]} \log p_i^{[\ell]}\Big),
\]
where $\preac[\layer] \in \mathbb{R}^{N\times d_\ell}$ is the stacked feature matrix for layer $\ell$, 
$s_i(\preac[\layer])$ are its singular values, 
and 
\[
p_i^{[\ell]} = \frac{s_i(\preac[\layer])}{\sum_j s_j(\preac[\layer])}.
\]
\textbf{L2 normalization (L2)}: We add weight decay to BP. Now the objective becomes:
\[
\min_{\theta}\; \mathbb{E}_{(x,y)\sim \mathcal{D}_t}\big[L(F_\theta(x), y)\big] \;+\; \lambda_w \|\theta\|_2^2,
\]
where $\lambda_w$ is the weight-decay coefficient (weight-decay in \Cref{tab:pmnist_hyperparams}).\\
\textbf{Continual Backpropagation (CBP)}: We follow the architecture in \cite{dohare_loss_2024}.\\
\textbf{Layer-normalization (LayerNorm-L2)}: We follow the architecture in \cite{lyle2024disentangling}.\\
\textbf{Spectral}: We follow the architecture in \cite{lewandowski2024learning}.

\subsection{Effective Rank with $L2$ normalization ($L2$-ER)}

The pseudocode for our implementation of $L2$-ER is shown in \Cref{alg:erank-cl}.
$L2$-ER augments the standard task objective with two additional regularizers: (1) an effective rank penalty and (2) an $L2$ weight decay term. 
At each step, the task loss $L_{\mathrm{task}}$ together with the weight decay term is minimized via standard backpropagation.
Simultaneously, the hidden features $\preac[\layer]$ 
are collected over a fixed window L of updates.\amy{what is this window?}
Every L steps, these stacked features are used to compute the effective rank \citep{roy2007effective} of the representation at each layer. 
The effective rank losses are averaged across layers.
Note that a gradient descent step is taken on this loss only every L steps.

\newcommand{\updateint}{U}
\begin{algorithm}[H]
\caption{Continual Learning with Effective‐Rank Regularization ($L2$-ER)}
\label{alg:erank-cl}
\begin{algorithmic}
  \STATE \textbf{Input:} Task datasets $\{\{(\datax[\tasknum,i], \datay[\tasknum,i])\}_{i=0}^{N-1}\}_{\tasknum=0}^{\numtasks-1}$; model $\linout[\params]$; learning rates $\alpha$ (task), $\beta$ (ER); weight decay $\lambda$; ER update interval $\updateint$ (in steps)
  \STATE \textbf{State:} Per-layer feature buffers $\{\mathcal{B}^\ell\}_{\ell=1}^{\numlayers}$ with capacity $\updateint$
  \FOR{$\tasknum = 0$ \textbf{to} $\numtasks-1$}
    \FOR{$i = 0, \ldots, N-1$}
      \STATE $(\estdatay, \{\preac[\layer]\}_{\ell=1}^{\numlayers}) \gets \linout[\params](\datax[\tasknum,i])$ \COMMENT{$\estdatay$ is the model prediction}
      \STATE $\forall \ell:\ \mathcal{B}^\ell \gets \text{enqueue}(\mathcal{B}^\ell, \preac[\layer])$; \ \text{if } $|\mathcal{B}^\ell|>\updateint$ then \text{drop oldest}
      \STATE $L_{\mathrm{task}} \gets \mathrm{Loss}(\estdatay, \datay[\tasknum,i]) + \lambda \|\params\|_2^2$
      \STATE $\params \gets \params - \alpha \nabla_\params L_{\mathrm{task}}$
      \IF{$(i+1) \bmod \updateint = 0$}
        \STATE $L_{\mathrm{erank}} \gets -\frac{1}{\numlayers} \sum_{\ell=1}^{\numlayers} \mathrm{ER}\!\big(\mathrm{SVD}(\mathrm{Stack}(\mathcal{B}^\ell))\big)$ \COMMENT{\citep{roy2007effective}}
        \STATE $\params \gets \params - \beta \nabla_\params L_{\mathrm{erank}}$
        \STATE $\mathcal{B}^\ell \gets \emptyset$
      \ENDIF
    \ENDFOR
  \ENDFOR
  \STATE \textbf{Return:} $\params$
\end{algorithmic}
\end{algorithm}

\newpage
\clearpage
\newpage
\section{Dead Neurons and Effective Rank}
\label{appendix:dead_neurons_effective_rank}
\begin{figure*}[htbp]
  \centering
  \includegraphics[width=\linewidth]{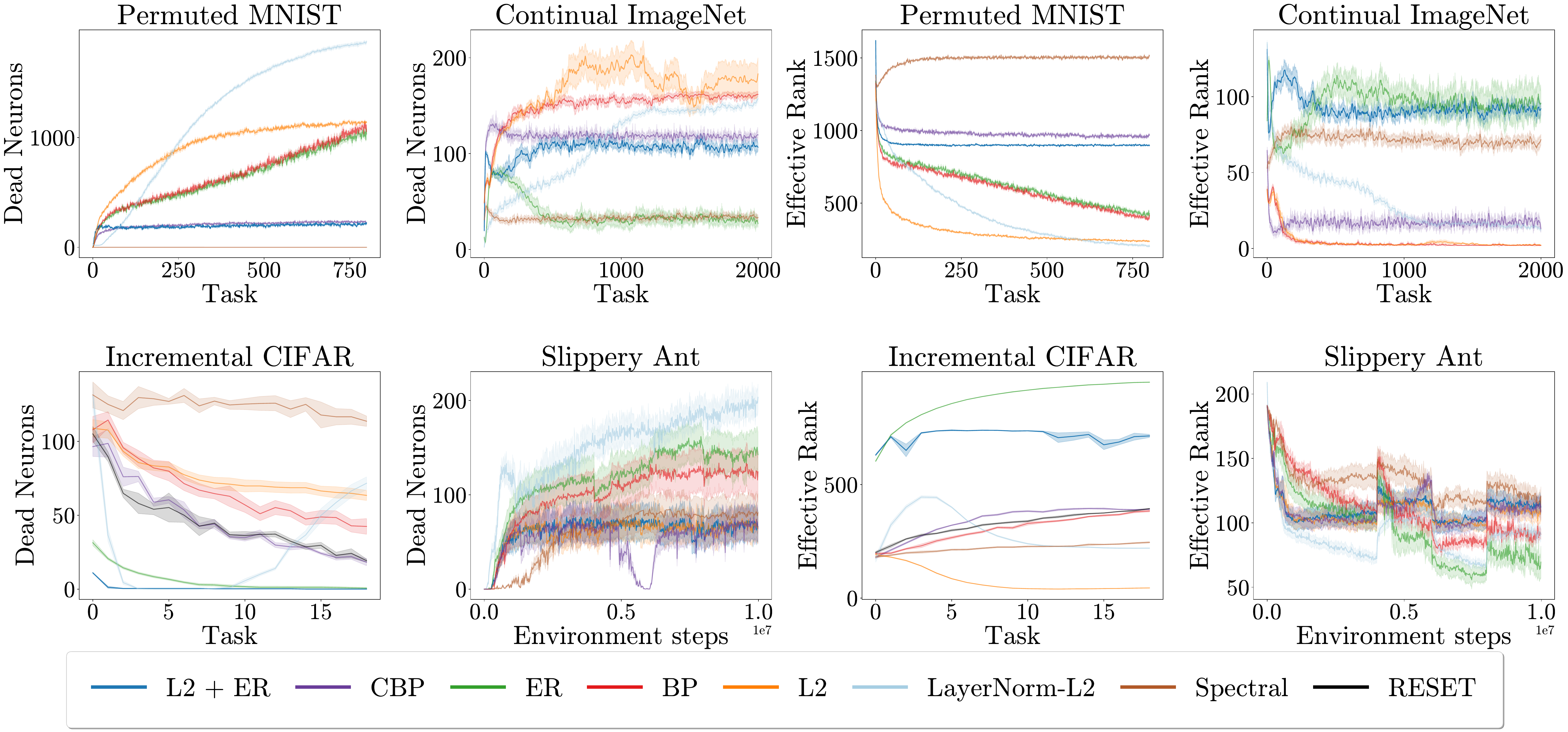}

  \caption{Number of dead neurons (left) and effective rank (right) corresponding to \Cref{fig:performance}.} 
  \label{fig:dead_vs_rank}
\end{figure*}

Here, we present number of dead neurons and effective rank corresponding to \Cref{fig:performance}. An important observation to note here is that dead neurons in Incremental Cifar is decreasing rather than increasing. This effect arises due to the following two reasons: First, at the beginning of training, the environment contains only 5 classes, which gradually increase to 100. As more classes are introduced, the evaluation set becomes much larger, increasing the likelihood of encountering samples that activate a given neuron (i.e., produce nonzero outputs). Second, to accommodate the changing number of classes, we mask the outputs of the final layer to match the number of active classes in each task. In the early tasks, this masking leads to a sharp rise in the number of dead neurons, since the network tends to overfit to the small set of available classes. As more classes are added, this effect diminishes. Therefore, we also measured the number of dead neurons and effective rank in RESET for comparison. Learning curves that lie above RESET indicate an increase in dead neurons due to loss of plasticity, whereas curves below suggest relatively fewer dead neurons.

\newpage
\section{Epsilon Hessian Rank}
\label{appendix:epsilon_rank}
\begin{figure}[htbp]
  \centering
  \includegraphics[width=0.8\linewidth]{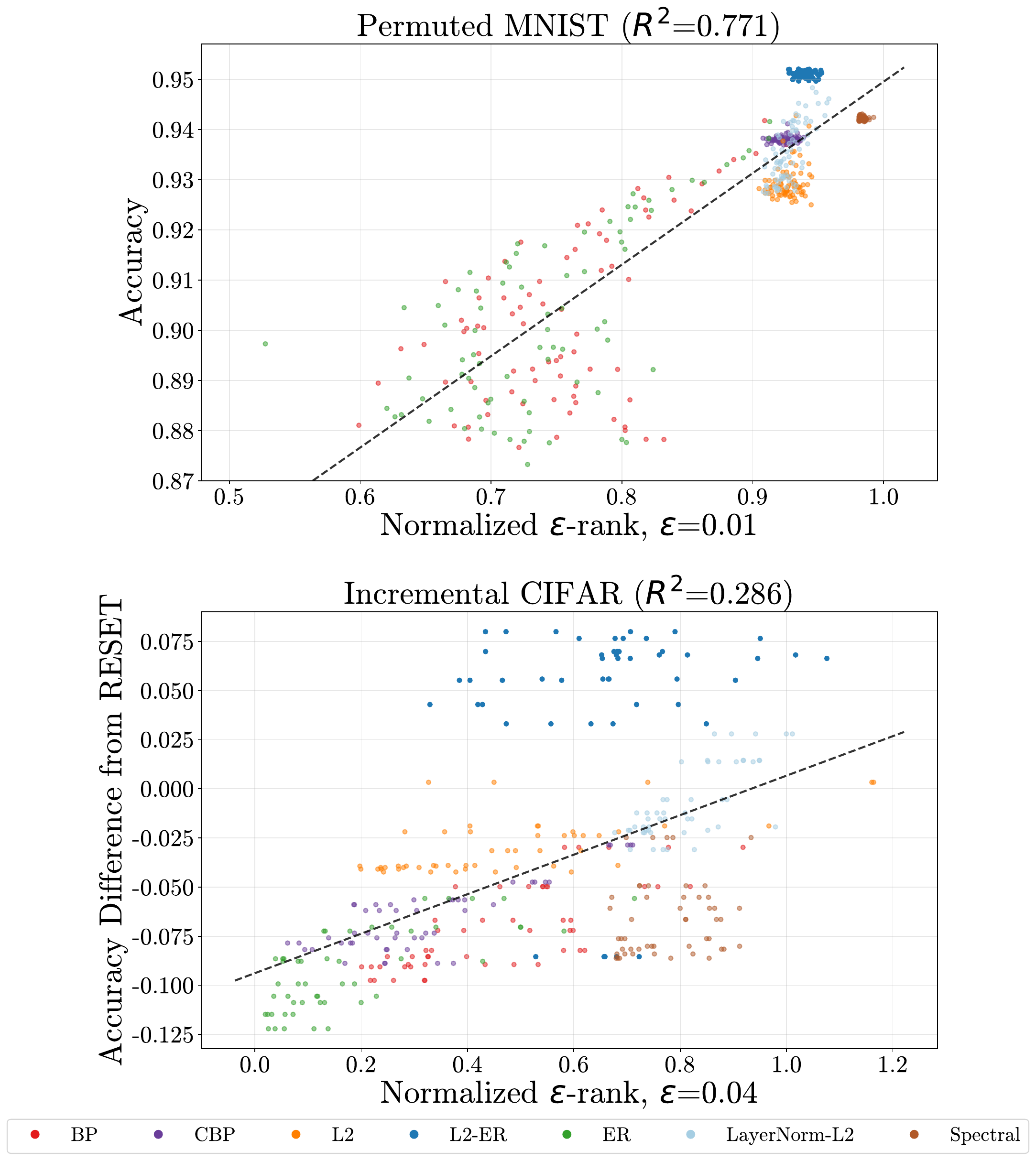}
  \caption{Curvature vs.\ Accuracy on Permuted MNIST (Left) and First 10 Task Incremental CIFAR (Right). A linear fit (dotted line) highlights the positive association between curvature and accuracy.}
  \label{fig:other_scatter}
\end{figure}
\subsection{Measuring the Hessian Spectral Density}
To empirically demonstrate the relationship between dead \relus and the rank of the Hessian, we estimate the Hessian spectrum over continual learning tasks \citep{ghorbani2019investigation}. The spectral density is defined as $\eigdensity(\realpoint) = \frac{1}{P} \sum_{i=1}^{P} \delta(\realpoint - \eigenval[i])$ where $\delta$ is a Dirac delta operator. Since the naive approach to estimating the density would involve calculating every eigenvalue, we turn to a Gaussian relaxation \citep{ghorbani2019investigation}:
\begin{align*}
\eigdensity[\std](\realpoint) = \frac{1}{P} \sum_{i=1}^{P} \gaussian(\eigenval[i]; \realpoint, \std[][2]) 
\end{align*}
where $\gaussian(\eigenval[i]; \realpoint, \std[][2]) = \frac{1}{\std \sqrt{2\pi}} \exp(-\frac{(\realpoint = \eigenval)^{2}}{2\std[][2]})$ and $\std[][2]$ is the variance. When $\std[][2]$ is small, $\eigdensity[\std]$ provides a tight estimate of the spectral density. Since materializing the full Hessian is prohibitively expensive, we estimate the density with the stochastic Lanczos quadrature algorithm \citep{golub1969calculation,ghorbani2019investigation}, which takes advantage of the fact that accessing Hessian-vector-products (HVP) is a computationally efficient operation \citep{pearlmutter1994fast}. Given $\Hessian$ is diagonalized and $\gaussian$ has a closed-form equation, we can define $\gaussian(\Hessian) = \Qmatrix \gaussian(\eigenDiag) \Qmatrix[][\top]$ where $\gaussian(\eigenDiag)$ acts on the diagonal. Now we estimate the spectrum of Hessian through the HVP with $\vec \sim N(0, \frac{1}{P} \I_{P \times P})$ which gives:
\begin{align*}
    \eigdensity[\std] = \frac{1}{P} \text{tr}\bigg(\gaussian(\Hessian, \realpoint, \std[][2]) \bigg) = \Ex[\vec[][\top]\gaussian(\Hessian, \realpoint, \std[][2])\vec]
\end{align*}

In practice, each $\eigdensity[\std][\vec]$ is approximated by $m$-steps of the Lanczos algorithm resulting in a $m \times m$ tridiagonal matrix with $m$ locations-weight pairs $(\ell_j,\omega_j)$ so that: 

\begin{align*}
    \eigdensity[\std][\vec](\realpoint) \approx \sum_{j=1}^m \omega_j \gaussian(\ell_j; \realpoint, \std[][2]) \doteq \hat{ \eigdensity}^{\vec}(\realpoint)
\end{align*}
Moreover, $\eigdensity[\std][\vec](\cdot)$ is an unbiased estimator of $\eigdensity[\std](\cdot)$ and $\hat{ \eigdensity}^{\vec}(\realpoint)$ converges exponentially fast around its expectation over samples of $\vec$ \citep{ghorbani2019investigation}[Claim 2.3] resulting in a computationally efficient and accurate estimation of the spectral density, even for large neural networks. 

Throughout our experiments, we measure the Hessian eigen-spectrum at the beginning and at the end of each new task. Our results show that the spectrum of standard back-propagation narrows as the task number grows. In fact, a complete loss of learning corresponds to a complete collapse of the spectrum. Furthermore, we show that loss of plasticity mitigation like continual backpropagation and our own $L2$-ER method preserve the spectrum.

In the following sections, we present the details of each environment, their corresponding hyperparameter sweeps and selected best hyperparameters, followed by the analysis of the Hessian spectrum.

\subsection{Epsilon Hessian Rank vs Accuracy}

\begin{figure}[htbp]
  \centering
  \includegraphics[width=0.5\linewidth]{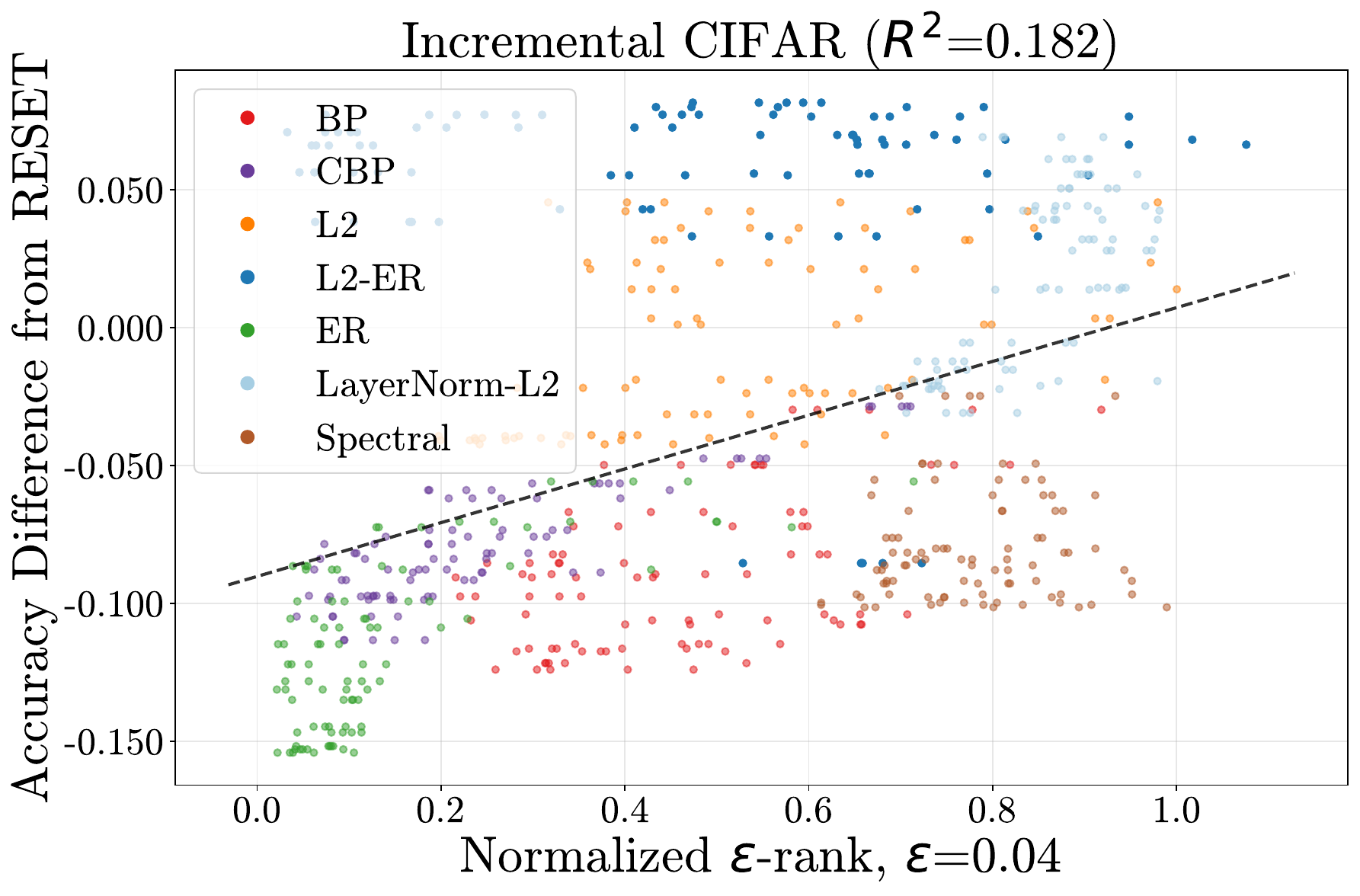}
  \caption{Curvature vs.\ Accuracy on Full 20 Task Incremental CIFAR. A linear fit (dotted line) highlights the positive association between curvature and accuracy.}
  \label{fig:cifar_full_20_scatter}
\end{figure}

In addition to \Cref{fig:scatter}, we provide the epsilon rank of the other two supervised learning environments in \Cref{fig:other_scatter}. Continual ImageNet and Permuted MNIST's data points on the graph are calculated by an average across 5 seeds.

Putting all the Incremental Cifar tasks into one plot does not yield as clear a positive relationship as Permuted MNIST or Continual Imagenet. All the tasks in Permuted MNIST or Continual Imagenet are about the same level of difficulties for $\tasknum \in \{1, \cdots, \numtasks\}$, which gives us the ability to easily measure successful training through task accuracies. In Incremental Cifar, each task $\tasknum$ is composed of classifying $5*\tasknum$ classes of images using the same computational budget, which makes it a challenge to isolate unsuccessful training due to loss of plasticity from increasing task difficulties. We report the performance difference between algorithms and a freshly initialized network as a measure of successful training. Furthermore, neural network tends to find a lower-rank solution regardless of initialization \cite{hu2022lora}. Since we always report the $\epsilon$-Hessian rank of the Full databatch, the percentage of data the neural network has already been trained on becomes larger and larger as $\tasknum$ grows. For example, at task 2 initialization, we trained on 5 classes and acquired a low rank representation, then 5 classes are added and the $\epsilon$-Hessian rank is high; while at task 20 initialization, we trained on 19 classes and are evaluating on 20, this is basically the same as the eigenspectrum at task 19 convergence, which is a low rank solution. In short, due to the non-uniform property of the tasks, $\epsilon$-Hessian rank is not an accurate measure of Spectral Collapse in Incremental Cifar. When we group the first ten tasks (\cref{fig:other_scatter}), we can see that the positive relationship is clearer since the task difficulties are more similar.

\newpage
\section{Permuted MNIST}
\label{appendix:permuted_mnist}
We now detail environments and architectural details in all experiments. All algorithms are written in JAX \citep{jax2018github}. Although each environment is independent, our training procedure is designed to be easily generalizable. In the following sections, we provide detailed descriptions of each environment.

Permuted MNIST \citep{dohare_loss_2024} is a  variant of the original MNIST dataset \citep{lecun1998mnist} where the pixels are permuted randomly in the same way for each image in the original dataset. Each permutation defines a new task for the learner. In total, we evaluate on 800 tasks, each of which is a 10-class classification problem.

\subsection{Network Architecture}
We employ a standard multilayer perceptron (MLP) with three hidden layers of hidden size 1000 each followed by a ReLU activation. All weights are initialized with Kaiming uniform. The architecture can be summarized as follows:
\begin{verbatim}
MLP(
  Sequential(
    (0): Dense(init=kaiming_uniform, out_dims=1000, bias=True)
    (1): ReLU()
    (2): Dense(init=kaiming_uniform, out_dims=1000, bias=True)
    (3): ReLU()
    (4): Dense(init=kaiming_uniform, out_dims=1000, bias=True)
    (5): ReLU()
    (6): Dense(init=kaiming_uniform, out_dims=10, bias=True)
  )
)
\end{verbatim}

\subsection{Hyperparameters}
In \Cref{tab:pmnist_hyperparams}, we present the default hyperparam of our experiments. Unless otherwise specified, experiments use the default hyperparameters.
\begin{table}[htbp]
    \centering
    \caption{permuted MNIST default hyperparameters}
    \begin{tabular}{p{4cm} p{3.5cm} p{7.5cm}}
        \hline
        \textbf{Hyperparam Name} & \textbf{Default} & \textbf{Description} \\
        \hline
        \texttt{study\_name} & \texttt{test} & experiment name \\
        \texttt{seed} & 2024 & base random seed \\
        \texttt{debug} & False & true to enable debug mode \\
        \texttt{platform} & \texttt{gpu} & \{\texttt{cpu}, \texttt{gpu}\} \\
        \texttt{n\_seeds} & 1 & number of seeds \\
        \texttt{env} & \texttt{permuted\_mnist} & environment name \\
        \texttt{agent} & \texttt{l2\_er} & \{\texttt{er}, \texttt{bp}, \texttt{l2}, \texttt{cbp}, \texttt{l2\_er}, \texttt{layernorm_l2}, \texttt{spectral}\} \\
        \texttt{activation} & \texttt{relu} & activation options: \{\texttt{relu}, \texttt{tanh}\} \\
        \texttt{lr} & {[}0.01{]} & learning rate(s) \\
        \texttt{optimizer} & \texttt{sgd} & \{\texttt{adam}, \texttt{sgd}\} \\
        \texttt{weight\_decay} & 0.001 & $L2$ weight decay \\
        \texttt{num\_features} & 1000 & hidden size for the mlp \\
        \texttt{change\_after} & $10 \times 6000$ & steps between task switches \\
        \texttt{to\_perturb} & False & whether to perturb the input data \\
        \texttt{perturb\_scale} & $1\times 10^{-5}$ & magnitude of input perturbation \\
        \texttt{num\_hidden\_layers} & 3 & number of hidden layers in the mlp \\
        \texttt{mini\_batch\_size} & 1 & minibatch size \\
        \texttt{no\_anneal\_lr} & True & if true, do not anneal the learning rate \\
        \texttt{max\_grad\_norm} & 0.5 & gradient clipping threshold \\
        \texttt{num\_tasks} & 800 & number of tasks used in training/eval \\
        \hline
        \multicolumn{3}{l}{\textbf{effective rank}} \\
        \hline
        \texttt{er\_lr} & {[}0.01{]} & ER learning rate \\
        \texttt{er\_batch} & 100 & batch size for er computation \\
        \texttt{er\_step} & 1 & ER update frequency\\
        \hline
        \multicolumn{3}{l}{\textbf{evaluation}} \\
        \hline
        \texttt{evaluate} & True & evaluate after each task \\
        \texttt{evaluate\_previous} & False & evaluate on previous task\\
        \texttt{eval\_size} & 2000 & number of evaluate samples per task \\
        \texttt{compute\_hessian} & False & whether to compute hessian spectrum \\
        \texttt{compute\_hessian\_size} & 2000 & samples used for hessian computation \\
        \texttt{compute\_hessian\_interval} & 1 & interval in tasks between hessian runs \\
        \hline
        \multicolumn{3}{l}{\textbf{continual backpropagation}} \\
        \hline
        \texttt{cont\_backprop} & False & enable CBP \\
        \texttt{replacement\_rate} & $1\times 10^{-6}$ & CBP replacement probability per step \\
        \texttt{decay\_rate} & 0.99 & exponential decay for CBP statistics \\
        \texttt{maturity\_threshold} & 100 & steps before a unit is considered “mature” \\
        \hline
        \multicolumn{3}{l}{\textbf{Spectral Regularizer}} \\
        \hline
        \texttt{k} & 2 & power used in $(\sigma_{\max}^k - \texttt{target})^2$ \\
        \texttt{target} & 2.0 & target value for the largest singular value \\
        \texttt{spectral\_strength} & 0.1 & coefficient multiplying the spectral penalty \\
        \texttt{num\_iter} & 10 & number of power-iteration steps to estimate $\sigma_{\max}$ \\
        \hline
    \end{tabular}
    \vspace{0.5em}
    \label{tab:pmnist_hyperparams}
\end{table}

\subsection{Experiments}
Prior work \citep{dohare_loss_2024}, along with our own experiments, shows that the learning rate is a critical factor in continual learning: using a smaller learning rate consistently yields only marginal differences in performance. To better highlight the phenomenon of loss of plasticity, we fix the learning rate to $1 \times 10^{-2}$ and sweep over the remaining hyperparameters in \Cref{table:permuted_mnist_all}. We report the results in \Cref{fig:dead_vs_rank} and the selected best hyperparameters in \Cref{table:permuted_mnist_all}. 

\begin{table*}[htbp]
    \centering
    \caption{Hyperparameter sweeps and best values for Permuted MNIST across all algorithms under a fixed learning rate of $1\times10^{-2}$.}
    \renewcommand{\arraystretch}{1.12}
    \begin{tabular}{l l l l}
        \hline
        \textbf{Algorithm} & \textbf{Hyperparameter} & \textbf{Sweep Hyperparameters} & \textbf{Best} \\
        \hline
        BP   & Learning rate     & $\{1\times10^{-2}\}$ & $1\times10^{-2}$ \\[0.2em]

        ER   & Learning rate     & $\{1\times10^{-2}\}$ & $1\times10^{-2}$ \\
             & Effective rank lr & $\{1\times10^{-2},\,1\times10^{-3},\,1\times10^{-4}\}$ & $1\times10^{-3}$ \\[0.2em]

        $L2$-ER & Learning rate     & $\{1\times10^{-2}\}$ & $1\times10^{-2}$ \\
             & Effective rank lr & $\{1\times10^{-2},\,1\times10^{-3},\,1\times10^{-4}\}$ & $1\times10^{-3}$ \\
             & Weight decay      & $\{1\times10^{-3},\,1\times10^{-4},\,1\times10^{-5}\}$ & $1\times10^{-3}$ \\[0.2em]

        CBP  & Learning rate     & $\{1\times10^{-2}\}$ & $1\times10^{-2}$ \\
             & Replacement rate  & $\{1\times10^{-4},\,1\times10^{-5},\,1\times10^{-6}\}$ & $1\times10^{-6}$ \\[0.2em]

        $L2$   & Learning rate     & $\{1\times10^{-2}\}$ & $1\times10^{-2}$ \\
             & Weight decay      & $\{1\times10^{-3},\,1\times10^{-4},\,1\times10^{-5}\}$ & $1\times10^{-3}$ \\

        LayerNorm-L2   & Learning rate     & $\{1\times10^{-2}\}$ & $1\times10^{-2}$ \\
             & Weight decay      & $\{1\times10^{-2},\,1\times10^{-3},\,1\times10^{-4}\}$ & $1\times10^{-5}$ \\[0.2em]

        Spectral Regularizer   & Learning rate     & $\{1\times10^{-2}\}$ & $1\times10^{-2}$ \\
             & Spectral Strength & $\{1\times10^{-2},\,1\times10^{-3},\,1\times10^{-4}\}$ & $1\times10^{-3}$ \\[0.2em]
        \hline
    \end{tabular}
    \label{table:permuted_mnist_all}
\end{table*}

\subsection{Permuted MNIST Hessian Spectrum}
We use the best hyperparameters in \Cref{table:permuted_mnist_all} and rerun it with 5 seeds to plot the hessian spectrum during our training. We calculate the approximated hessian matrix every fixed number of tasks (compute-hessian-interval in \Cref{tab:pmnist_hyperparams}). The corresponding performance is shown in \Cref{fig:performance} and Hessian spectrum of seed 2025 is shown in \Cref{fig:permuted_mnist_hessian}, where the orange curve corresponds to the Hessian spectrum on the test set and the blue curve corresponds to the training set. Since plotting all tasks is impractical, we present only a subset of representative tasks for BP, L2, ER, CBP, L2-ER. 

Comparing \Cref{fig:performance} with \Cref{fig:permuted_mnist_hessian}, we observe that algorithms which eventually lose plasticity exhibit a sparse Hessian spectrum. In contrast, algorithms that maintain plasticity preserve a rich and dense spectrum, proving our claim that spectral collapse is strongly correlated with the loss of plasticity. Furthermore, even subtle reductions in plasticity are reflected in the spectrum: for instance, $L2$ shows a slight decline of about 1\% in accuracy over training. This small change is captured by the eigenspectrum, as shown in \Cref{fig:permuted_mnist_hessian}, where the range of eigenvalues for $L2$ is narrower than that of the other two algorithms that preserve plasticity.

\begin{figure}[htbp]
  \centering
  
  \begin{minipage}{\textwidth}
    \centering
    \textbf{Algorithms that \emph{lose plasticity}}
    
    \begin{subfigure}[b]{0.25\textwidth}
      \includegraphics[width=\textwidth]{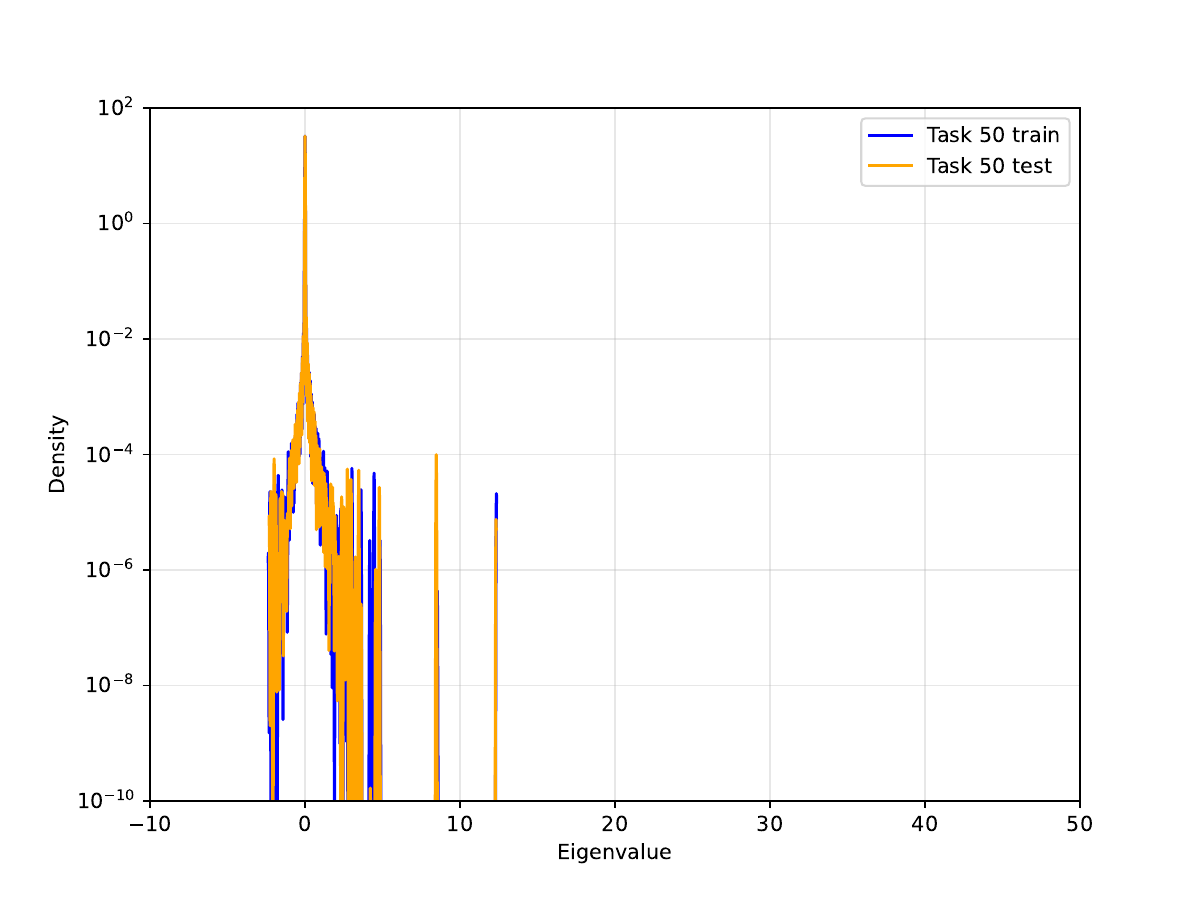}
      \caption{ER: task 50}
    \end{subfigure}
    \hfill
    \begin{subfigure}[b]{0.25\textwidth}
      \includegraphics[width=\textwidth]{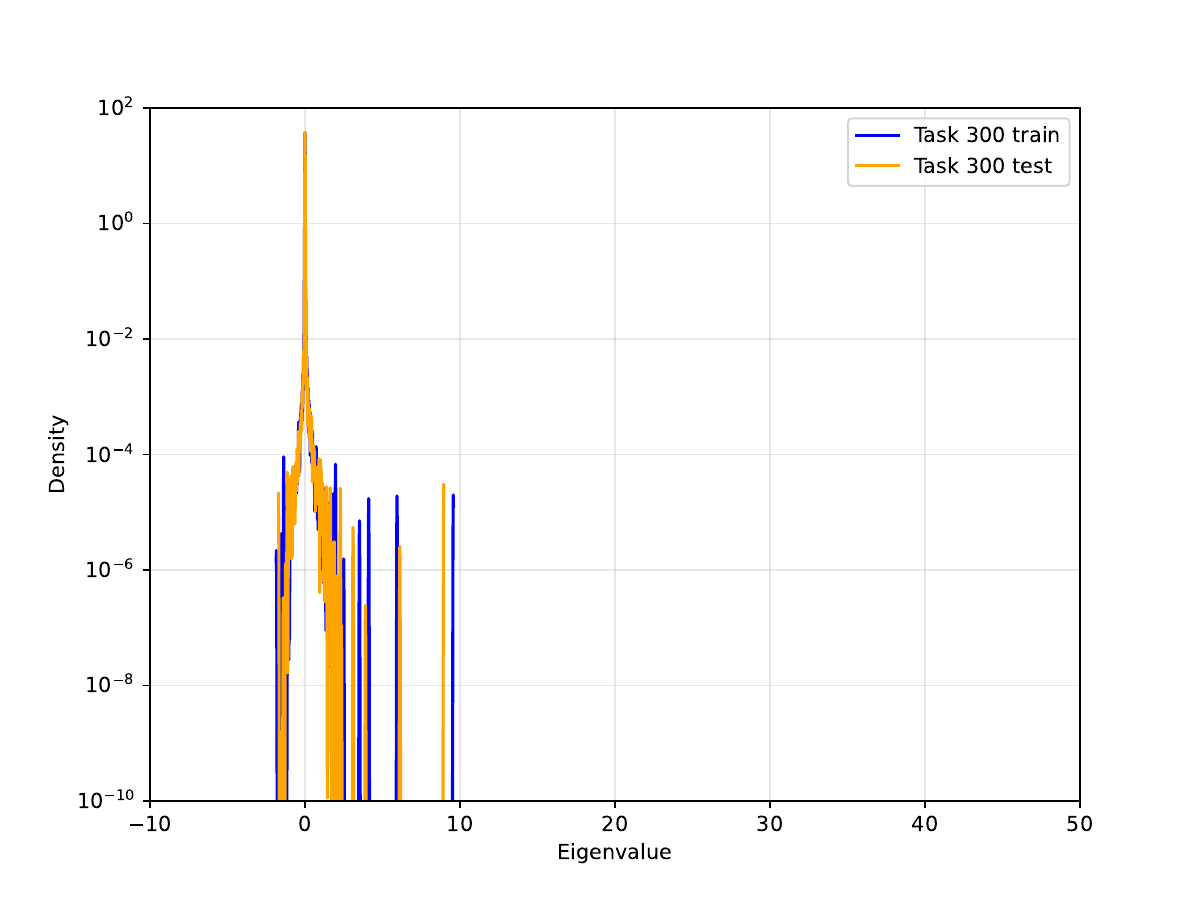}
      \caption{ER: task 300}
    \end{subfigure}
    \hfill
    \begin{subfigure}[b]{0.25\textwidth}
      \includegraphics[width=\textwidth]{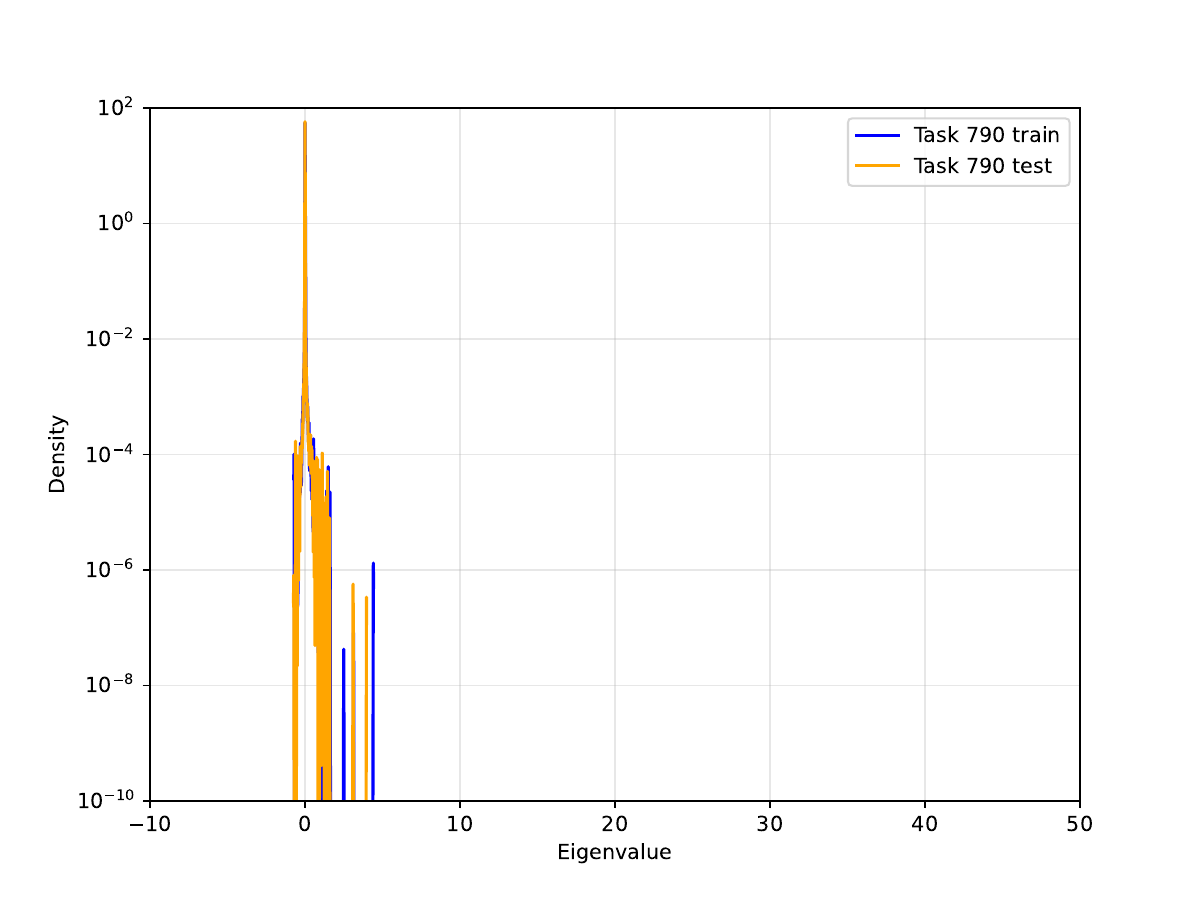}
      \caption{ER: task 790}
    \end{subfigure}

    \begin{subfigure}[b]{0.25\textwidth}
      \includegraphics[width=\textwidth]{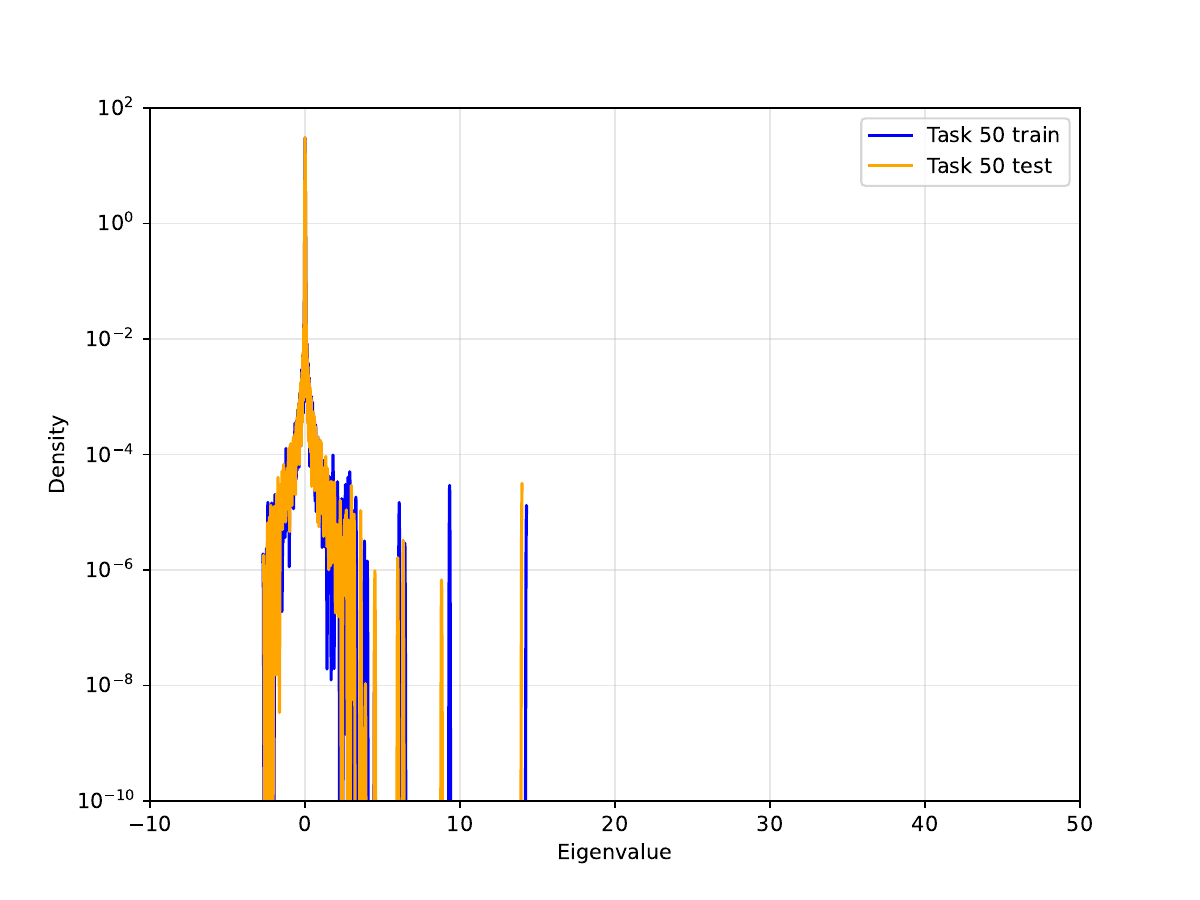}
      \caption{BP: task 50}
    \end{subfigure}
    \hfill
    \begin{subfigure}[b]{0.25\textwidth}
      \includegraphics[width=\textwidth]{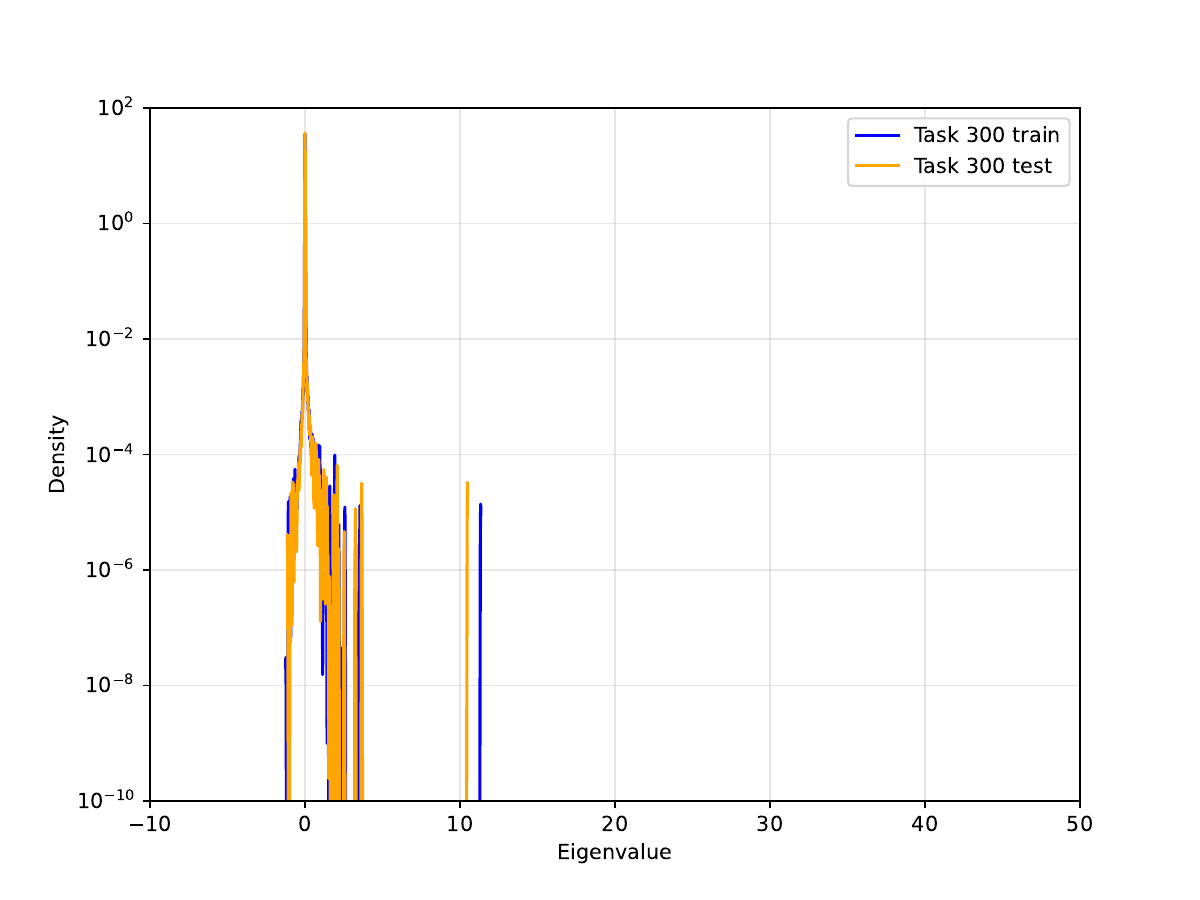}
      \caption{BP: task 300}
    \end{subfigure}
    \hfill
    \begin{subfigure}[b]{0.25\textwidth}
      \includegraphics[width=\textwidth]{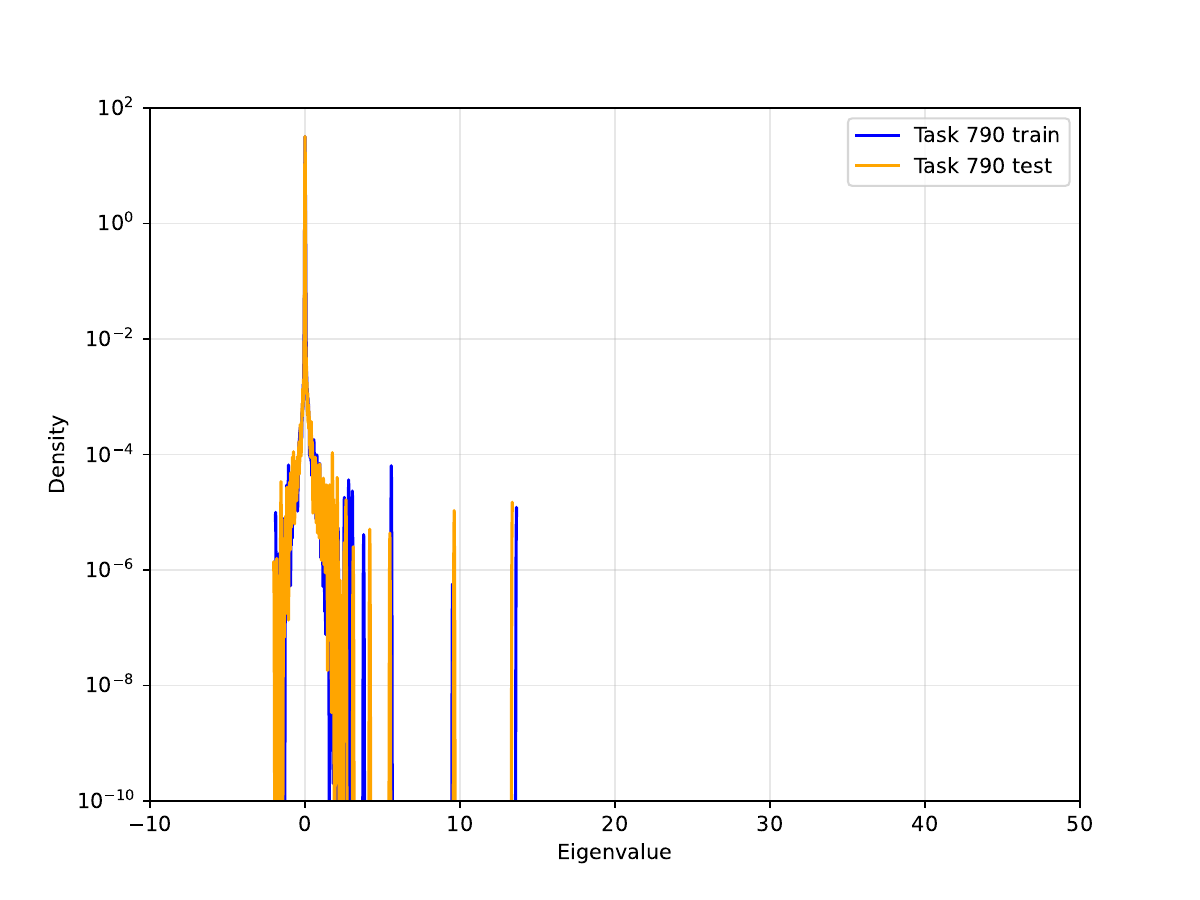}
      \caption{BP: task 790}
    \end{subfigure}
  \end{minipage}
  
  \vspace{2em}
  
  \begin{minipage}{\textwidth}
    \centering
    \textbf{Algorithms that \emph{preserve plasticity}}
    
    \begin{subfigure}[b]{0.25\textwidth}
      \includegraphics[width=\textwidth]{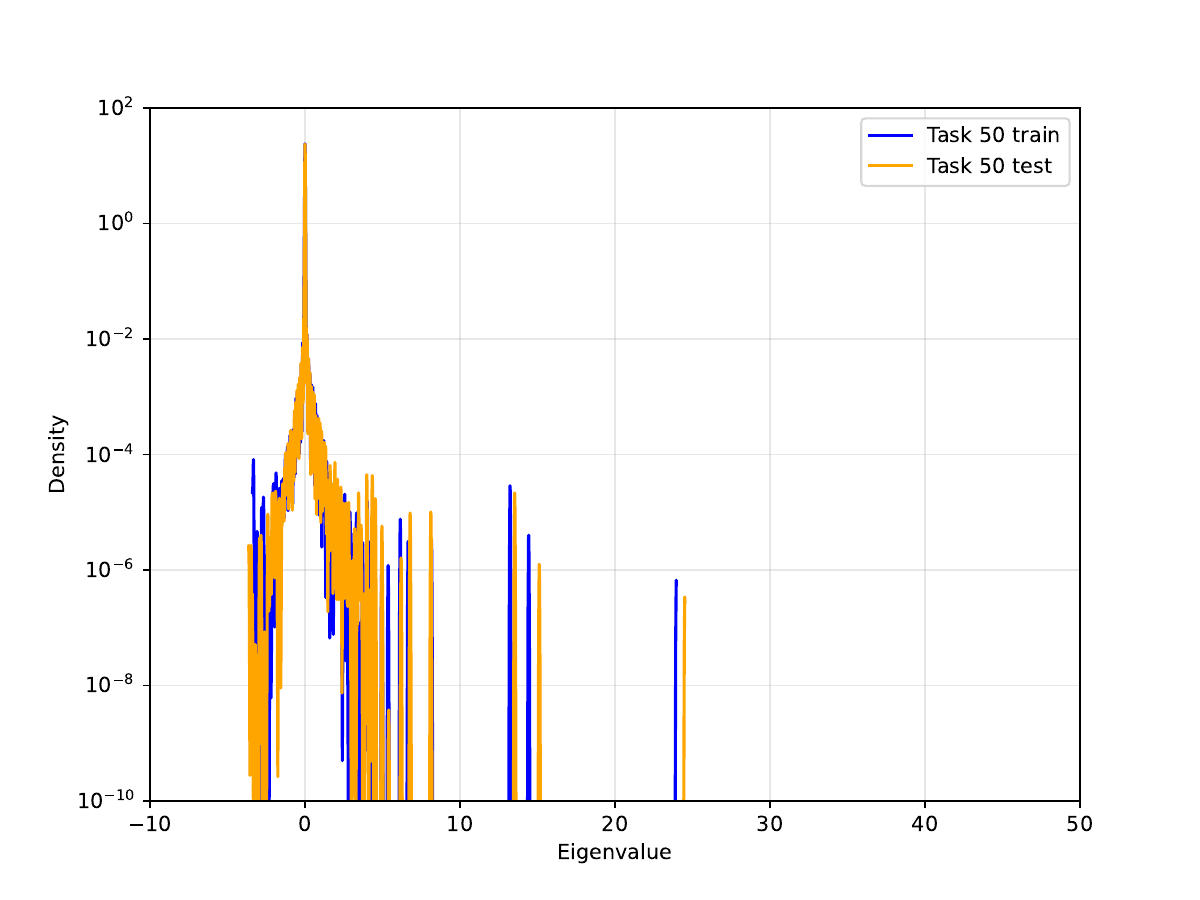}
      \caption{$L2$: task 50}
    \end{subfigure}
    \hfill
    \begin{subfigure}[b]{0.25\textwidth}
      \includegraphics[width=\textwidth]{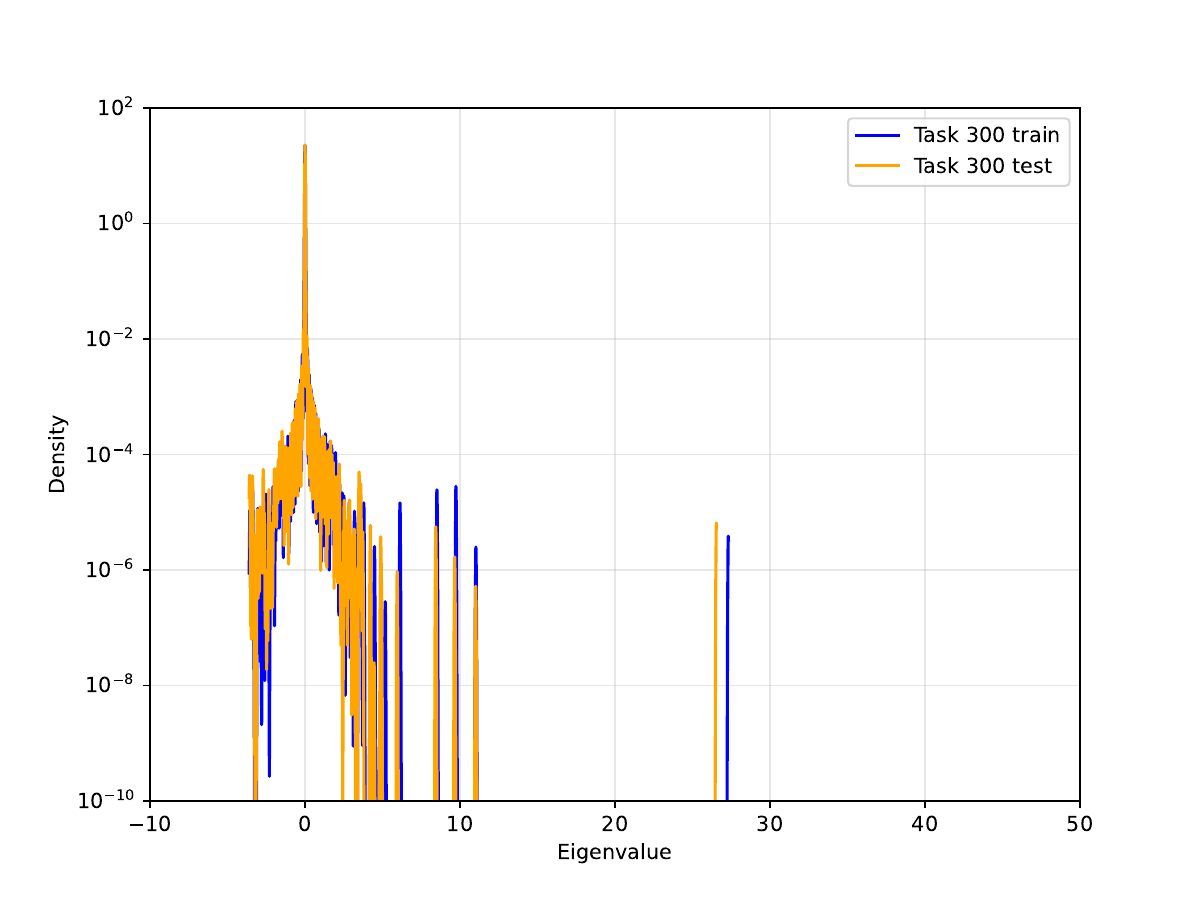}
      \caption{$L2$: task 300}
    \end{subfigure}
    \hfill
    \begin{subfigure}[b]{0.25\textwidth}
      \includegraphics[width=\textwidth]{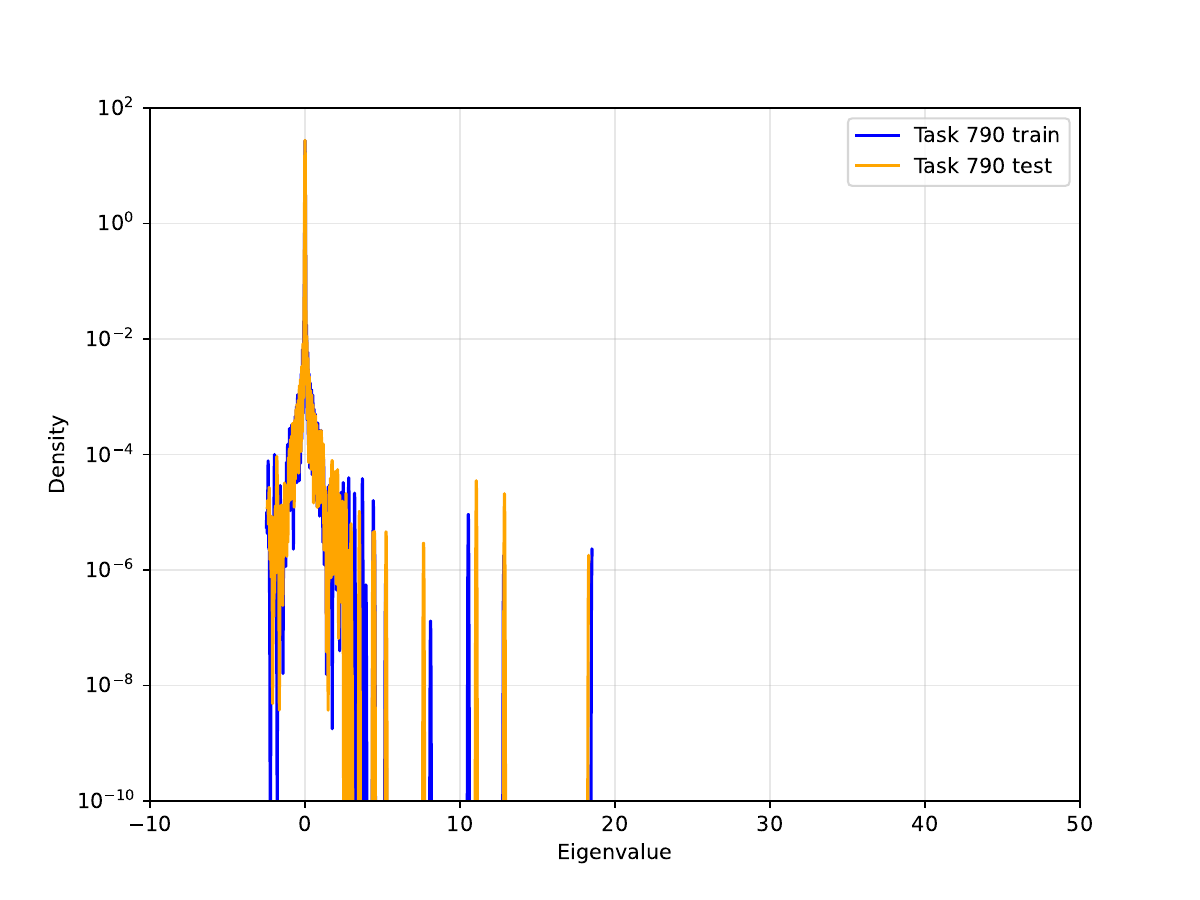}
      \caption{$L2$: task 790}
    \end{subfigure}

    \begin{subfigure}[b]{0.25\textwidth}
      \includegraphics[width=\textwidth]{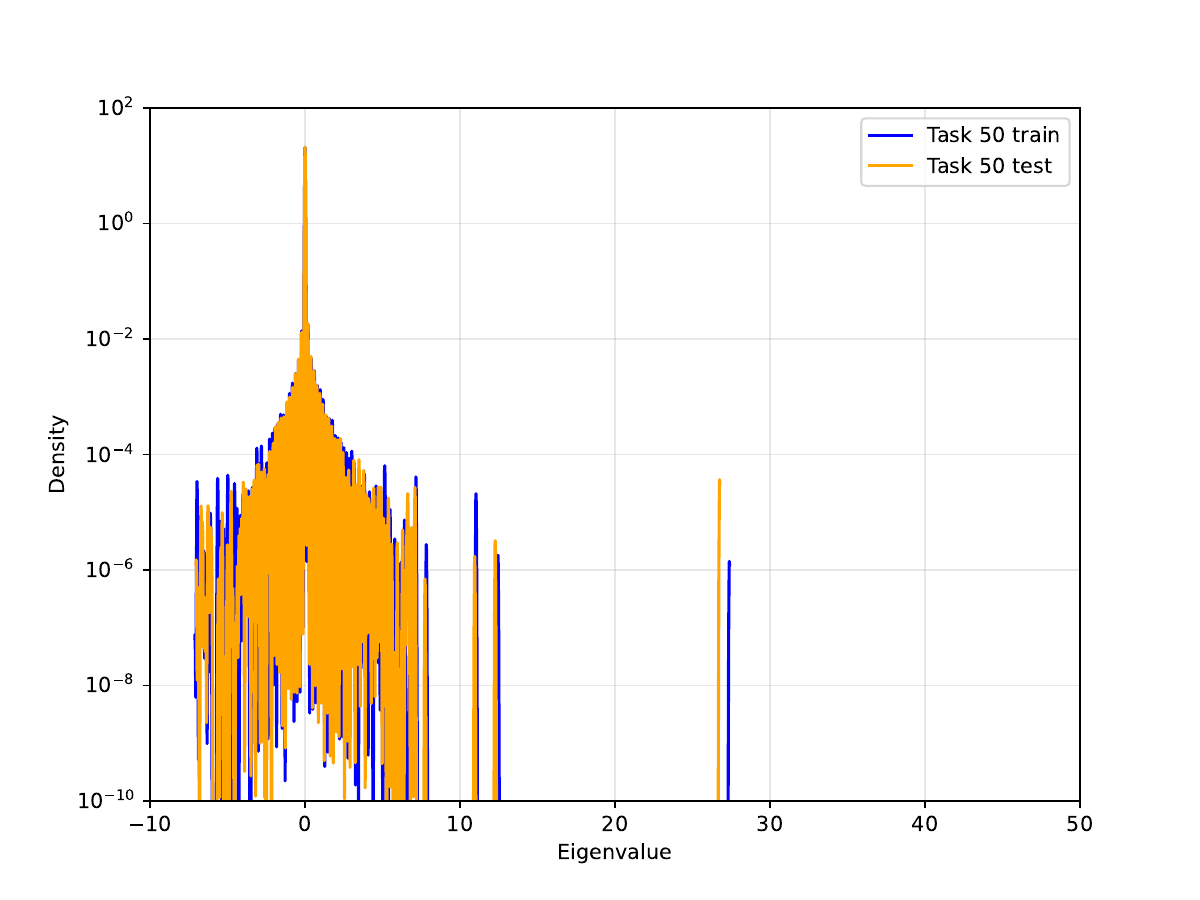}
      \caption{CBP: task 50}
    \end{subfigure}
    \hfill
    \begin{subfigure}[b]{0.25\textwidth}
      \includegraphics[width=\textwidth]{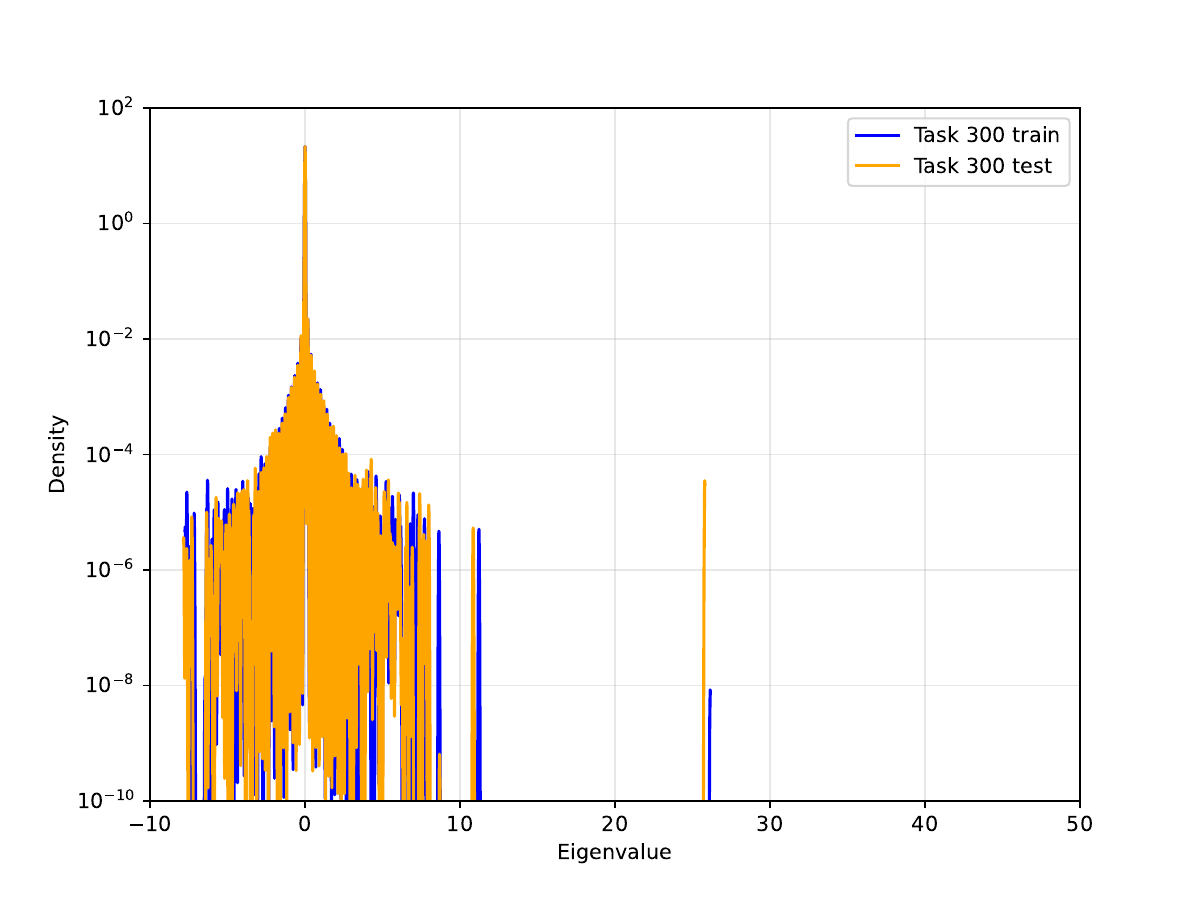}
      \caption{CBP: task 300}
    \end{subfigure}
    \hfill
    \begin{subfigure}[b]{0.25\textwidth}
      \includegraphics[width=\textwidth]{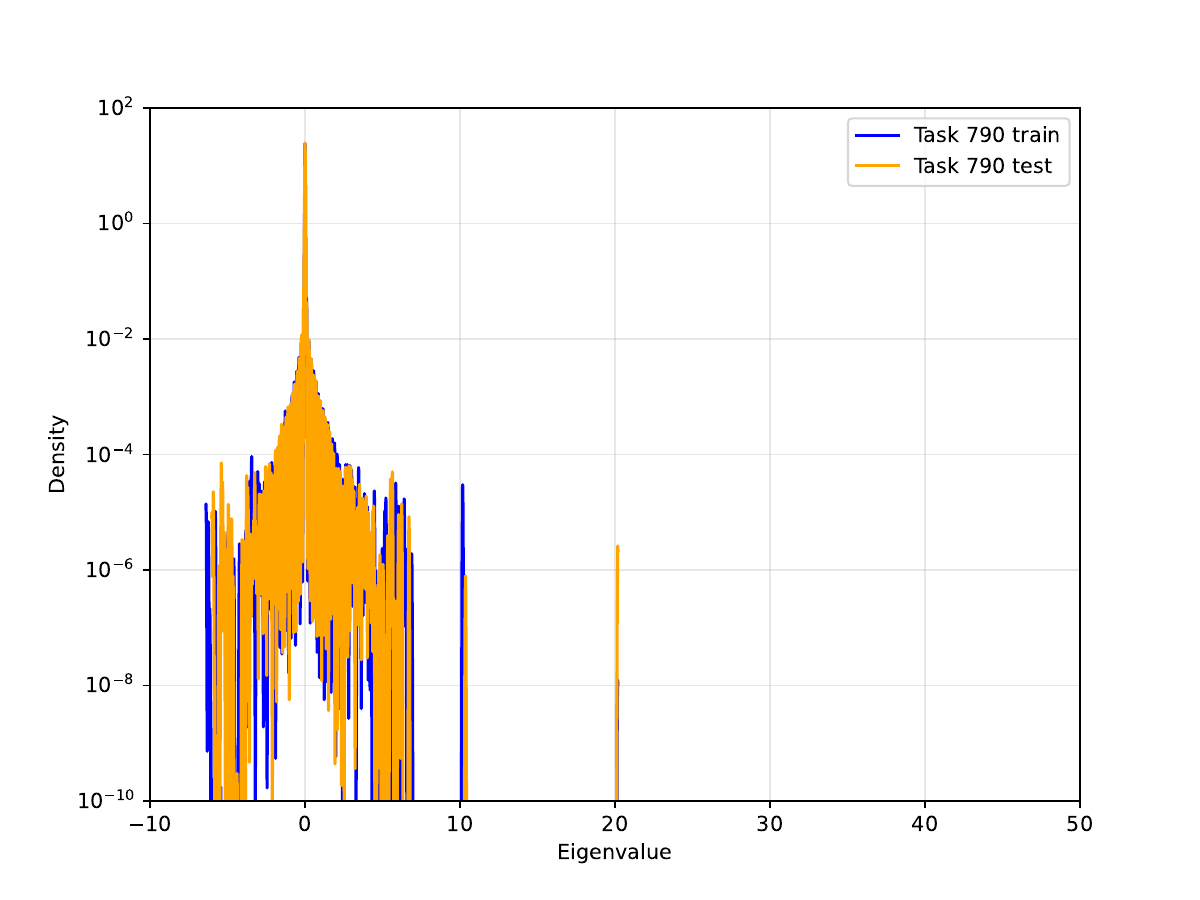}
      \caption{CBP: task 790}
    \end{subfigure}

    \begin{subfigure}[b]{0.25\textwidth}
      \includegraphics[width=\textwidth]{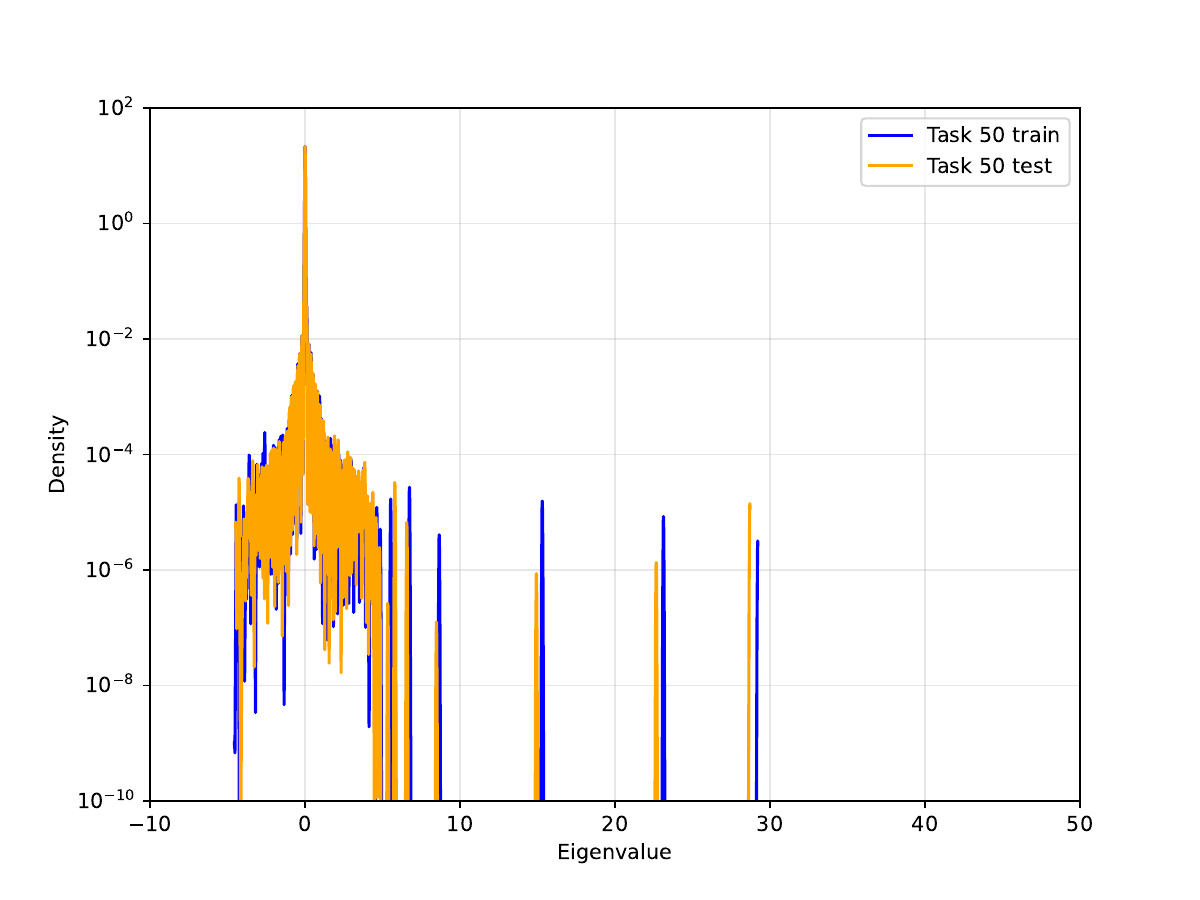}
      \caption{$L2$-ER: task 50}
    \end{subfigure}
    \hfill
    \begin{subfigure}[b]{0.25\textwidth}
      \includegraphics[width=\textwidth]{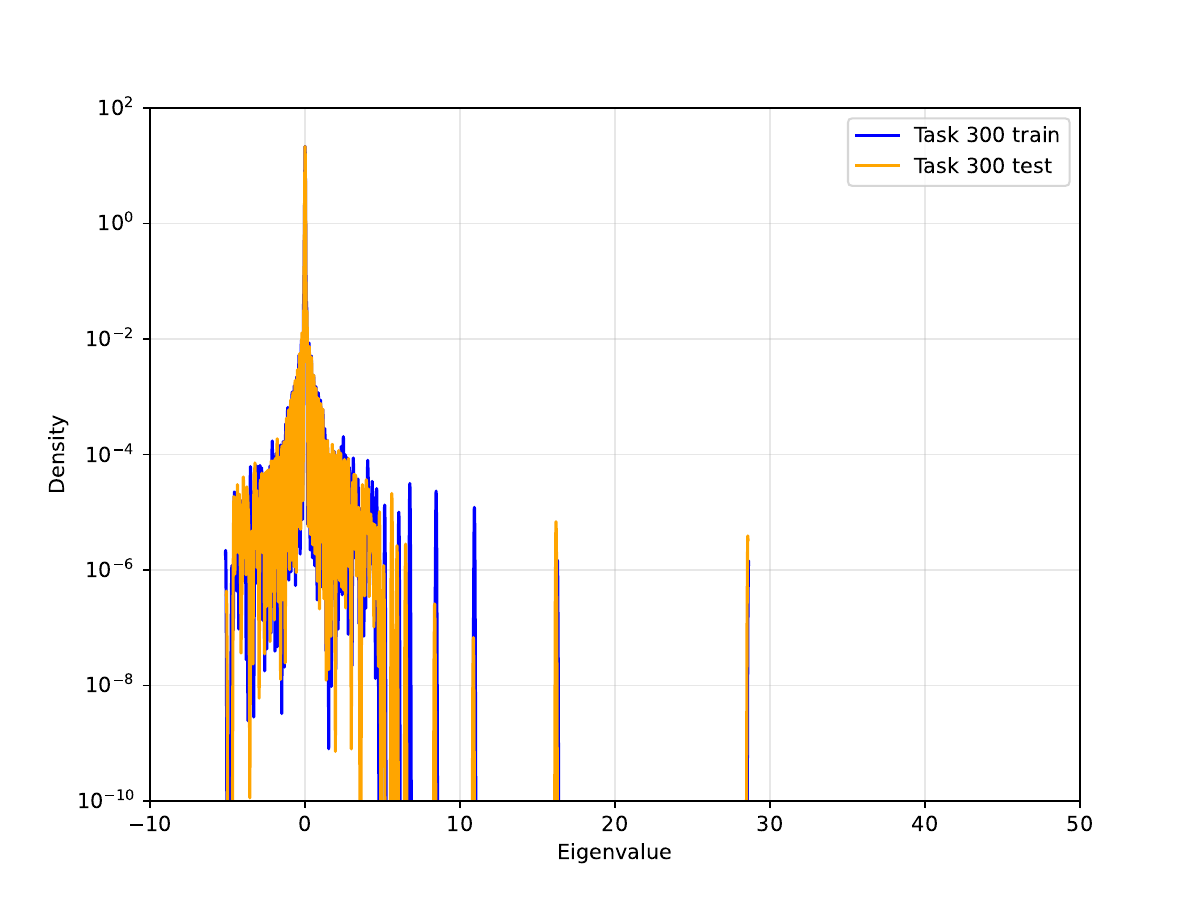}
      \caption{$L2$-ER: task 300}
    \end{subfigure}
    \hfill
    \begin{subfigure}[b]{0.25\textwidth}
      \includegraphics[width=\textwidth]{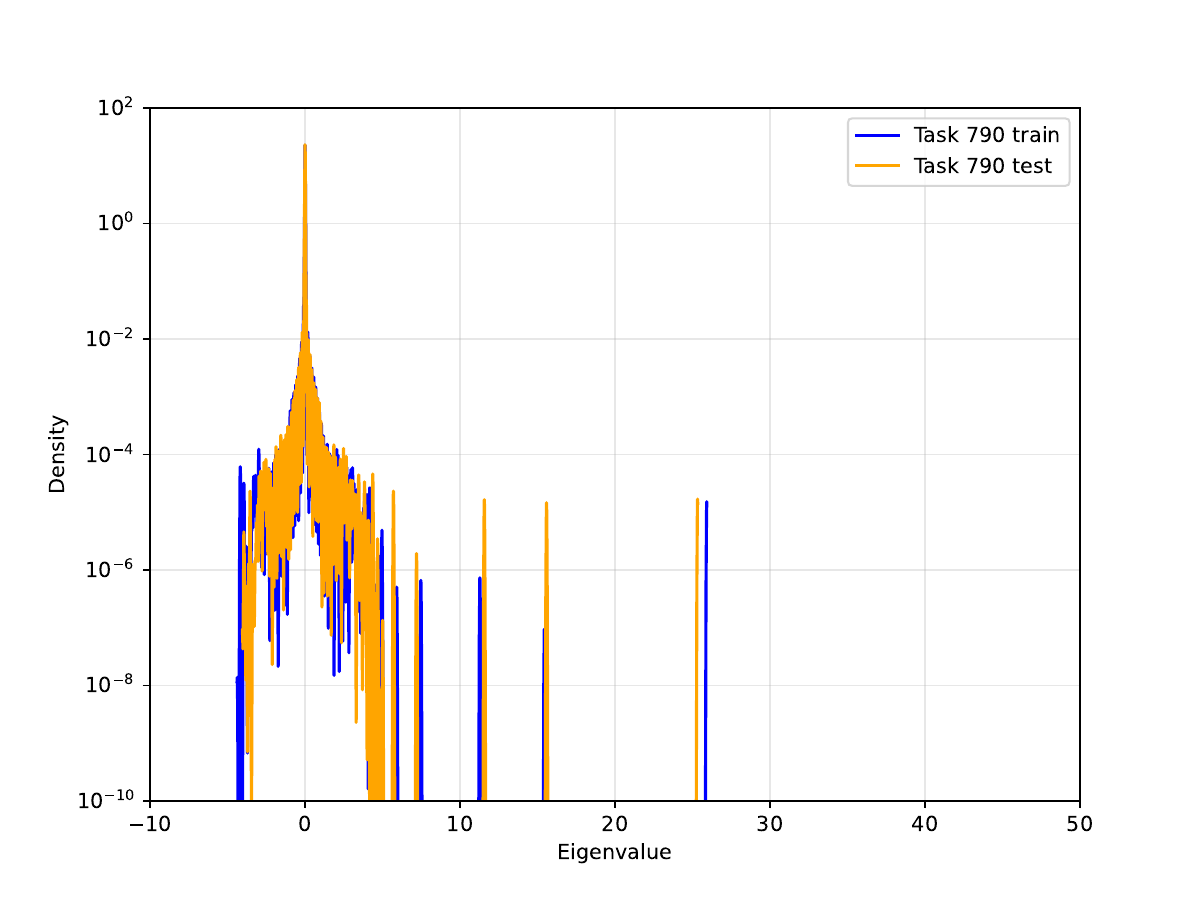}
      \caption{$L2$-ER: task 790}
    \end{subfigure}
  \end{minipage}

  \caption{Comparison of Hessian eigenspectra at \textbf{task init} across permuted MNIST. From top to bottom, the algorithms are ordered by increasing accuracy. 
\textbf{Top:} Algorithms that lose plasticity (ER, BP). 
\textbf{Bottom:} Algorithms that preserve plasticity ($L2$, CBP, $L2$-ER).}
  \label{fig:permuted_mnist_hessian}
\end{figure}

\clearpage
\section{Continual ImageNet}
\label{appendix:imagenet}
Continual ImageNet \citep{dohare_loss_2024} is an adaptation of ImageNet \citep{krizhevsky2012imagenet} in which pairs of classes are randomly sampled to form binary classification tasks. We evaluate performance on a total of 2000 tasks.
\subsection{Network Architecture}
We adopt the same architecture as \cite{dohare_loss_2024} with one key modification. In their setup, the final layer of the network is reinitialized at the beginning of every task. To ensure fairness in comparison, we remove this feature and keep the final layer fixed across tasks.
We use a three–block convolutional network followed by two dense layers and a dense classifier. Each convolution is followed by a ReLU nonlinearity and a $2\times2$ max‐pool with stride 2. The architecture is summarized as follows:
\begin{verbatim}
ConvNet(
  Sequential(
    (0): Conv2d(out_channels=32, kernel_size=5x5, bias=True)
    (1): ReLU()
    (2): MaxPool(window=2x2, stride=2)
    
    (3): Conv2d(out_channels=64, kernel_size=3x3, bias=True)
    (4): ReLU()
    (5): MaxPool(window=2x2, stride=2)
    
    (6): Conv2d(out_channels=128, kernel_size=3x3, bias=True)
    (7): ReLU()
    (8): MaxPool(window=2x2, stride=2)
    (9): Flatten()
    
    (10): Dense(out_dims=128, bias=True)
    (11): ReLU()
    (12): Dense(out_dims=128, bias=True)
    (13): ReLU()
    (14): Dense(out_dims=2, bias=True)
  )
)
\end{verbatim}

\subsection{Hyperparameters}
In \Cref{tab:imagenet_hyperparams}, we present the default hyperparameters for our ImageNet experiments. Unless otherwise specified, experiments use these defaults.

\begin{table}[htbp]
    \centering
    \caption{ImageNet default hyperparameters}
    \begin{tabular}{p{4cm} p{3.5cm} p{7.5cm}}
        \hline
        \textbf{Hyperparam Name} & \textbf{Default} & \textbf{Description} \\
        \hline
        \texttt{study\_name} & \texttt{test} & experiment name \\
        \texttt{seed} & 2024 & base random seed \\
        \texttt{debug} & False & true to enable debug mode \\
        \texttt{platform} & \texttt{gpu} & \{\texttt{cpu}, \texttt{gpu}\} \\
        \texttt{n\_seeds} & 1 & number of seeds \\
        \texttt{env} & \texttt{imagenet} & environment name \\
        \texttt{agent} & \texttt{l2\_er} & agent options: \{\texttt{er}, \texttt{bp}, \texttt{l2}, \texttt{snp\_l2}, \texttt{snp}, \texttt{cbp}, \texttt{l2\_er}, \texttt{laynorm\_l2}, \texttt{spectral\_reg}\} \\
        \texttt{alg} & \texttt{ppo} & algorithm type: \{\texttt{actor\_critic}, \texttt{ppo}\} \\
        \texttt{activation} & \texttt{relu} & activation options: \{\texttt{relu}, \texttt{tanh}\} \\
        \texttt{lr} & {[}0.01{]} & learning rate(s) \\
        \texttt{optimizer} & \texttt{sgd} & \{\texttt{adam}, \texttt{sgd}\} \\
        \texttt{weight\_decay} & 0.001 & $L2$ weight decay \\
        \texttt{to\_perturb} & False & whether to perturb the input data \\
        \texttt{perturb\_scale} & $1\times 10^{-5}$ & magnitude of input perturbation \\
        \texttt{mini\_batch\_size} & 100 & minibatch size \\
        \texttt{no\_anneal\_lr} & True & if true, do not anneal the learning rate \\
        \texttt{max\_grad\_norm} & $1\times 10^{9}$ & gradient clipping threshold \\
        \texttt{num\_tasks} & 2000 & number of tasks used in training/eval \\
        \texttt{num\_epochs} & 20 & number of epochs per task \\
        \texttt{momentum} & 0.9 & SGD momentum coefficient \\
        \hline

        \multicolumn{3}{l}{\textbf{effective rank}} \\
        \hline
        \texttt{er\_lr} & {[}0.01{]} & ER learning rate(s) \\
        \texttt{er\_batch} & 12 & batch size for ER computation \\
        \texttt{er\_step} & 1 & ER update frequency \\
        \hline

        \multicolumn{3}{l}{\textbf{model architecture}} \\
        \hline
        \texttt{use\_layernorm} & False & enable LayerNorm in the model \\
        \hline

        \multicolumn{3}{l}{\textbf{spectral regularization}} \\
        \hline
        \texttt{use\_spectral\_reg} & False & enable spectral regularization \\
        \texttt{spectral\_reg\_strength} & 0.1 & regularization strength \\
        \texttt{spectral\_k} & 2 & number of singular values/eigs (or rank) used \\
        \texttt{spectral\_target} & 2.0 & target value for spectral objective \\
        \texttt{spectral\_power\_iter} & 10 & power-iteration steps (if used) \\
        \hline

        \multicolumn{3}{l}{\textbf{evaluation}} \\
        \hline
        \texttt{evaluate} & True & evaluate after each task \\
        \texttt{evaluate\_previous} & False & also evaluate on previous tasks \\
        \texttt{eval\_size} & 2000 & number of evaluation samples per task \\
        \texttt{compute\_hessian} & False & whether to compute Hessian spectrum \\
        \texttt{compute\_hessian\_size} & 2000 & samples used for Hessian computation \\
        \texttt{compute\_hessian\_interval} & 1 & interval (in tasks) between Hessian runs \\
        \hline

        \multicolumn{3}{l}{\textbf{continual backpropagation}} \\
        \hline
        \texttt{cont\_backprop} & False & enable CBP \\
        \texttt{replacement\_rate} & $1\times 10^{-6}$ & CBP replacement probability per step \\
        \texttt{decay\_rate} & 0.99 & exponential decay for CBP statistics \\
        \texttt{maturity\_threshold} & 100 & steps before a unit is considered “mature” \\
        \hline
    \end{tabular}
    \vspace{0.5em}
    \label{tab:imagenet_hyperparams}
\end{table}

\subsection{Experiments}
Due to high variance, we conducted full hyperparameter sweeps from \Cref{table:imagenet_all} across 10 seeds and apply Savitzky-Golay filter \citep{Savitsky1964} to the results. The best hyperparameters selected from these sweeps are summarized in \Cref{table:imagenet_all}.

\begin{table*}[htbp]
    \centering
    \renewcommand{\arraystretch}{1.12}
    \caption{Hyperparameter sweeps and selected best values for Continual ImageNet across all algorithms.}
    \begin{tabular}{l l l l}
        \hline
        \textbf{Algorithm} & \textbf{Hyperparameter} & \textbf{Sweep Hyperparameters} & \textbf{Best Hyperparam} \\
        \hline
        BP   & Learning rate     & $10^{-2},\,10^{-3},\,10^{-4}$ & $10^{-4}$ \\[0.2em]
        ER   & Learning rate     & $10^{-2},\,10^{-3},\,10^{-4}$ & $10^{-4}$ \\
             & Effective rank lr & $10^{-3},\,10^{-4},\,10^{-5}$ & $10^{-5}$ \\[0.2em]
        $L2$-ER & Learning rate     & $10^{-3}$ & $10^{-3}$ \\
             & Effective rank lr & $10^{-3},\,10^{-4},\,10^{-5}$ & $10^{-4}$ \\
             & Weight decay      & $10^{-3},\,10^{-4},\,10^{-5}$ & $10^{-3}$ \\[0.2em]
        LayerNorm-L2 & Learning rate     & $10^{-2},\,10^{-3},\,10^{-4}$ & $10^{-3}$ \\
             & Weight decay      & $10^{-3},\,10^{-4},\,10^{-5}$ & $10^{-5}$ \\[0.2em]
        Spectral & Learning rate     & $10^{-2},\,10^{-3},\,10^{-4}$ & $10^{-2}$ \\
             & spectral_strengths      & $10^{-2},\,10^{-3},\,10^{-4}$ & $10^{-2}$ \\[0.2em]
        CBP  & Learning rate     & $10^{-2},\,10^{-3},\,10^{-4}$ & $10^{-4}$ \\
             & Replacement rate  & $10^{-4},\,10^{-5},\,10^{-6}$ & $10^{-5}$ \\[0.2em]
        $L2$   & Learning rate     & $10^{-2},\,10^{-3},\,10^{-4}$ & $10^{-4}$ \\
             & Weight decay      & $10^{-3},\,10^{-4},\,10^{-5}$ & $10^{-3}$ \\
        \hline
    \end{tabular}
    \label{table:imagenet_all}
\end{table*}

\begin{figure}[htbp]
  \centering
  \includegraphics[width=0.5\linewidth]{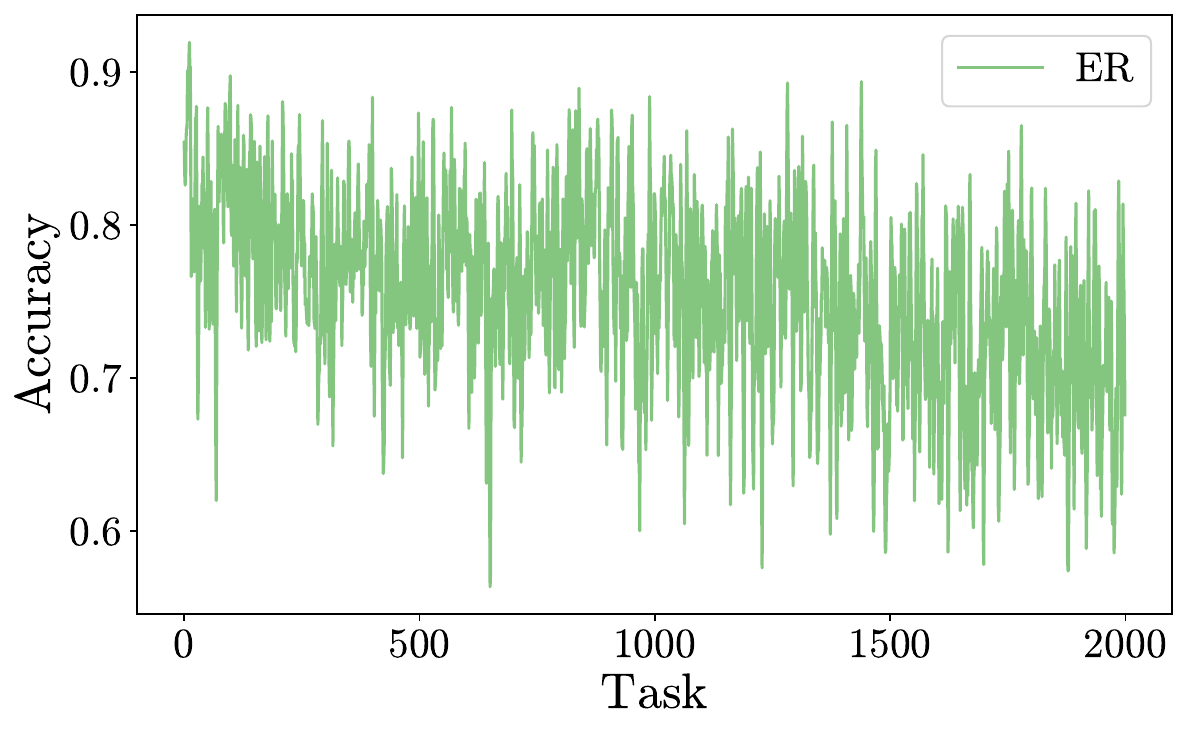}
  \caption{ER Accuracy correspond to Hessian Spectrum on Continual ImageNet.}
  \label{fig:imagenet_accuracy_hessian}
\end{figure}

\subsection{Continual ImageNet Hessian Spectrum}
We again use the best hyperparameters from \Cref{table:imagenet_all} to run over 10 seeds to plot the hessian spectrum. We presents our results on BP, ER, L2-ER, CBP, L2 in \Cref{fig:imagenet_hessian} and its corresponding performance in \Cref{fig:performance}. Note that we categorize ER as preserving plasticity in this case because, in this single run, ER successfully maintained plasticity rather than losing it. We present this single run hessian spectrum accuracy in \Cref{fig:imagenet_accuracy_hessian}. From \Cref{fig:imagenet_hessian}, we observe that all algorithms that lose plasticity experience spectral collapse, whereas those that preserve plasticity maintain a wide and dense spectrum throughout training.
\begin{figure}[htbp]
  \centering
  
  \begin{minipage}{\textwidth}
    \centering
    \textbf{Algorithms that \emph{lose plasticity}}
    
    \begin{subfigure}[b]{0.25\textwidth}
      \includegraphics[width=\textwidth]{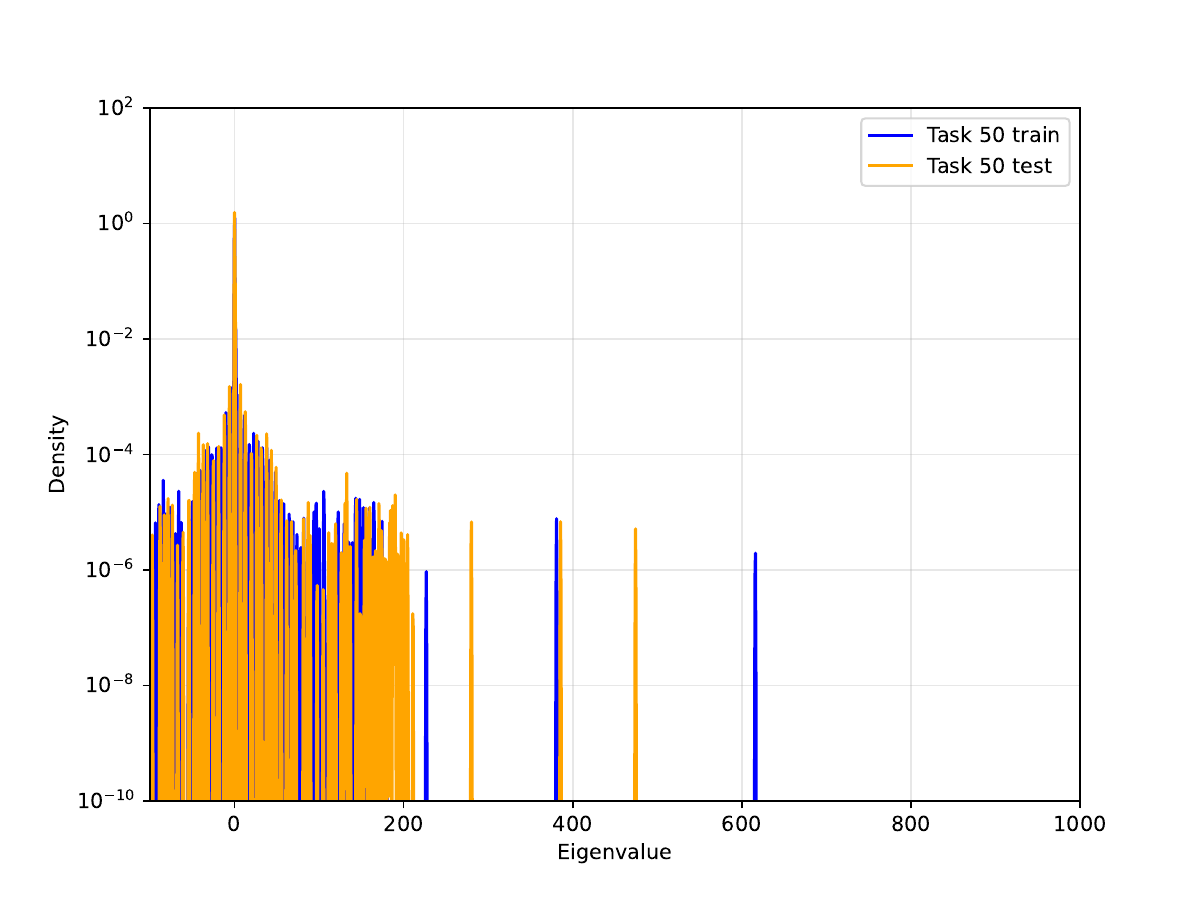}
      \caption{BP: task 50}
    \end{subfigure}
    \hfill
    \begin{subfigure}[b]{0.25\textwidth}
      \includegraphics[width=\textwidth]{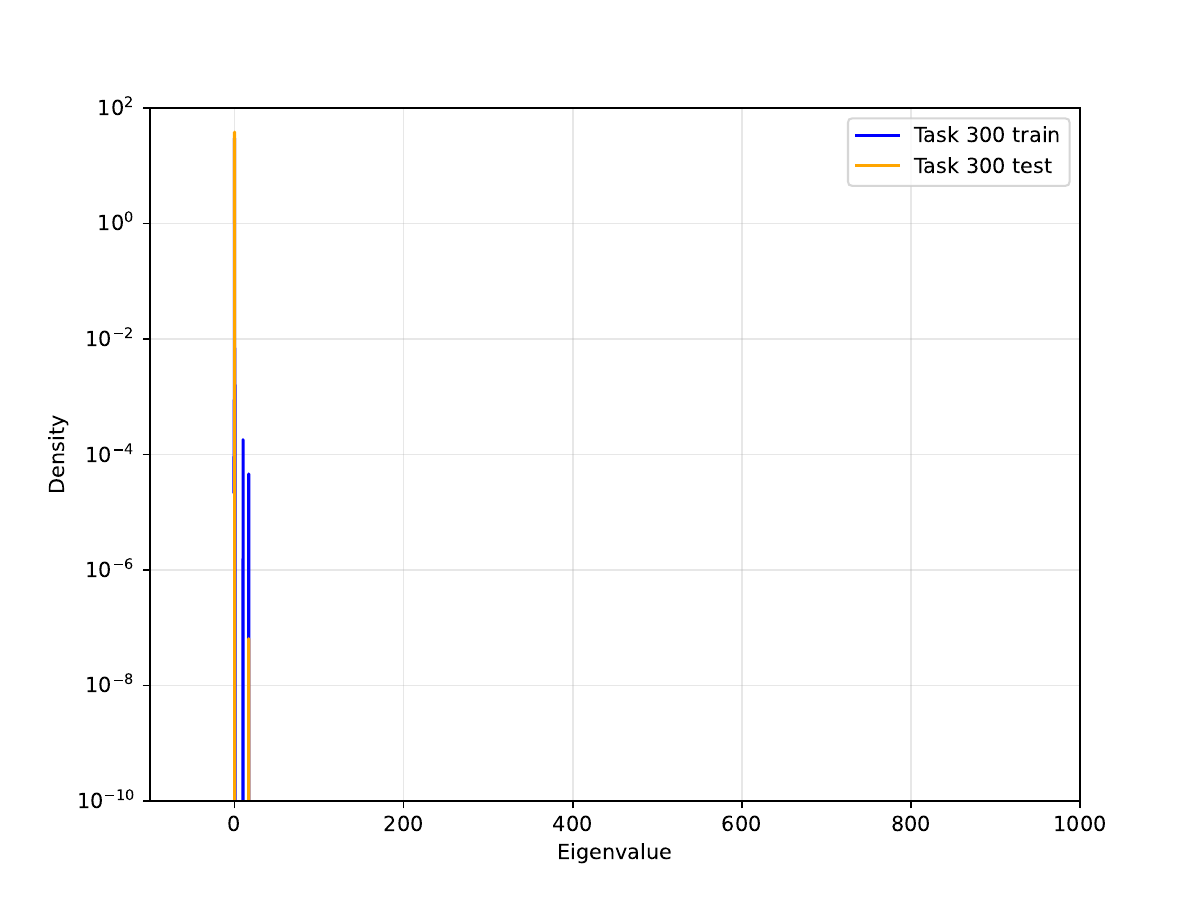}
      \caption{BP: task 300}
    \end{subfigure}
    \hfill
    \begin{subfigure}[b]{0.25\textwidth}
      \includegraphics[width=\textwidth]{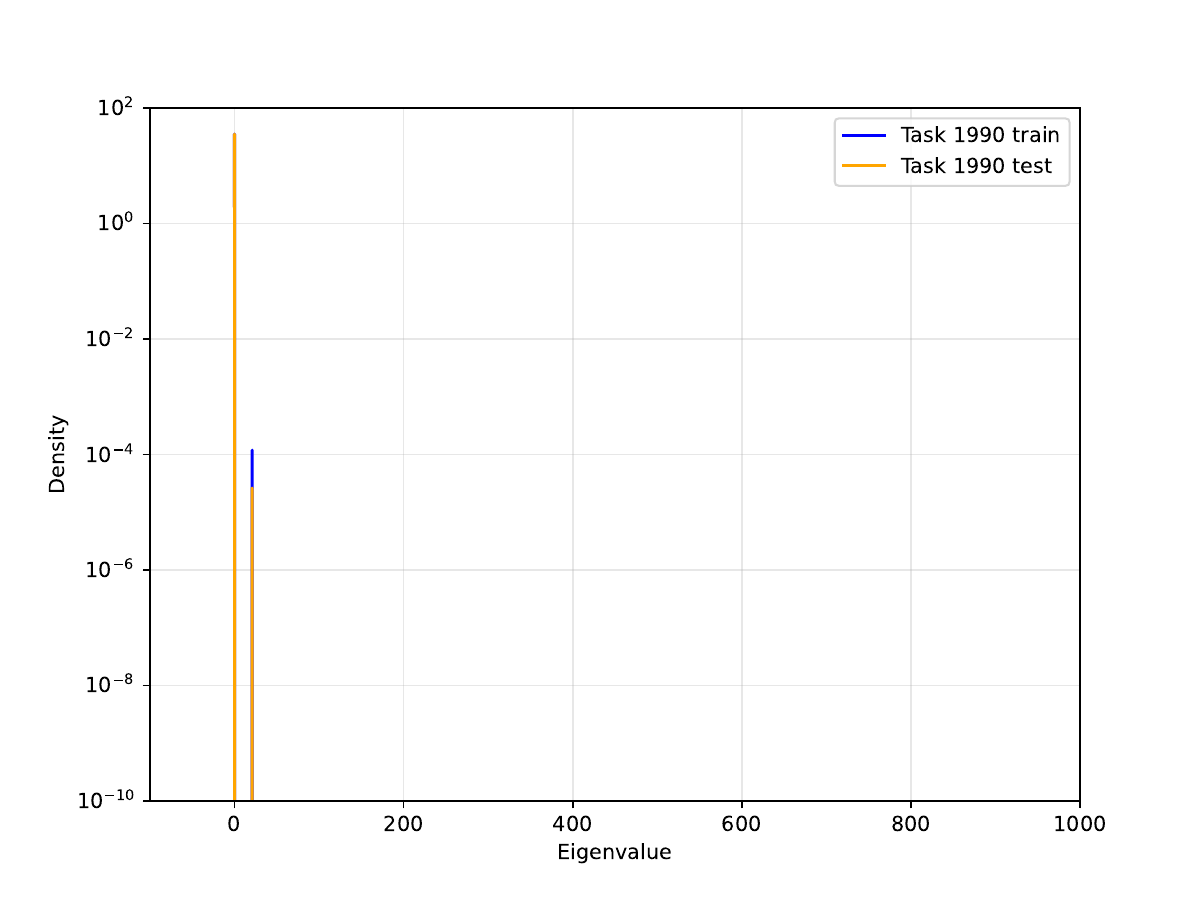}
      \caption{BP: task 1990}
    \end{subfigure}

    \begin{subfigure}[b]{0.25\textwidth}
      \includegraphics[width=\textwidth]{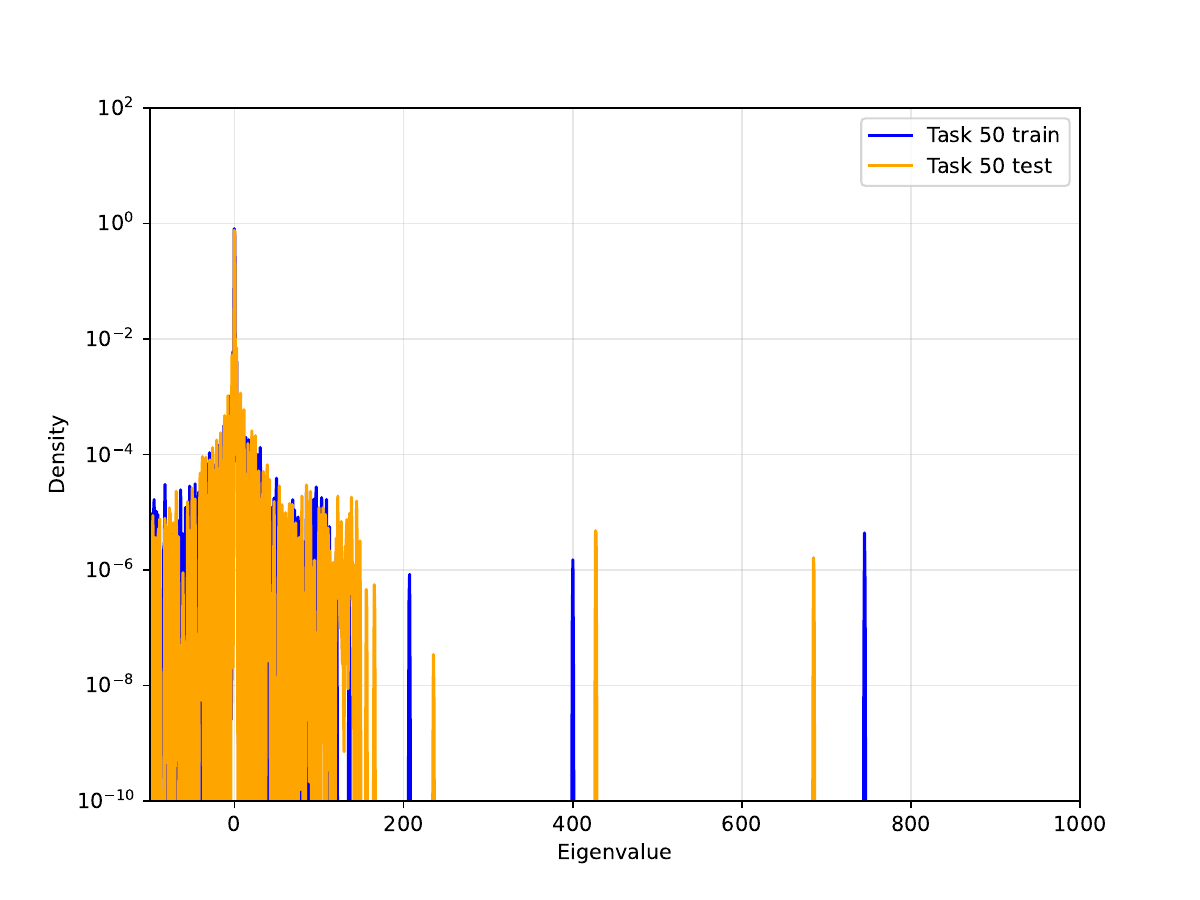}
      \caption{$L2$: task 50}
    \end{subfigure}
    \hfill
    \begin{subfigure}[b]{0.25\textwidth}
      \includegraphics[width=\textwidth]{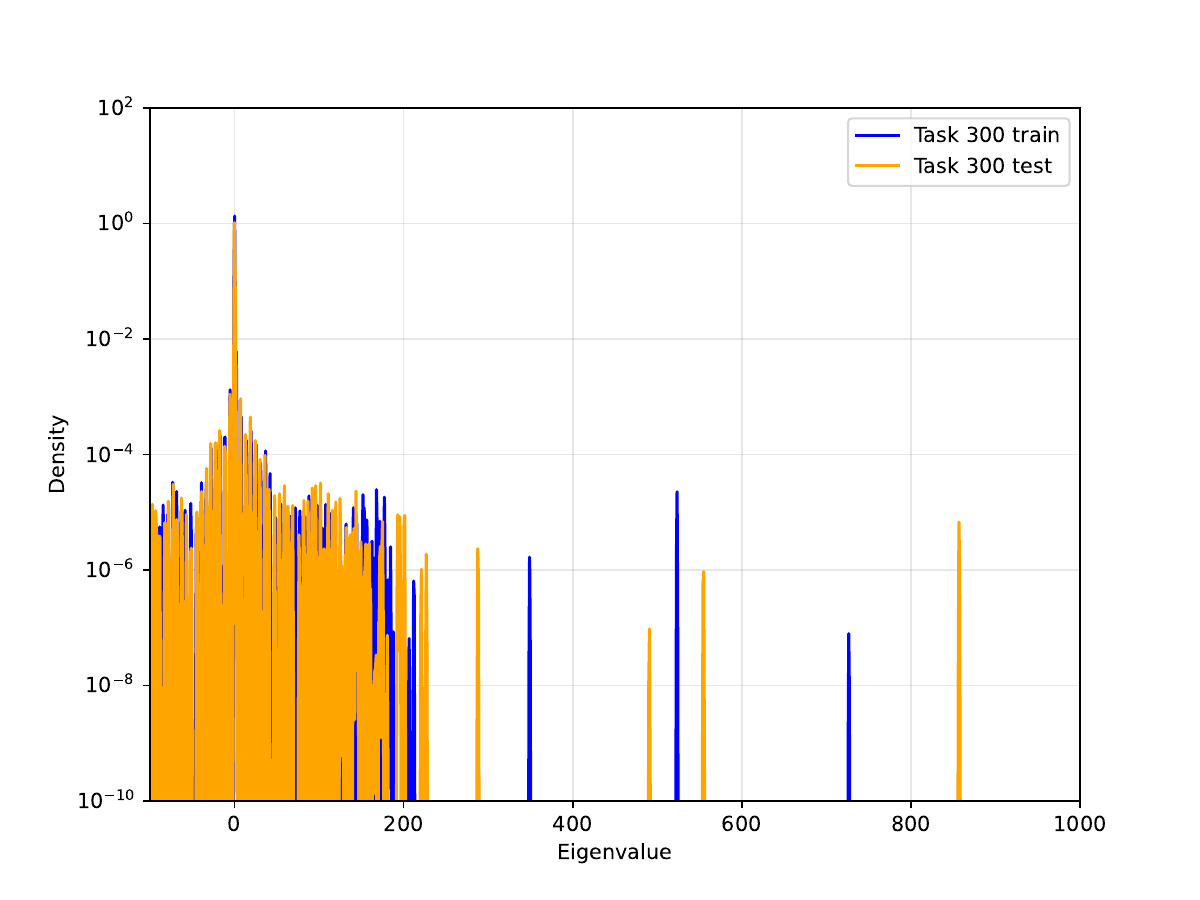}
      \caption{$L2$: task 300}
    \end{subfigure}
    \hfill
    \begin{subfigure}[b]{0.25\textwidth}
      \includegraphics[width=\textwidth]{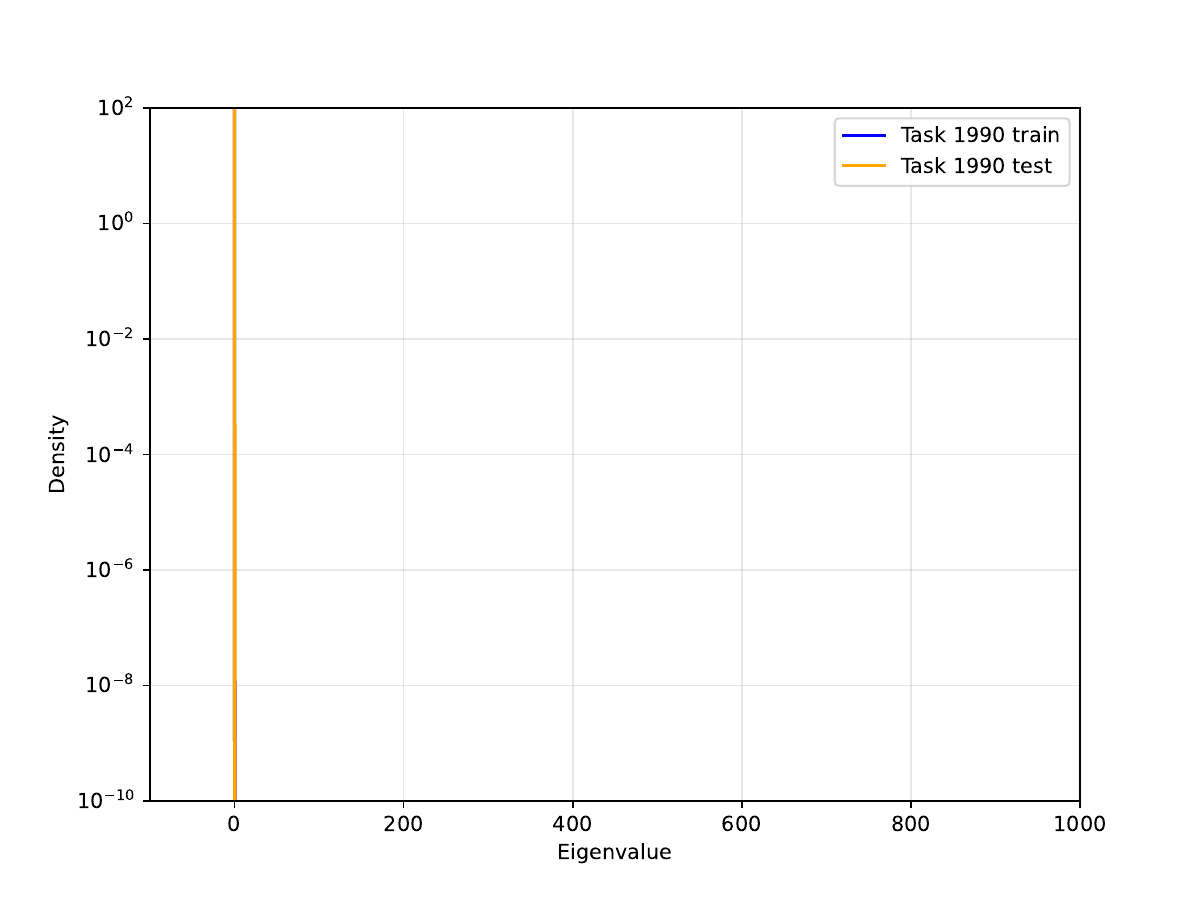}
      \caption{$L2$: task 1990}
    \end{subfigure}
  \end{minipage}
  
  \vspace{2em}
  
  \begin{minipage}{\textwidth}
    \centering
    \textbf{Algorithms that \emph{preserve plasticity}}
    
    \begin{subfigure}[b]{0.25\textwidth}
      \includegraphics[width=\textwidth]{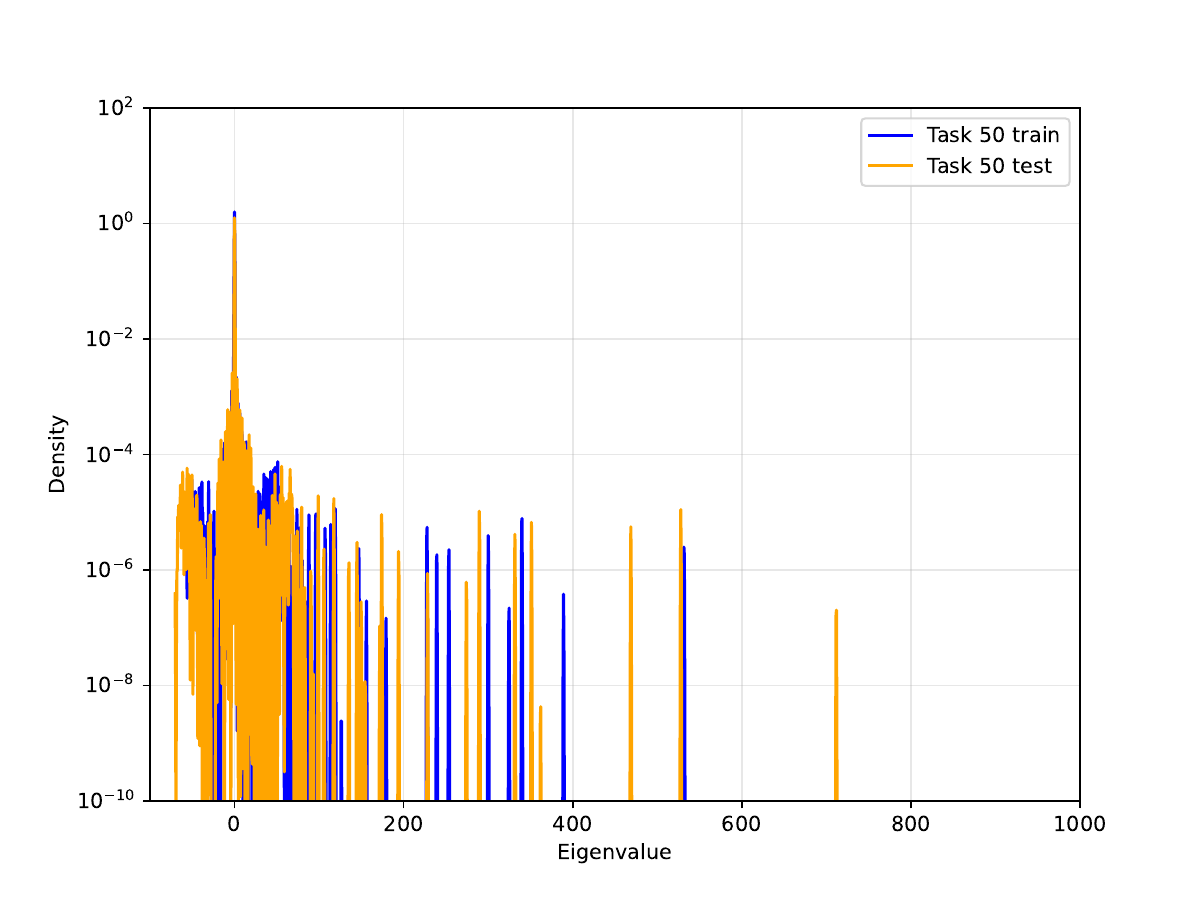}
      \caption{ER: task 50}
    \end{subfigure}
    \hfill
    \begin{subfigure}[b]{0.25\textwidth}
      \includegraphics[width=\textwidth]{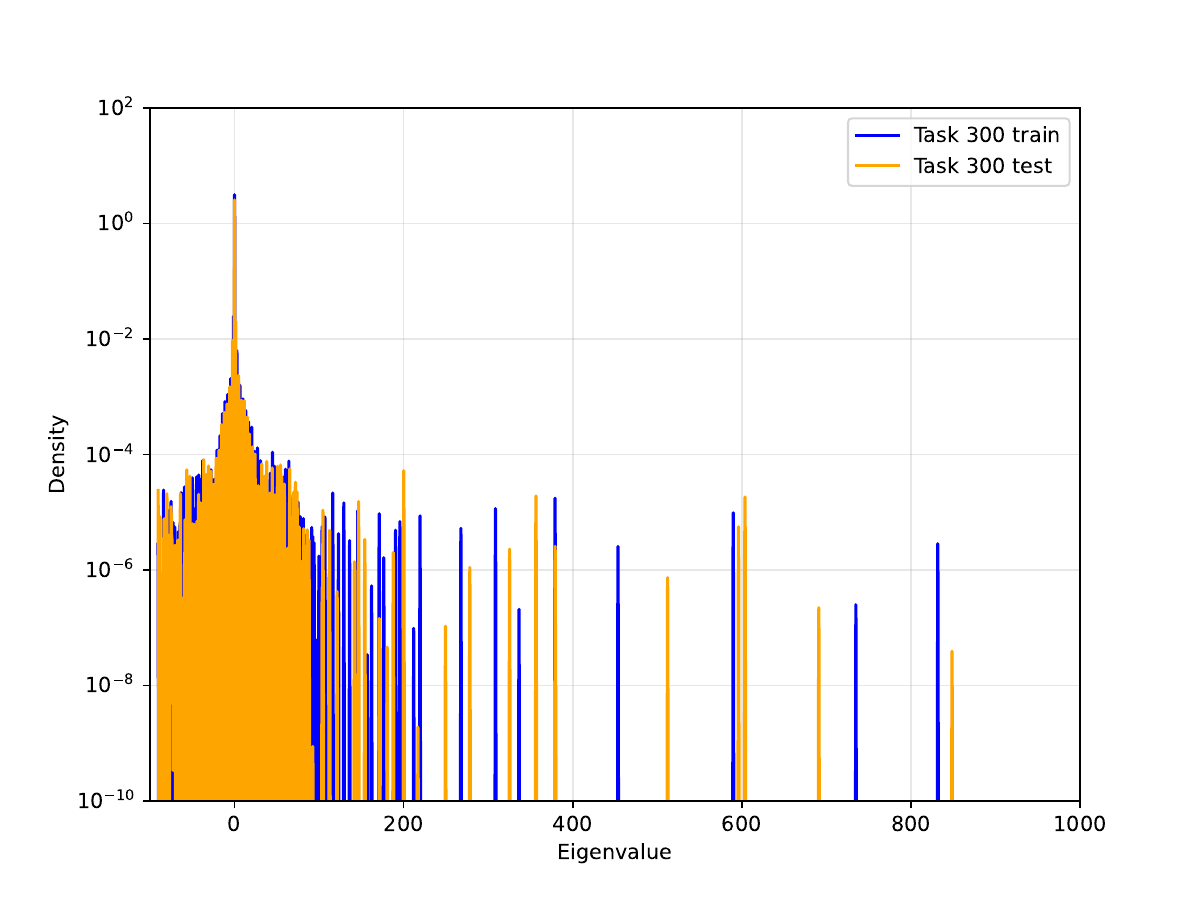}
      \caption{ER: task 300}
    \end{subfigure}
    \hfill
    \begin{subfigure}[b]{0.25\textwidth}
      \includegraphics[width=\textwidth]{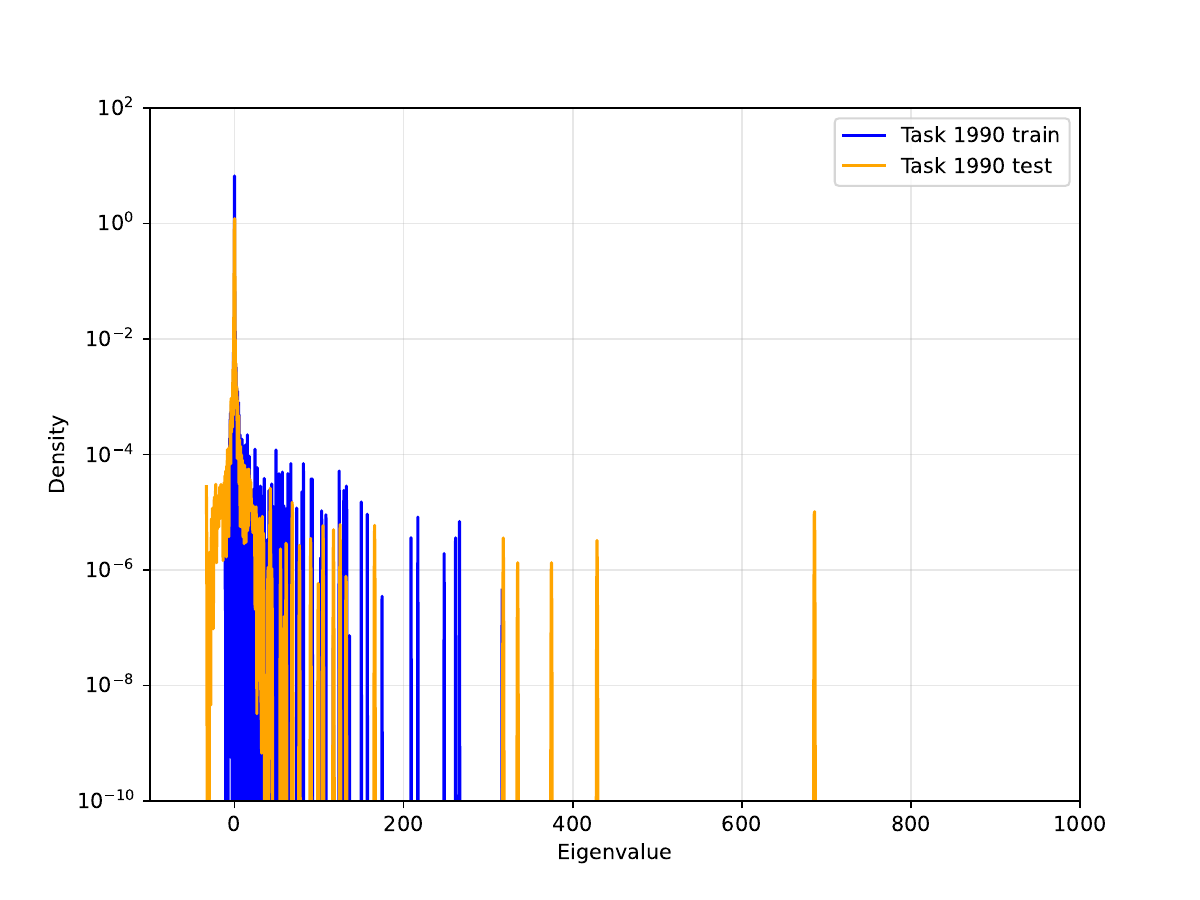}
      \caption{ER: task 1990}
    \end{subfigure}

    \begin{subfigure}[b]{0.25\textwidth}
      \includegraphics[width=\textwidth]{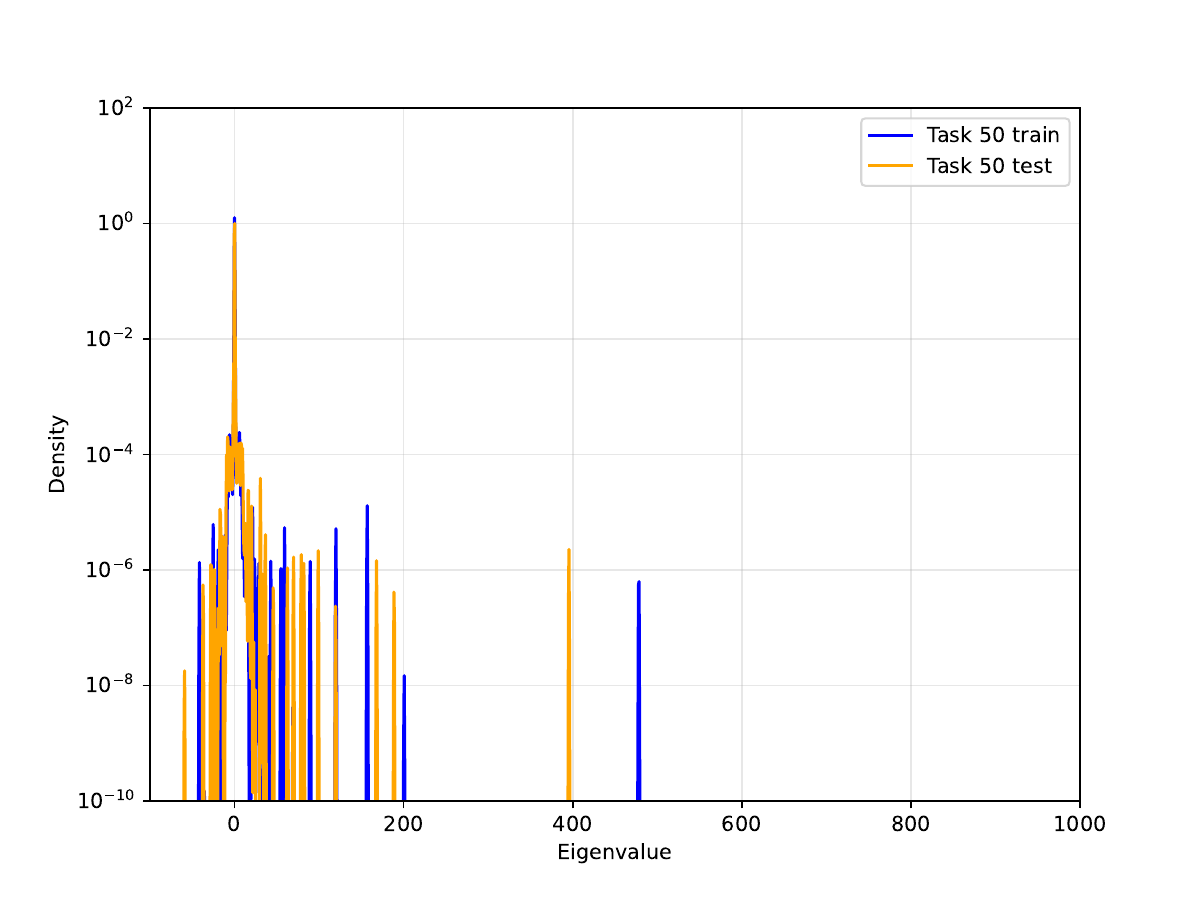}
      \caption{CBP: task 50}
    \end{subfigure}
    \hfill
    \begin{subfigure}[b]{0.25\textwidth}
      \includegraphics[width=\textwidth]{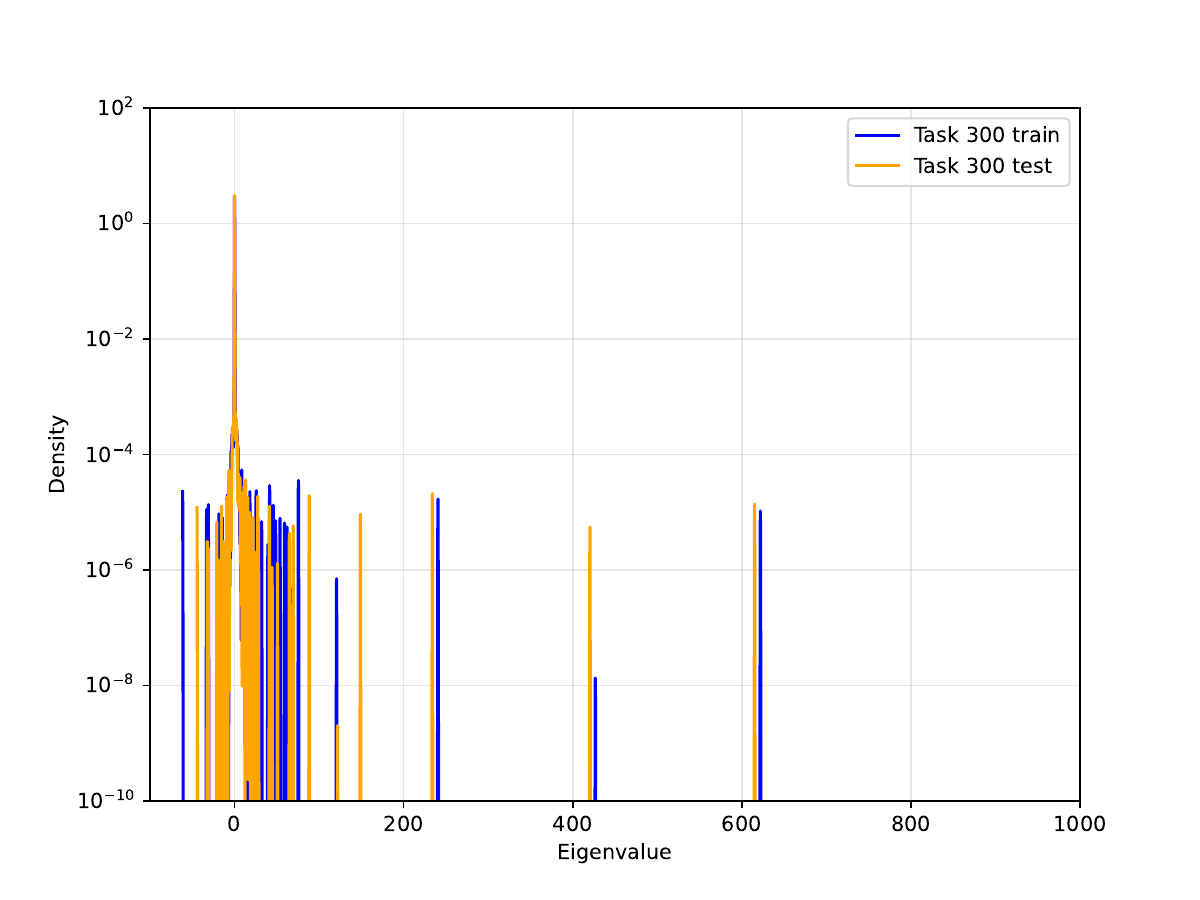}
      \caption{CBP: task 300}
    \end{subfigure}
    \hfill
    \begin{subfigure}[b]{0.25\textwidth}
      \includegraphics[width=\textwidth]{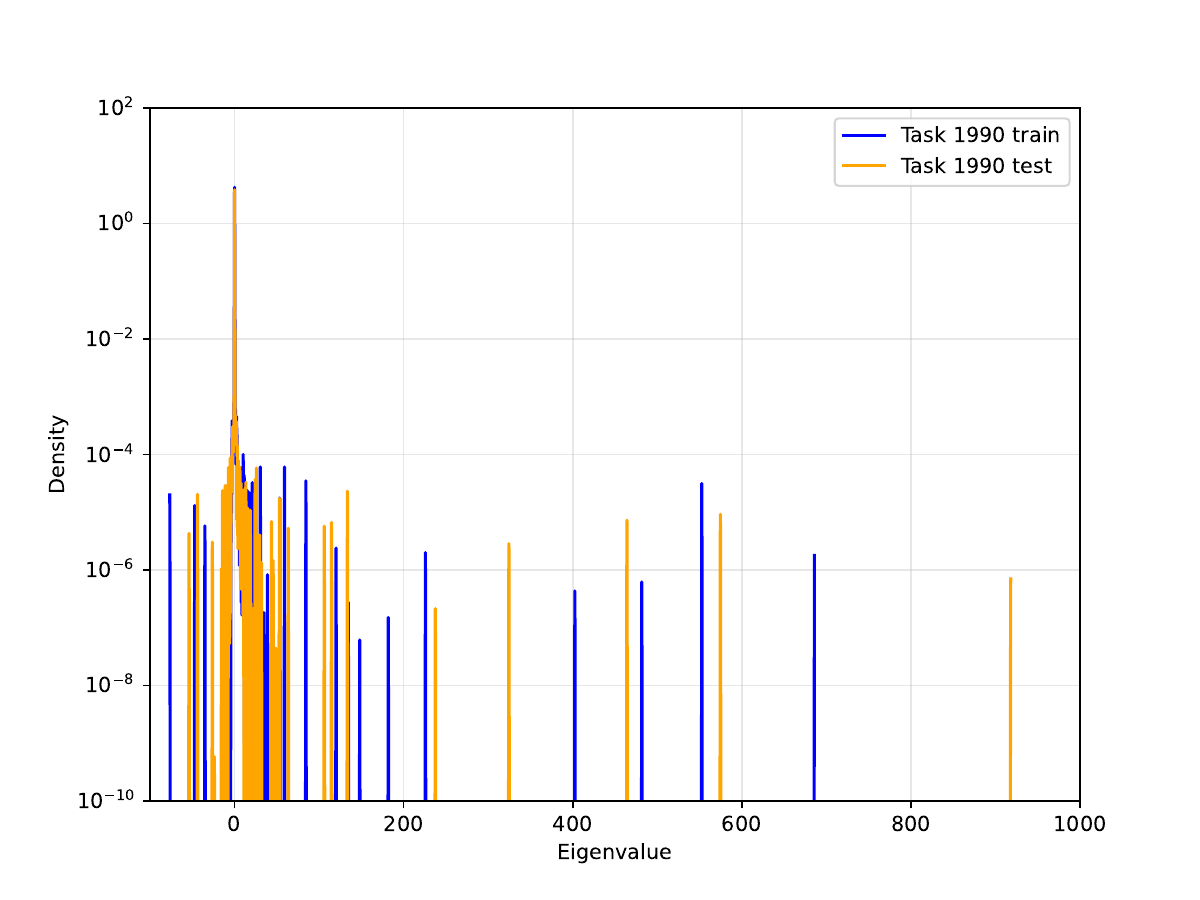}
      \caption{CBP: task 1990}
    \end{subfigure}

    \begin{subfigure}[b]{0.25\textwidth}
      \includegraphics[width=\textwidth]{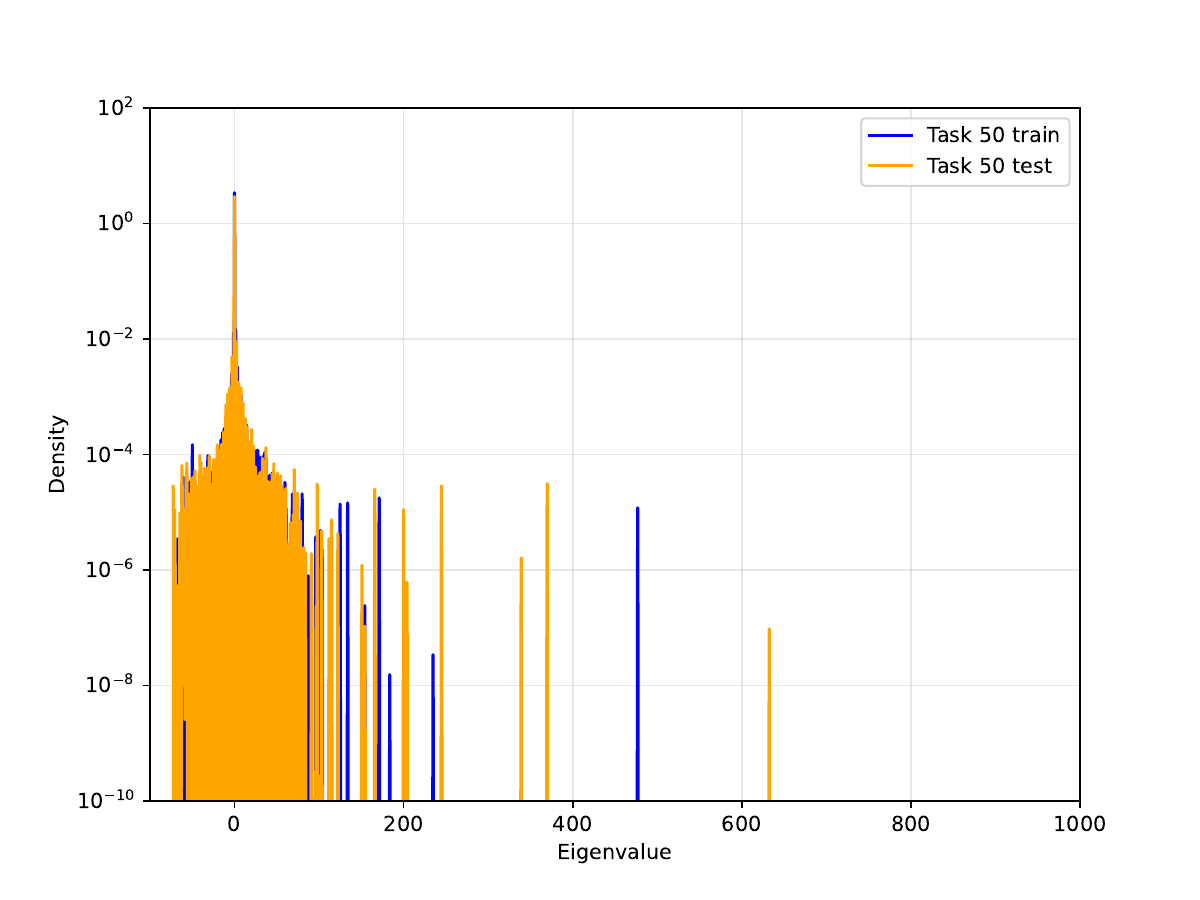}
      \caption{$L2$-ER: task 50}
    \end{subfigure}
    \hfill
    \begin{subfigure}[b]{0.25\textwidth}
      \includegraphics[width=\textwidth]{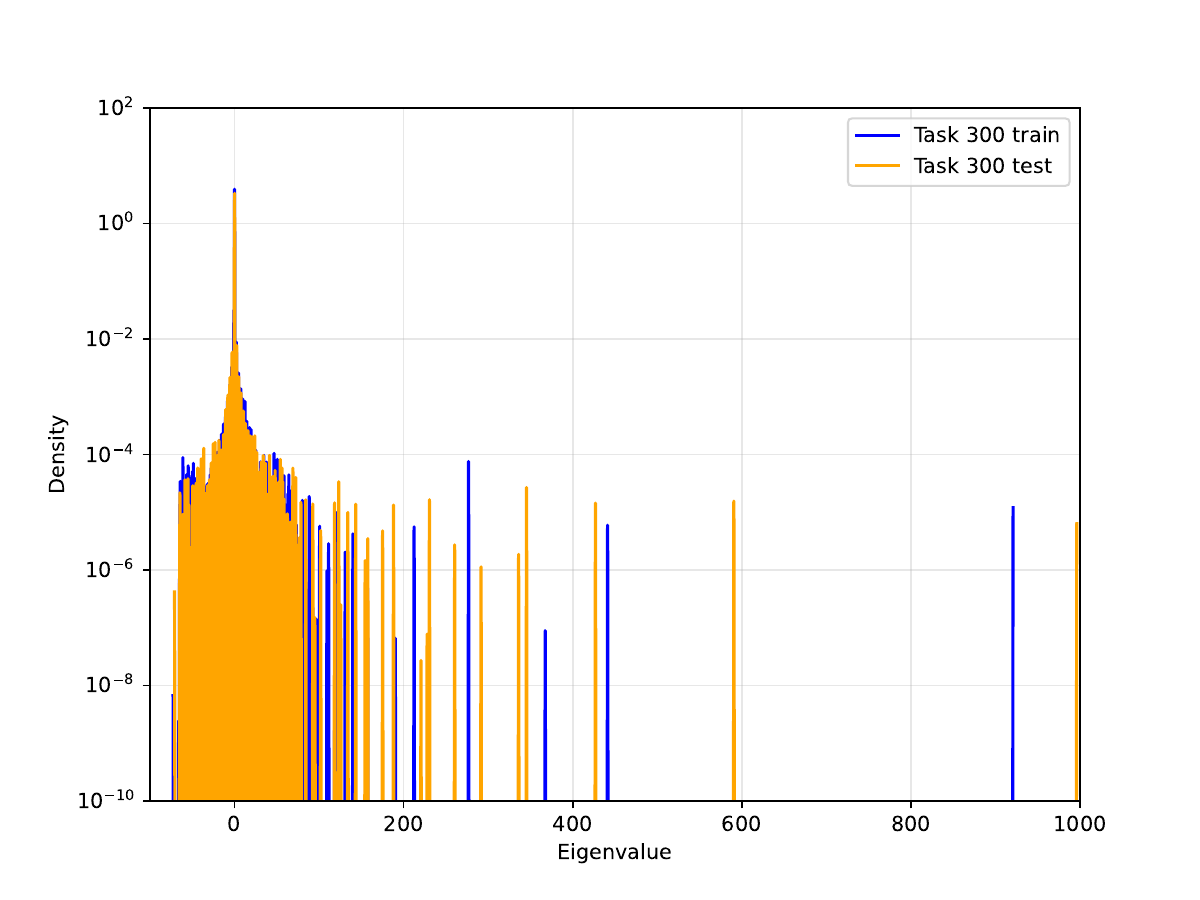}
      \caption{$L2$-ER: task 300}
    \end{subfigure}
    \hfill
    \begin{subfigure}[b]{0.25\textwidth}
      \includegraphics[width=\textwidth]{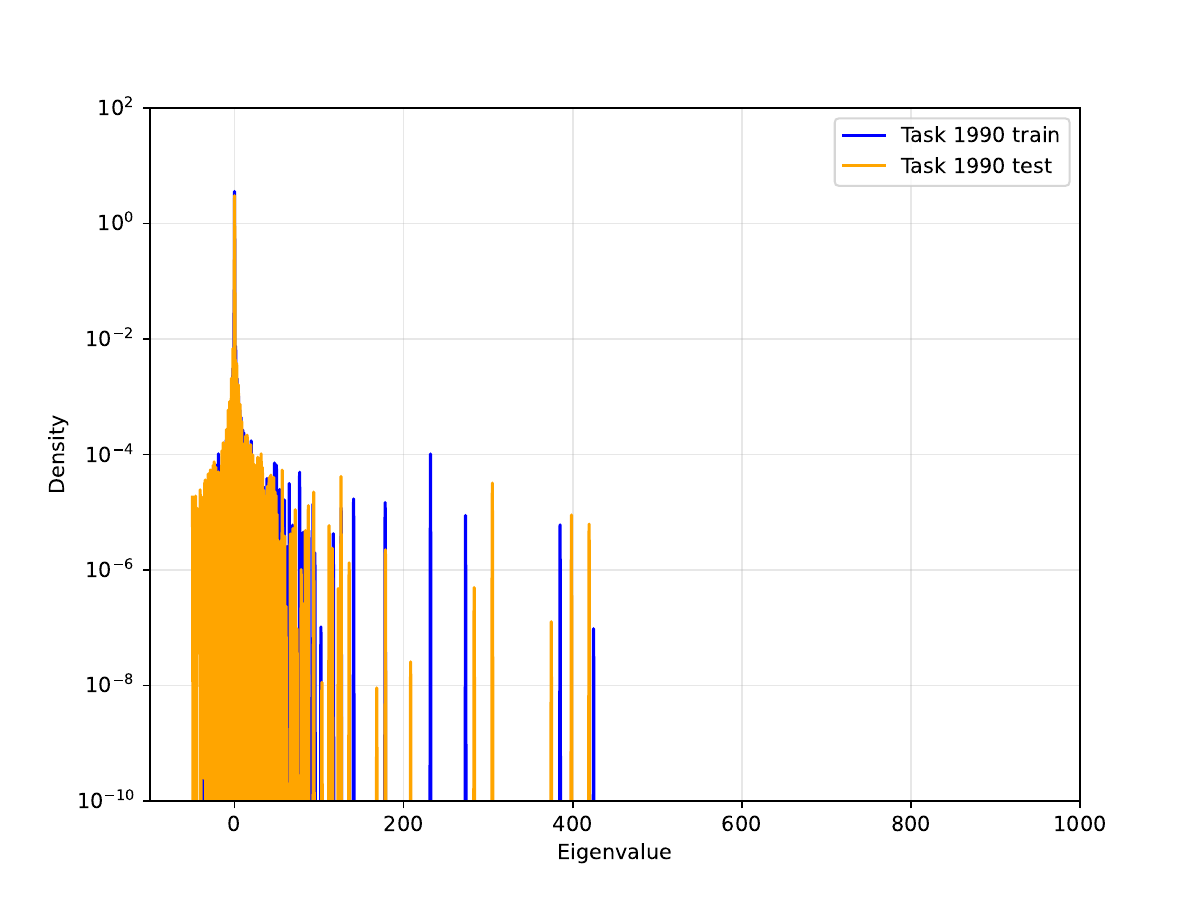}
      \caption{$L2$-ER: task 1990}
    \end{subfigure}
  \end{minipage}

  \caption{Comparison of Hessian eigenspectra at task init on Continual ImageNet. From top to bottom, the algorithms are ordered by increasing accuracy. 
  \textbf{Top:} Algorithms that lose plasticity (ER, BP). 
  \textbf{Bottom:} Algorithms that preserve plasticity ($L2$, CBP, $L2$-ER).}
  \label{fig:imagenet_hessian}
\end{figure}

\clearpage
\section{Incremental CIFAR}
\label{appendix:cifar}
Incremental CIFAR \citep{dohare_loss_2024} is an adaptation of the original CIFAR-100 dataset \citep{krizhevsky2009learning} where classes are incrementally added. Starting with 5 classes, each task adds 5 new classes for classification until all 100 classes are included. Our setup largely follows \cite{dohare_loss_2024}, but we remove random data transformations and learning rate annealing from their design.

\subsection{Network Architecture}
We employ a standard ResNet-18 architecture adapted from \cite{dohare_loss_2024}. The network begins with a $3 \times 3$ convolutional stem with 64 channels, followed by four sequential residual stages. Each stage contains two BasicBlocks: Layer1 keeps the width at 64 channels, while Layers 2–4 progressively double the number of channels to 128, 256, and 512, with the first block of each stage performing downsampling via stride-2 convolutions and $1 \times 1$ skip connections. After the residual stages, a global average pooling layer reduces the spatial dimension, and then pass the resulting feature vector through two fully connected layers with ReLU activations. Finally, a dense classification head outputs logits for the number of classes in the current task.
\begin{verbatim}
ResNet18(
  Stem(
    Conv2d(out_channels=64, kernel_size=3x3, stride=1, padding=1, bias=True)
    BatchNorm()
    ReLU()
  )
  Layer1: Sequential(
    BasicBlock(64 -> 64, stride=1)(
      Conv2d(64, 3x3, stride=1, padding=1, bias=True), BatchNorm(), ReLU,
      Conv2d(64, 3x3, stride=1, padding=1, bias=True), BatchNorm(),
      Skip: Identity,
      Add, ReLU
    )
    BasicBlock(64 -> 64, stride=1)(
      Conv2d(64, 3x3, stride=1, padding=1, bias=True), BatchNorm(), ReLU,
      Conv2d(64, 3x3, stride=1, padding=1, bias=True), BatchNorm(),
      Skip: Identity,
      Add, ReLU
    )
  )
  Layer2: Sequential(
    BasicBlock(64 -> 128, stride=2)(
      Conv2d(128, 3x3, stride=2, padding=1, bias=True), BatchNorm(), ReLU,
      Conv2d(128, 3x3, stride=1, padding=1, bias=True), BatchNorm(),
      Skip: Conv2d(128, 1x1, stride=2, bias=True) + BatchNorm(),
      Add, ReLU
    )
    BasicBlock(128 -> 128, stride=1)(
      Conv2d(128, 3x3, stride=1, padding=1, bias=True), BatchNorm(), ReLU,
      Conv2d(128, 3x3, stride=1, padding=1, bias=True), BatchNorm(),
      Skip: Identity,
      Add, ReLU
    )
  )
  Layer3: Sequential(
    BasicBlock(128 -> 256, stride=2)(
      Conv2d(256, 3x3, stride=2, padding=1, bias=True), BatchNorm(), ReLU,
      Conv2d(256, 3x3, stride=1, padding=1, bias=True), BatchNorm(),
      Skip: Conv2d(256, 1x1, stride=2, bias=True) + BatchNorm(),
      Add, ReLU
    )
    BasicBlock(256 -> 256, stride=1)(
      Conv2d(256, 3x3, stride=1, padding=1, bias=True), BatchNorm(), ReLU,
      Conv2d(256, 3x3, stride=1, padding=1, bias=True), BatchNorm(),
      Skip: Identity,
      Add, ReLU
    )
  )
  Layer4: Sequential(
    BasicBlock(256 -> 512, stride=2)(
      Conv2d(512, 3x3, stride=2, padding=1, bias=True), BatchNorm(), ReLU,
      Conv2d(512, 3x3, stride=1, padding=1, bias=True), BatchNorm(),
      Skip: Conv2d(512, 1x1, stride=2, bias=True) + BatchNorm(),
      Add, ReLU
    )
    BasicBlock(512 -> 512, stride=1)(
      Conv2d(512, 3x3, stride=1, padding=1, bias=True), BatchNorm(), ReLU,
      Conv2d(512, 3x3, stride=1, padding=1, bias=True), BatchNorm(),
      Skip: Identity,
      Add, ReLU
    )
  )
  GlobalAveragePool()
  Dense(out_dims=512, bias=True), ReLU()
  Dense(out_dims=512, bias=True), ReLU()
  Dense(out_dims=num_classes, bias=True)
)
\end{verbatim}

\subsection{Hyperparameters}
In \Cref{tab:incremental_cifar_hyperparams}, we present the default hyperparameters for our ImageNet experiments. Unless otherwise specified, experiments use these defaults.
\begin{table}[htbp]
    \centering
    \begin{tabular}{p{4cm} p{3.5cm} p{7.5cm}}
        \hline
        \textbf{Hyperparam Name} & \textbf{Default} & \textbf{Description} \\
        \hline
        \texttt{study\_name} & \texttt{test} & experiment name \\
        \texttt{seed} & 2024 & base random seed \\
        \texttt{debug} & False & true to enable debug mode \\
        \texttt{platform} & \texttt{gpu} & \{\texttt{cpu}, \texttt{gpu}\} \\
        \texttt{n\_seeds} & 1 & number of seeds \\
        \texttt{env} & \texttt{incremental\_cifar} & environment name \\
        \texttt{agent} & \texttt{l2} & agent options: \{\texttt{er}, \texttt{bp}, \texttt{l2}, \texttt{snp\_l2}, \texttt{snp}, \texttt{cbp}, \texttt{l2\_er}\} \\
        \texttt{alg} & \texttt{ppo} & algorithm type: \{\texttt{actor\_critic}, \texttt{ppo}\} \\
        \texttt{activation} & \texttt{relu} & activation options: \{\texttt{relu}, \texttt{tanh}\} \\
        \texttt{lr} & {[}0.1{]} & learning rate(s) \\
        \texttt{optimizer} & \texttt{sgd} & \{\texttt{adam}, \texttt{sgd}\} \\
        \texttt{weight\_decay} & 0.0005 & $L2$ weight decay \\
        \texttt{to\_perturb} & False & whether to perturb the input data \\
        \texttt{perturb\_scale} & $1\times 10^{-5}$ & magnitude of input perturbation \\
        \texttt{mini\_batch\_size} & 100 & minibatch size \\
        \texttt{no\_anneal\_lr} & False & if true, do not anneal the learning rate \\
        \texttt{max\_grad\_norm} & $1\times 10^{9}$ & gradient clipping threshold \\
        \texttt{num\_tasks} & 20 & number of tasks used in training/eval \\
        \texttt{num\_epochs} & 200 & number of epochs per task \\
        \texttt{momentum} & 0.9 & SGD momentum coefficient \\
        \hline
        \multicolumn{3}{l}{\textbf{effective rank}} \\
        \hline
        \texttt{er\_lr} & {[}0.01{]} & ER learning rate \\
        \texttt{er\_batch} & 5 & batch size for ER computation \\
        \texttt{er\_step} & 1 & ER update frequency \\
        \hline
        \multicolumn{3}{l}{\textbf{resetting}} \\
        \hline
        \texttt{reset} & False & reset the network after each task \\
        \hline
        \multicolumn{3}{l}{\textbf{evaluation}} \\
        \hline
        \texttt{evaluate} & True & evaluate after each task \\
        \texttt{evaluate\_previous} & False & evaluate on previous tasks \\
        \texttt{eval\_size} & 2000 & number of evaluation samples per task \\
        \texttt{compute\_hessian} & False & whether to compute Hessian spectrum \\
        \texttt{compute\_hessian\_size} & 2000 & samples used for Hessian computation \\
        \texttt{compute\_hessian\_interval} & 1 & interval in tasks between Hessian runs \\
        \hline
        \multicolumn{3}{l}{\textbf{continual backpropagation}} \\
        \hline
        \texttt{cont\_backprop} & False & enable CBP \\
        \texttt{replacement\_rate} & $1\times 10^{-6}$ & CBP replacement probability per step \\
        \texttt{decay\_rate} & 0.99 & exponential decay for CBP statistics \\
        \texttt{maturity\_threshold} & 100 & steps before a unit is considered “mature” \\
        \hline
    \end{tabular}
    \vspace{0.5em}
    \caption{Incremental CIFAR default hyperparameters}
    \label{tab:incremental_cifar_hyperparams}
\end{table}

\subsection{Experiments}
We conduct our hyperparameter sweep experiments over 5 seeds for all hyperparameters listed in \Cref{table:cifar_combined}, and report the selected best hyperparameters in \Cref{table:cifar_combined}.
\begin{table*}[htbp]
    \centering
    \renewcommand{\arraystretch}{1.12}
    \begin{tabular}{l l l l}
        \hline
        \textbf{Algorithm} & \textbf{Hyperparameter} & \textbf{Sweep Values} & \textbf{Best Value} \\
        \hline
        BP   & Learning rate     & $\{1\times10^{-2},\,1\times10^{-3},\,1\times10^{-4}\}$ & $1\times10^{-2}$ \\[0.2em]
        ER   & Learning rate     & $\{1\times10^{-2},\,1\times10^{-3},\,1\times10^{-4}\}$ & $1\times10^{-2}$ \\
             & Effective rank lr & $\{1\times10^{-3},\,1\times10^{-4},\,1\times10^{-5}\}$ & $1\times10^{-4}$ \\[0.2em]
        $L2$-ER & Learning rate     & $\{1\times10^{-2}\}$ & $1\times10^{-2}$ \\
             & Effective rank lr & $\{1\times10^{-2},\,1\times10^{-3},\,1\times10^{-4}\}$ & $1\times10^{-3}$ \\
             & Weight decay      & $\{1\times10^{-3},\,1\times10^{-4},\,1\times10^{-5}\}$ & $1\times10^{-3}$ \\[0.2em]
        CBP  & Learning rate     & $\{1\times10^{-2},\,1\times10^{-3},\,1\times10^{-4}\}$ & $1\times10^{-2}$ \\
             & Replacement rate  & $\{1\times10^{-4},\,1\times10^{-5},\,1\times10^{-6}\}$ & $1\times10^{-6}$ \\[0.2em]
        $L2$   & Learning rate     & $\{1\times10^{-2},\,1\times10^{-3},\,1\times10^{-4}\}$ & $1\times10^{-2}$ \\
             & Weight decay      & $\{1\times10^{-3},\,1\times10^{-4},\,1\times10^{-5}\}$ & $1\times10^{-4}$ \\[0.2em]
        LayerNorm-L2   & Learning rate     & $\{1\times10^{-2},\,1\times10^{-3},\,1\times10^{-4}\}$ & $1\times10^{-2}$ \\
             & Weight decay      & $\{1\times10^{-3},\,1\times10^{-4},\,1\times10^{-5}\}$ & $1\times10^{-4}$ \\[0.2em]
        Spectral   & Learning rate     & $\{1\times10^{-2},\,1\times10^{-3},\,1\times10^{-4}\}$ & $1\times10^{-2}$ \\
             & Spectral Strengths      & $\{1\times10^{-3},\,1\times10^{-4},\,1\times10^{-5}\}$ & $1\times10^{-2}$ \\[0.2em]
        \hline
    \end{tabular}
    \caption{Hyperparameter sweeps and selected best values for Incremental CIFAR across all algorithms.}
    \label{table:cifar_combined}
\end{table*}

\subsection{Incremental CIFAR hessian spectrum}
We use the best hyperparameters in \Cref{table:cifar_combined} to plot the Hessian spectrum over 5 seeds, with results shown in \Cref{fig:incremental_cifar_hessian}. Again consistent with our previous experiments, algorithms that preserve plasticity maintain a dense Hessian spectrum, whereas those that lose plasticity exhibit a sparse spectrum. This demonstrates our claim that spectral collapse is a clear indicator of plasticity loss.
\begin{figure}[htbp]
  \centering
  
  \begin{minipage}{\textwidth}
    \centering
    \textbf{Algorithms that \emph{lose plasticity}}
    
    \begin{subfigure}[b]{0.25\textwidth}
      \includegraphics[width=\textwidth]{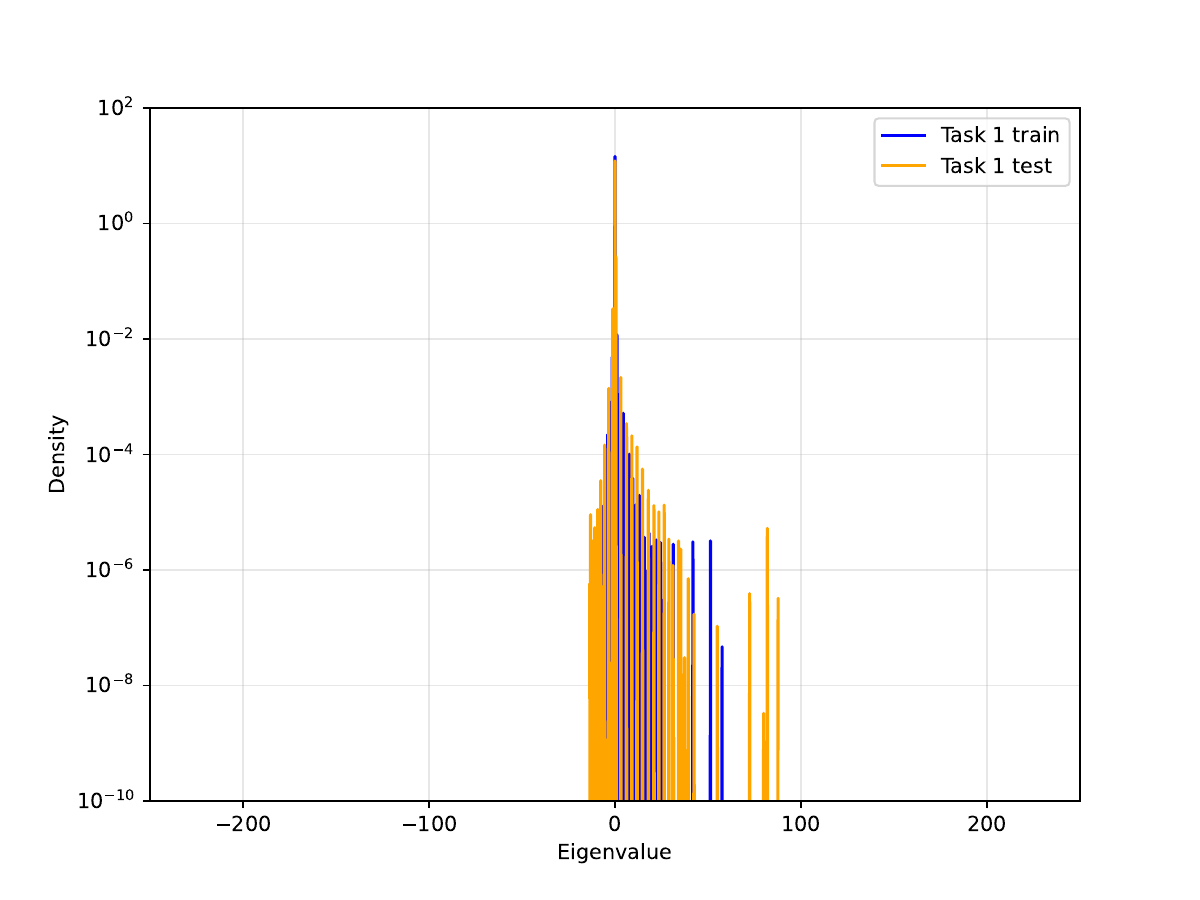}
      \caption{ER: task 1}
    \end{subfigure}
    \hfill
    \begin{subfigure}[b]{0.25\textwidth}
      \includegraphics[width=\textwidth]{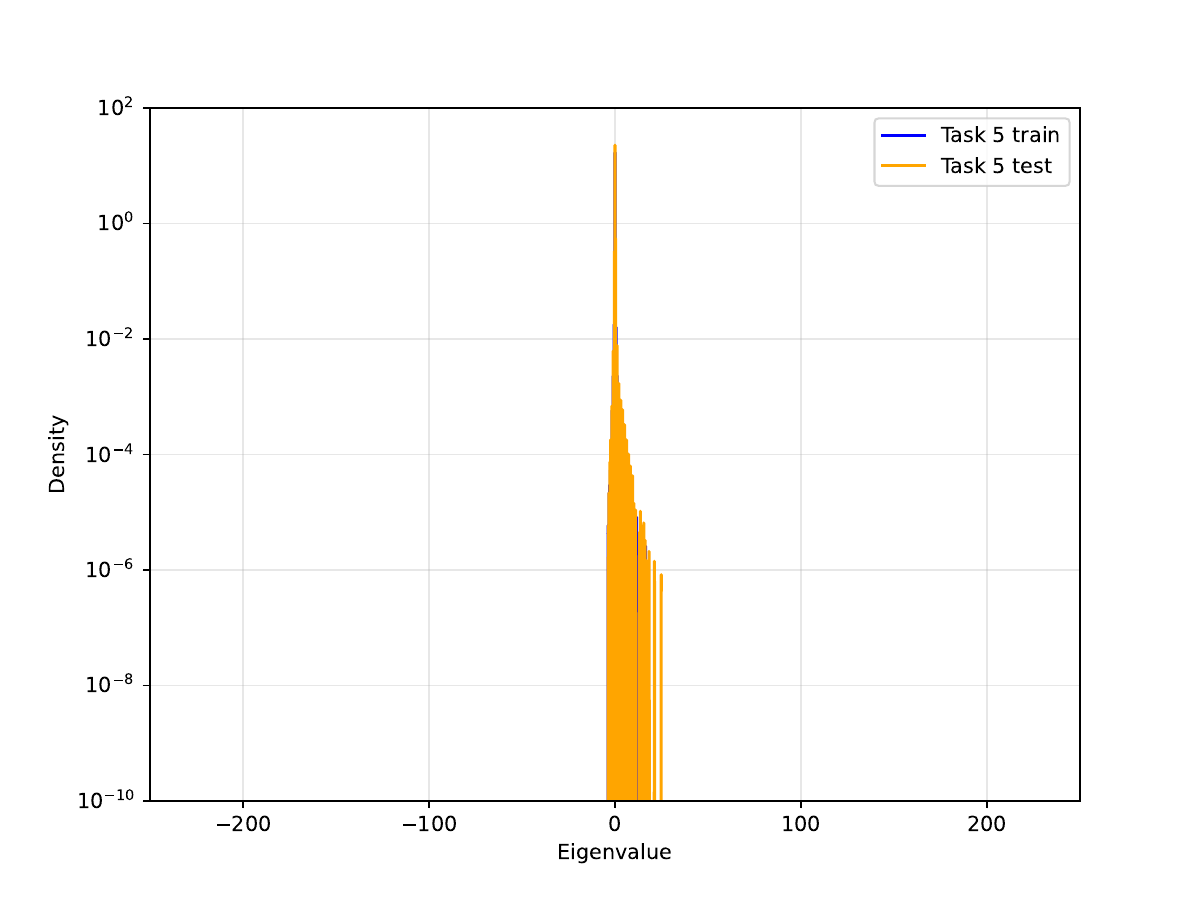}
      \caption{ER: task 5}
    \end{subfigure}
    \hfill
    \begin{subfigure}[b]{0.25\textwidth}
      \includegraphics[width=\textwidth]{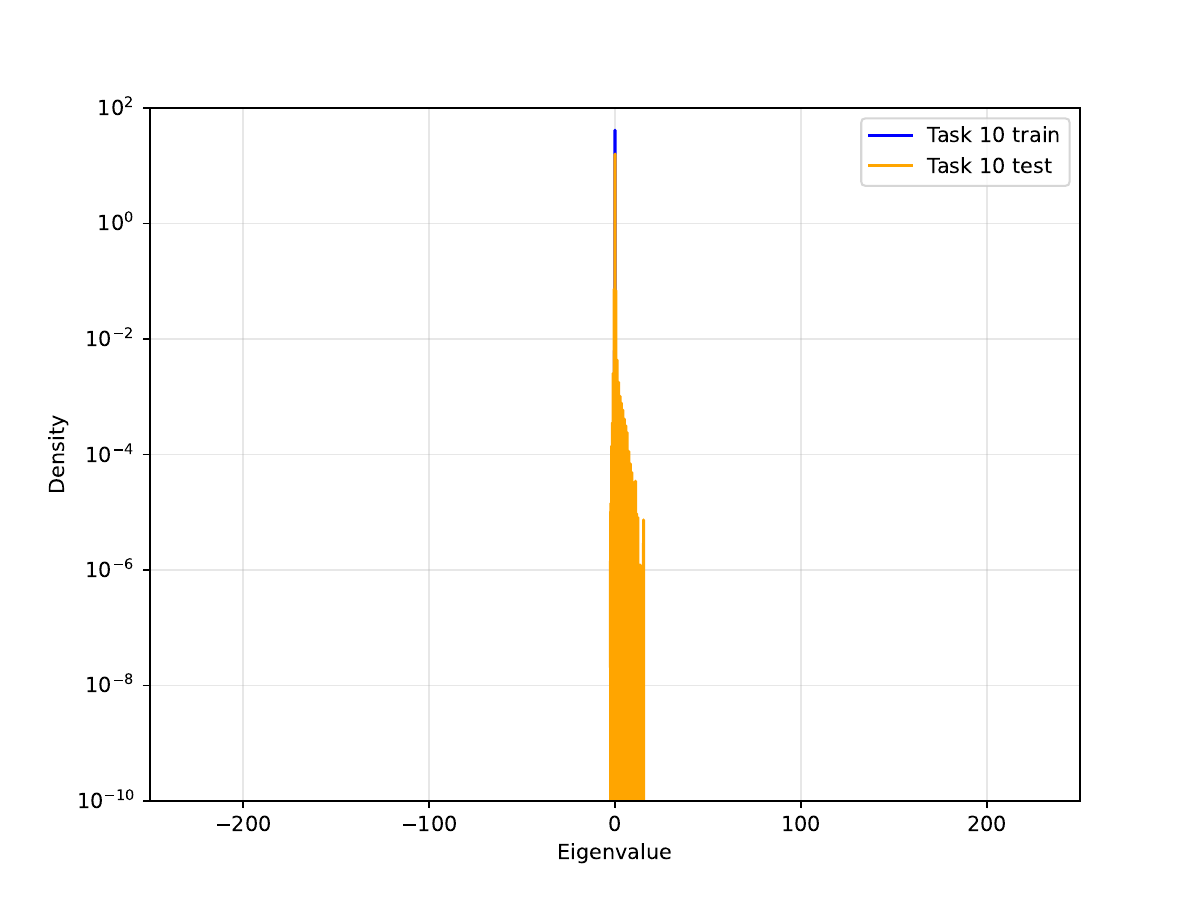}
      \caption{ER: task 10}
    \end{subfigure}

    \begin{subfigure}[b]{0.25\textwidth}
      \includegraphics[width=\textwidth]{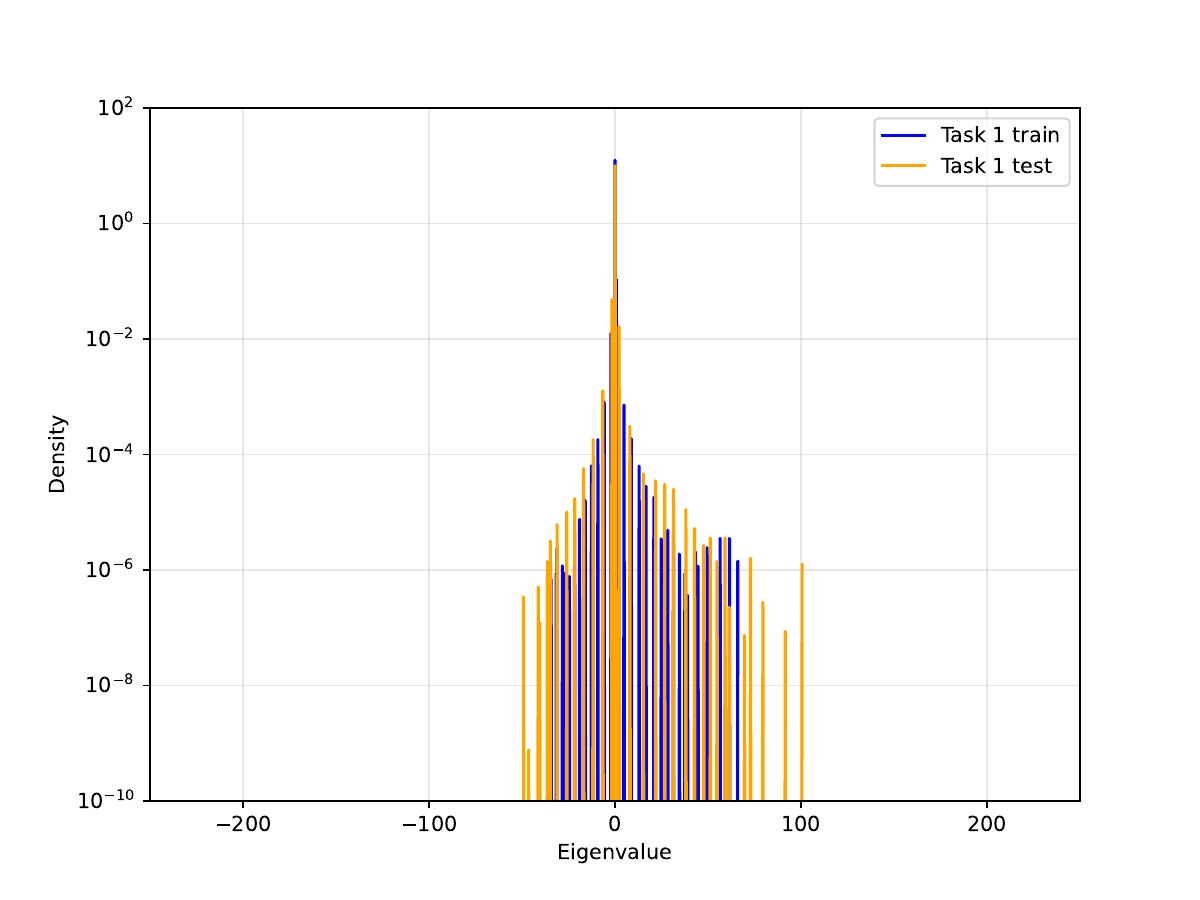}
      \caption{BP: task 1}
    \end{subfigure}
    \hfill
    \begin{subfigure}[b]{0.25\textwidth}
      \includegraphics[width=\textwidth]{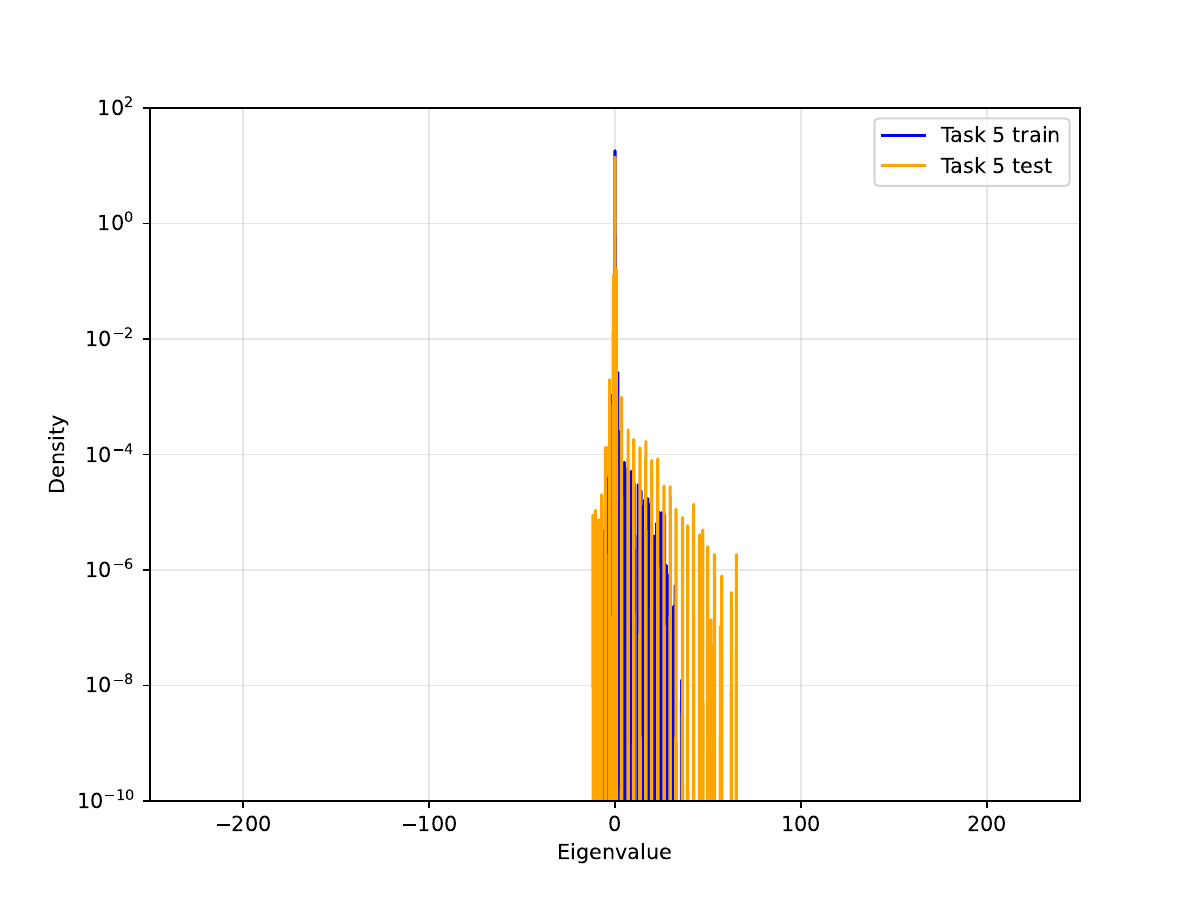}
      \caption{BP: task 5}
    \end{subfigure}
    \hfill
    \begin{subfigure}[b]{0.25\textwidth}
      \includegraphics[width=\textwidth]{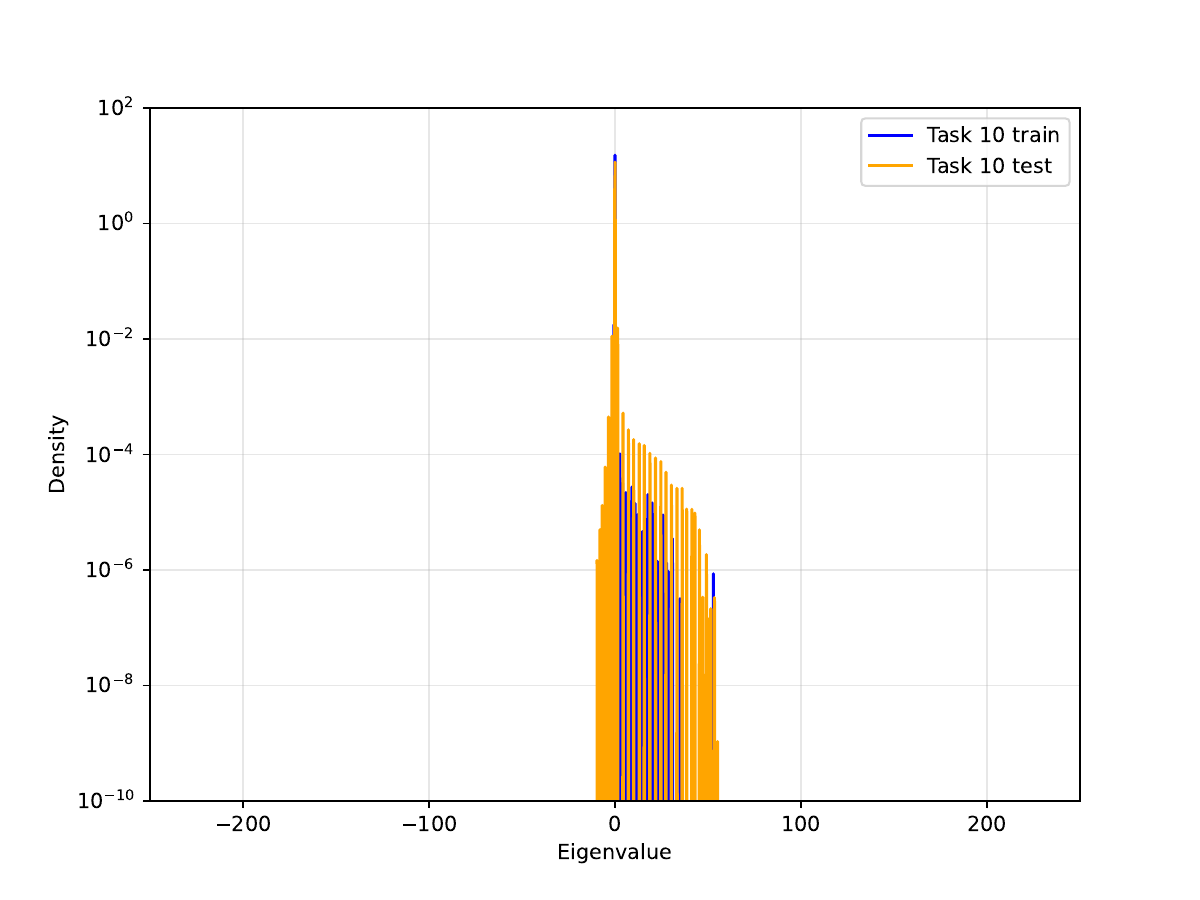}
      \caption{BP: task 10}
    \end{subfigure}

    \begin{subfigure}[b]{0.25\textwidth}
      \includegraphics[width=\textwidth]{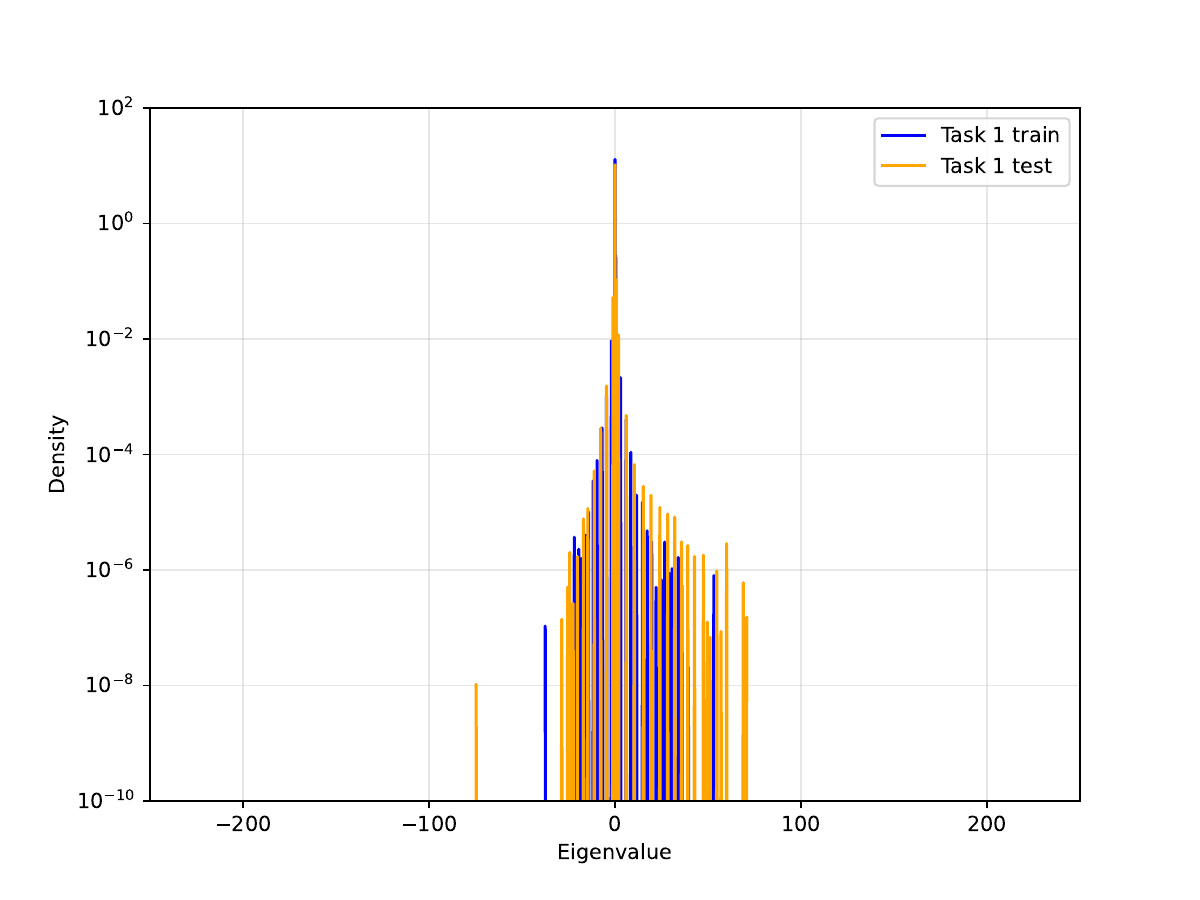}
      \caption{CBP: task 1}
    \end{subfigure}
    \hfill
    \begin{subfigure}[b]{0.25\textwidth}
      \includegraphics[width=\textwidth]{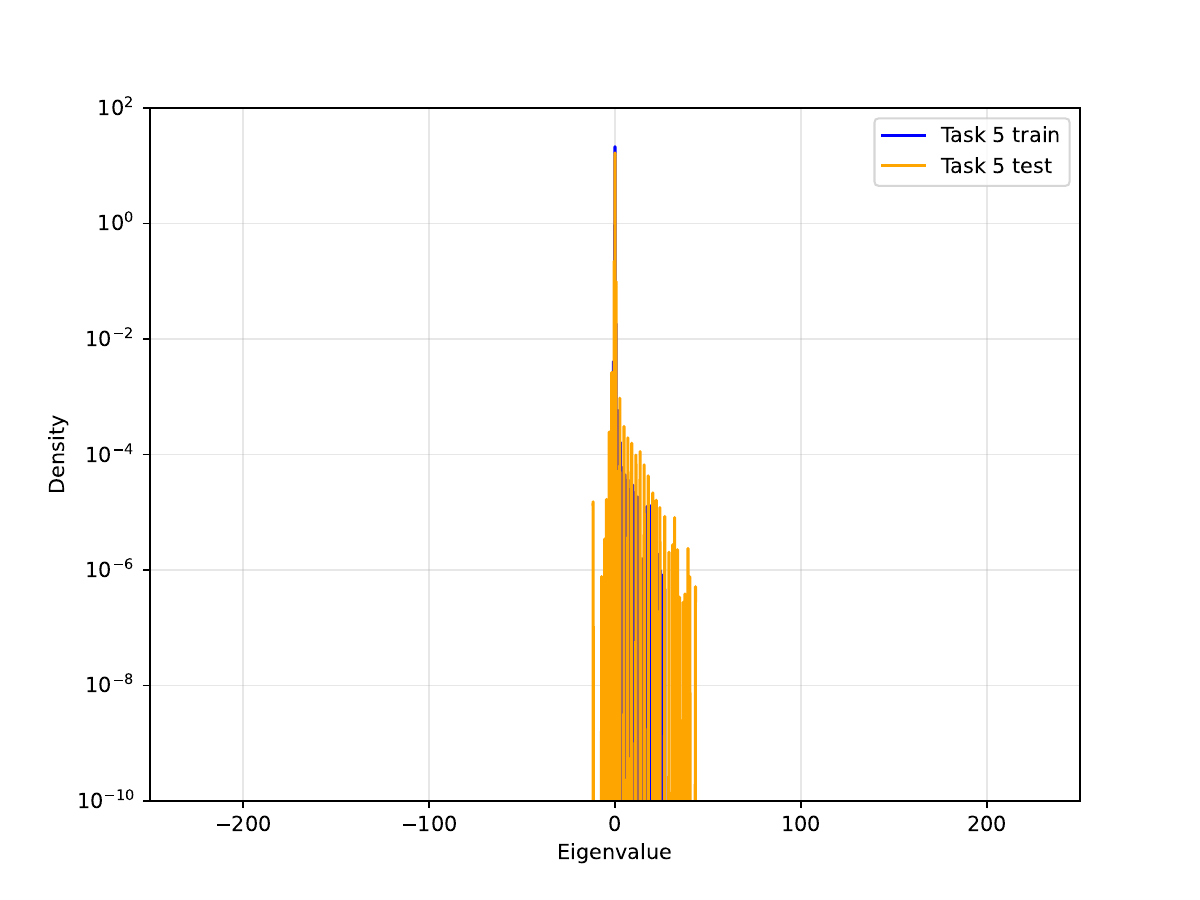}
      \caption{CBP: task 5}
    \end{subfigure}
    \hfill
    \begin{subfigure}[b]{0.25\textwidth}
      \includegraphics[width=\textwidth]{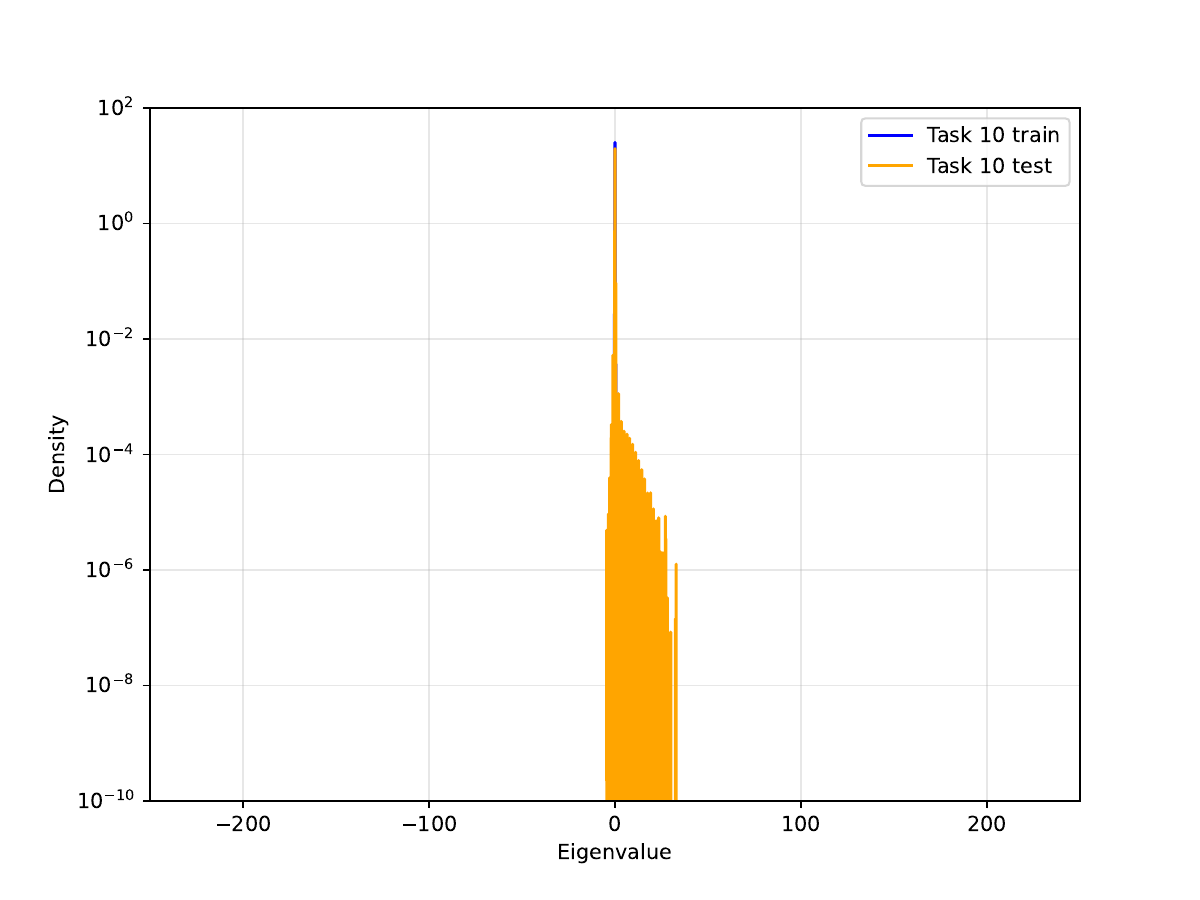}
      \caption{CBP: task 10}
    \end{subfigure}
  \end{minipage}
  
  \vspace{2em}
  
  \begin{minipage}{\textwidth}
    \centering
    \textbf{Algorithms that \emph{preserve plasticity}}
    
    \begin{subfigure}[b]{0.25\textwidth}
      \includegraphics[width=\textwidth]{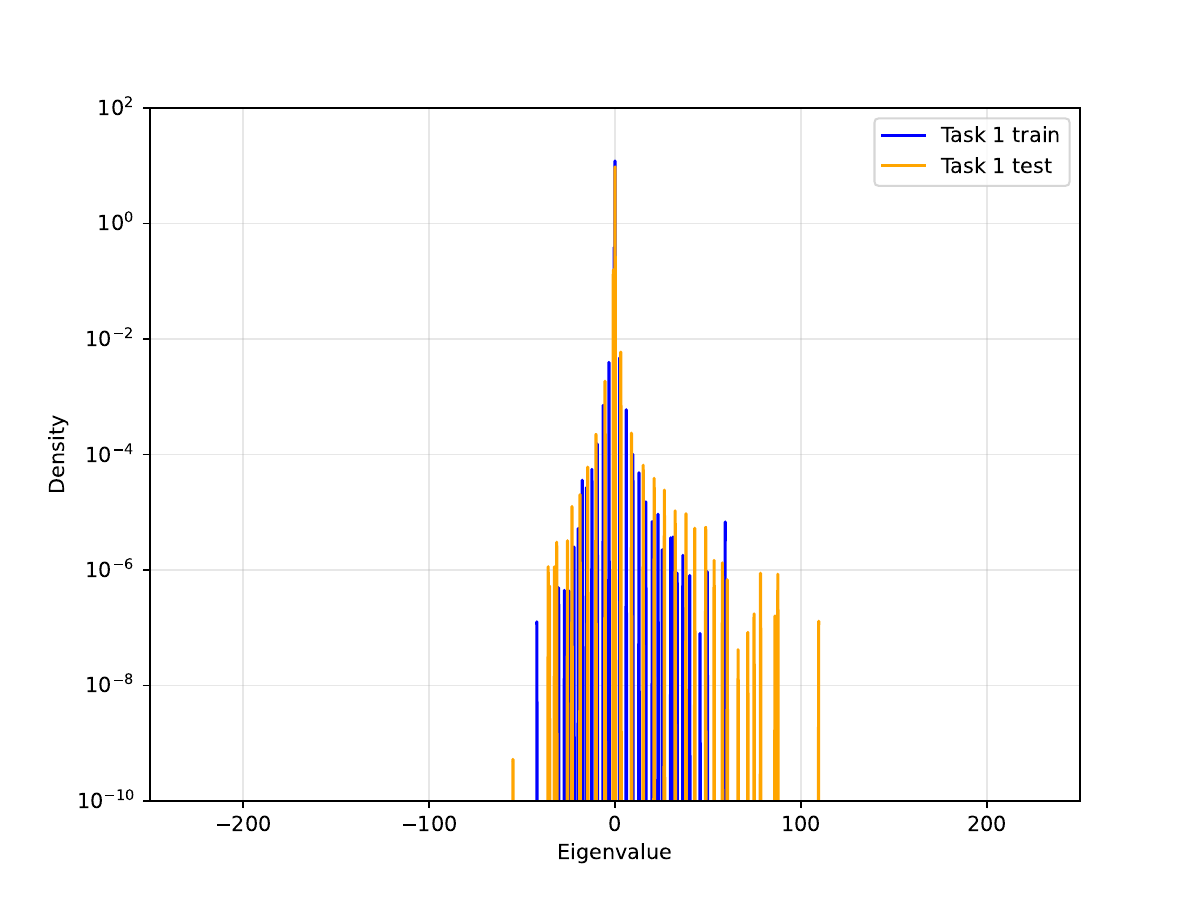}
      \caption{$L2$: task 1}
    \end{subfigure}
    \hfill
    \begin{subfigure}[b]{0.25\textwidth}
      \includegraphics[width=\textwidth]{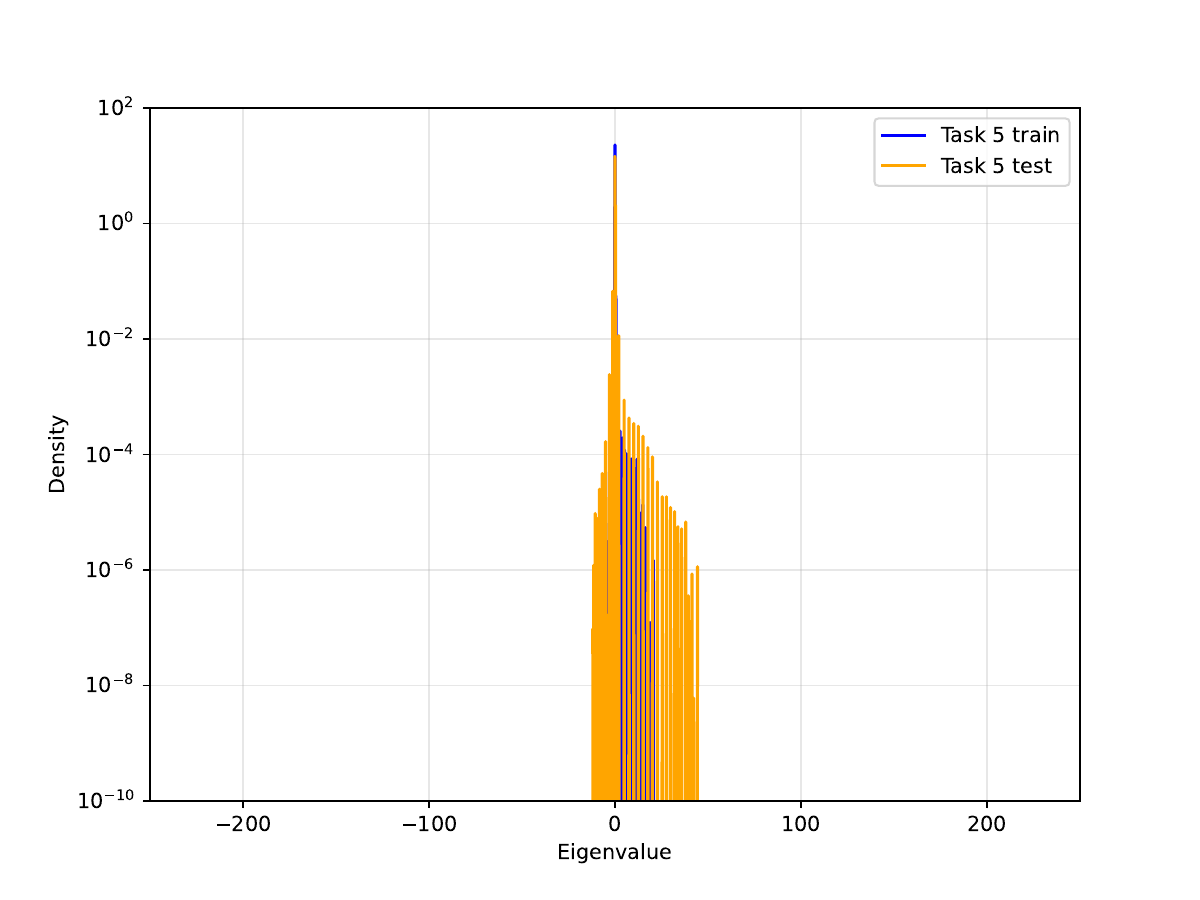}
      \caption{$L2$: task 5}
    \end{subfigure}
    \hfill
    \begin{subfigure}[b]{0.25\textwidth}
      \includegraphics[width=\textwidth]{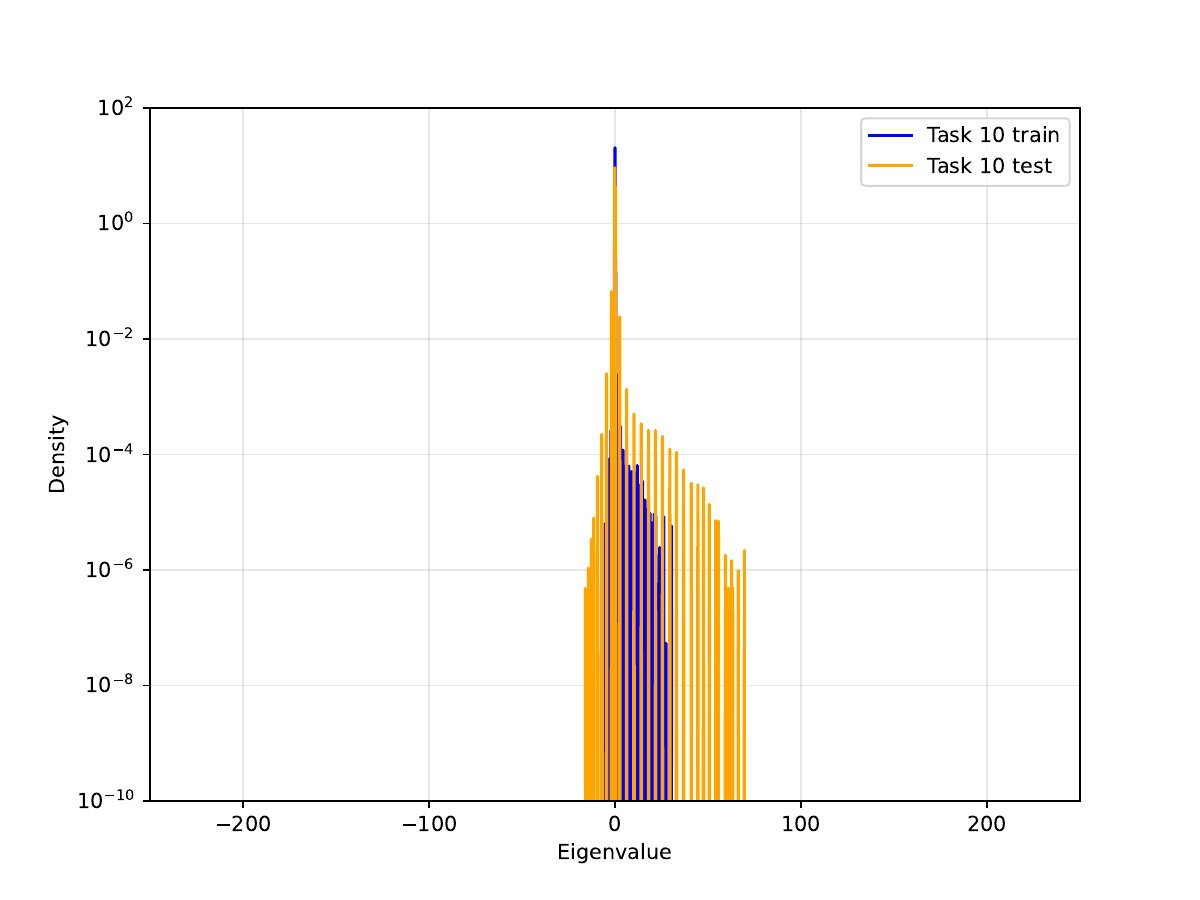}
      \caption{$L2$: task 10}
    \end{subfigure}

    \begin{subfigure}[b]{0.25\textwidth}
      \includegraphics[width=\textwidth]{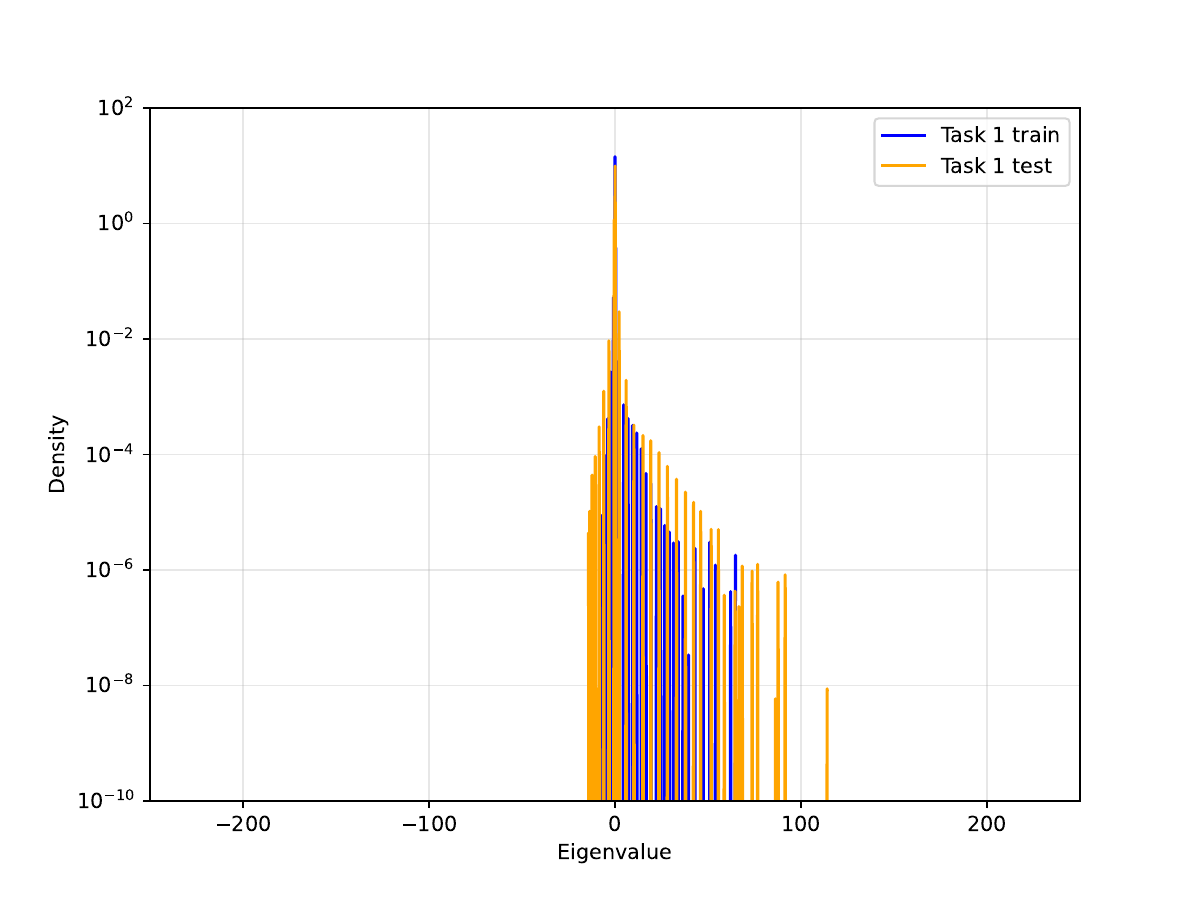}
      \caption{$L2$-ER: task 1}
    \end{subfigure}
    \hfill
    \begin{subfigure}[b]{0.25\textwidth}
      \includegraphics[width=\textwidth]{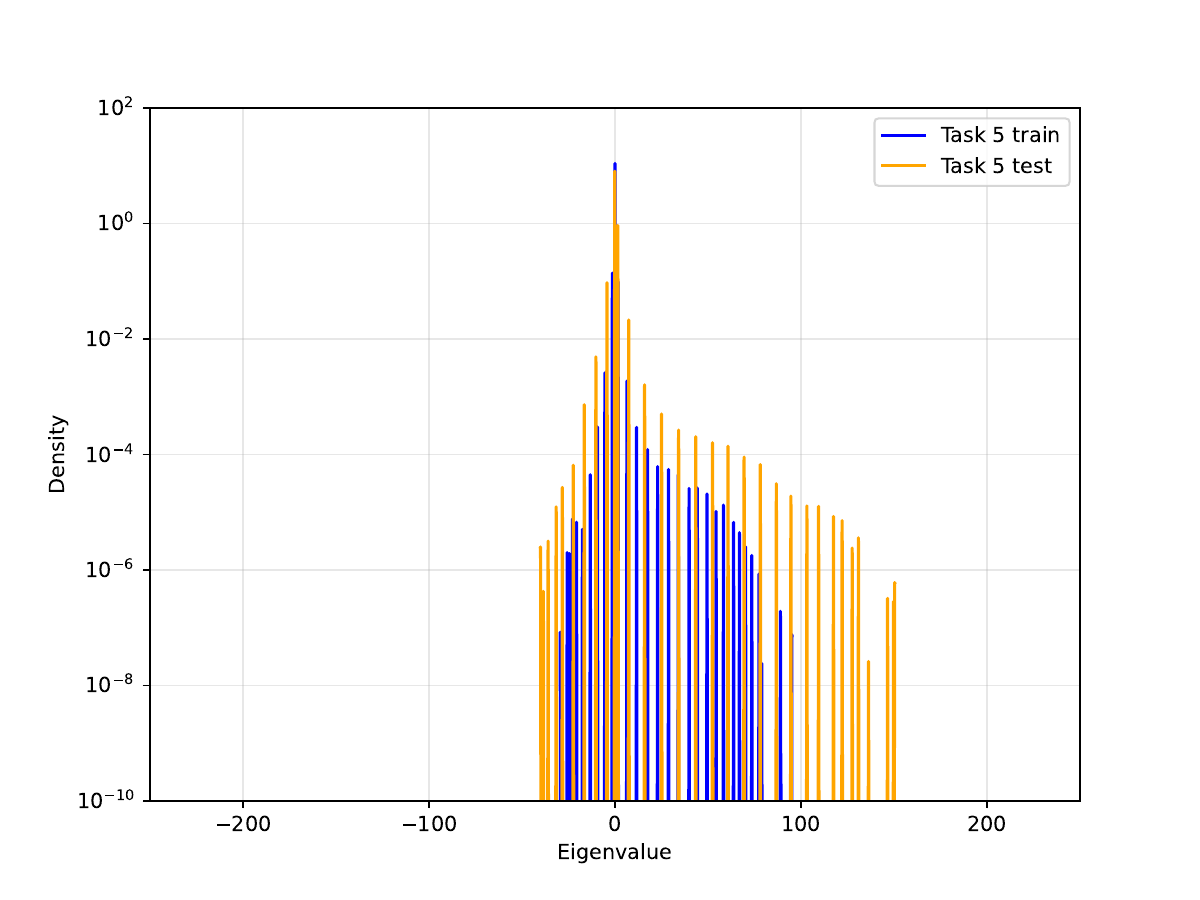}
      \caption{$L2$-ER: task 5}
    \end{subfigure}
    \hfill
    \begin{subfigure}[b]{0.25\textwidth}
      \includegraphics[width=\textwidth]{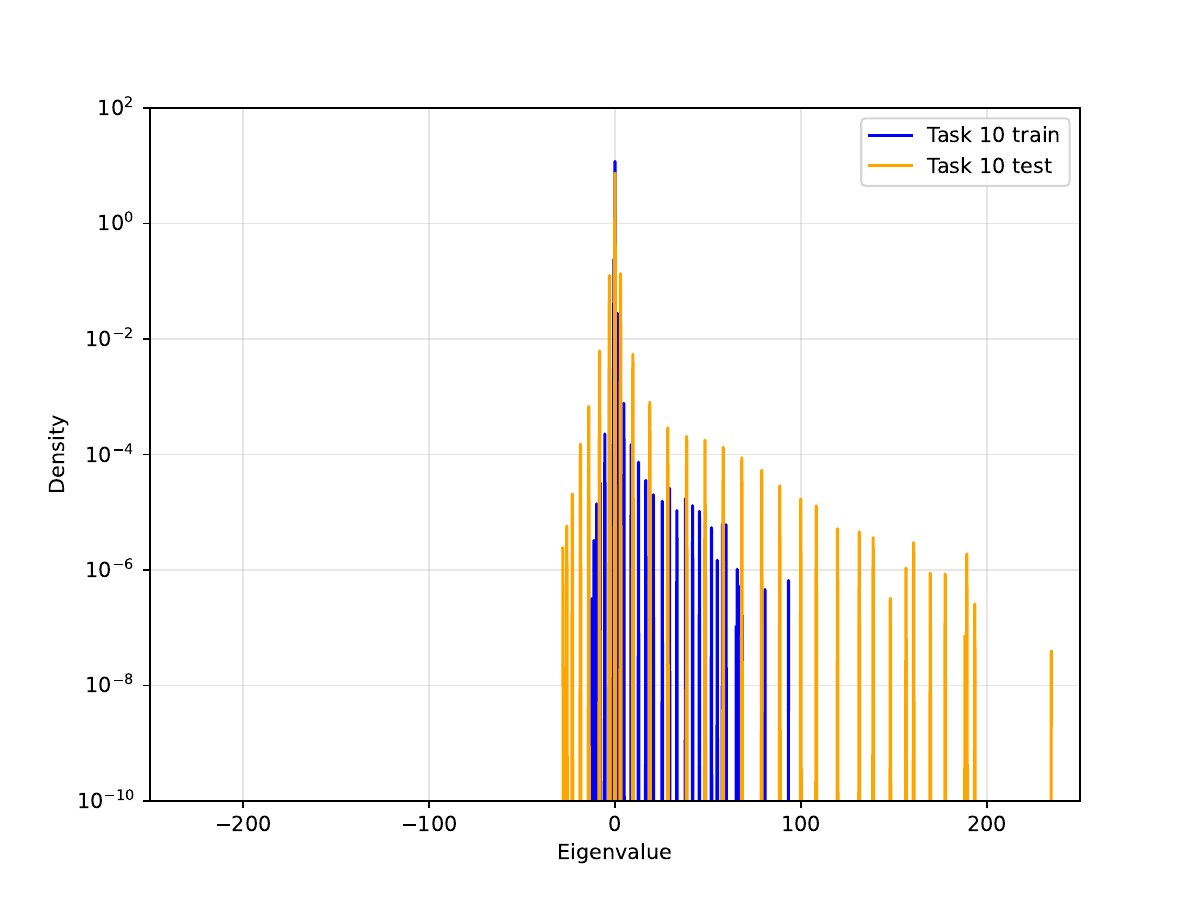}
      \caption{$L2$-ER: task 10}
    \end{subfigure}
  \end{minipage}

  \caption{Comparison of Hessian eigenspectra on Incremental CIFAR. Algorithms are ordered by increasing accuracy. 
  \textbf{Top:} Algorithms that lose plasticity (ER, BP, CBP). 
  \textbf{Bottom:} Algorithms that preserve plasticity ($L2$, $L2$-ER).}
  \label{fig:incremental_cifar_hessian}
\end{figure}

\clearpage
\section{Slippery Ant}
\label{appendix:slippery_ant}
Slippery Ant is a continual reinforcement learning environment where the friction between the ant and the ground changes every 2 million steps. The agent has to adapt its policy to this new friction. In our experiments, we use PPO \citep{schulman2017proximalpolicyoptimizationalgorithms} as our base algorithm, which is an online learning method designed for training on vectorized environments. It is parallelized using the JAX library \citep{jax2018github} based on a batch experimentation library written in JAX \citep{lu2022discovered}.

\subsection{Network Architecture}
\begin{verbatim}
ActorCritic(
  Actor(
    Sequential(
      (0): Dense(in_dims=hidden_size, out_dims=hidden_size, bias=True)
      (1): ReLU()
      (2): Dense(in_dims=hidden_size, out_dims=action_dims, bias=True)
      (3): Categorical() or MultivariateNormalDiag()
    )
  )
  Critic(
    Sequential(
      (0): Dense(in_dims=hidden_size, out_dims=hidden_size, bias=True)
      (1): ReLU()
      (2): Dense(in_dims=hidden_size, out_dims=1, bias=True)
    )
  )
)
\end{verbatim}

\subsection{Hyperparameters}
The default hyperparameters is in \Cref{tab:nonstationary_policy_hyperparams}. Unless otherwise specified, experiments use the default hyperparameters.
\begin{table*}[htbp]
    \centering
    \renewcommand{\arraystretch}{1.12}
    \begin{tabular}{l l l l}
        \hline
        \textbf{Algorithm} & \textbf{Hyperparameter} & \textbf{Sweep Values} & \textbf{Best Value} \\
        \hline
        BP   & Learning rate     & $\{1\times10^{-2},\,1\times10^{-3},\,1\times10^{-4}\}$ & $1\times10^{-4}$ \\[0.2em]
        ER   & Learning rate     & $\{1\times10^{-2},\,1\times10^{-2},\,1\times10^{-4}\}$ & $1\times10^{-4}$ \\
             & Effective rank lr & $\{1\times10^{-3},\,1\times10^{-4},\,1\times10^{-5}\}$ & $1\times10^{-7}$ \\[0.2em]
        $L2$-ER & Learning rate     & $\{1\times10^{-4}\}$ & $1\times10^{-4}$ \\
             & Effective rank lr & $\{1\times10^{-5},\,1\times10^{-6},\,1\times10^{-7}\}$ & $1\times10^{-6}$ \\
             & Weight decay      & $\{1\times10^{-3},\,1\times10^{-4},\,1\times10^{-5}\}$ & $1\times10^{-3}$ \\[0.2em]
        CBP+$L2$ & Learning rate     & $\{1\times10^{-4}\}$ & $1\times10^{-4}$ \\
               & Replacement rate  & $\{1\times10^{-5},\,1\times10^{-6},\,1\times10^{-7}\}$ & $1\times10^{-7}$ \\
               & Weight decay      & $\{1\times10^{-3},\,1\times10^{-4},\,1\times10^{-5}\}$ & -- \\[0.2em]
        $L2$   & Learning rate     & $\{1\times10^{-2},\,1\times10^{-3},\,1\times10^{-4}\}$ & $1\times10^{-4}$ \\
             & Weight decay      & $\{1\times10^{-3},\,1\times10^{-4},\,1\times10^{-5}\}$ & $1\times10^{-3}$ \\[0.2em]
        LayerNorm-L2   & Learning rate     & $\{1\times10^{-2},\,1\times10^{-3},\,1\times10^{-4}\}$ & $1\times10^{-4}$ \\
             & Weight decay      & $\{1\times10^{-3},\,1\times10^{-4},\,1\times10^{-5}\}$ & $1\times10^{-3}$ \\[0.2em]
        Spectral   & Learning rate     & $\{1\times10^{-2},\,1\times10^{-3},\,1\times10^{-4}\}$ & $1\times10^{-4}$ \\
             & Weight decay      & $\{1\times10^{-3},\,1\times10^{-4},\,1\times10^{-5}\}$ & $1\times10^{-3}$ \\[0.2em]
        \hline
    \end{tabular}
    \caption{Hyperparameter sweeps and selected best values for Slippery Ant across all algorithms.}
    \label{table:rl_combined}
\end{table*}

\begin{table}[htbp]
    \centering
    \begin{tabular}{p{4cm} p{4cm} p{7.5cm}}
        \hline
        \textbf{Hyperparam Name} & \textbf{Default} & \textbf{Description} \\
        \hline
        \texttt{env} & \texttt{slippery\_ant} & environment name \\
        \texttt{num\_envs} & 1 & number of parallel environments \\
        \texttt{gamma} & 0.99 & discount factor \\
        \texttt{num\_steps} & 2048 & steps per rollout \\
        \texttt{update\_epochs} & 10 & number of optimization epochs per update \\
        \texttt{num\_minibatches} & 128 & number of minibatches per epoch \\
        \texttt{activation} & \texttt{relu} & activation function: \{\texttt{relu}, \texttt{tanh}\} \\
        \texttt{optimizer} & \texttt{adam} & optimizer: \{\texttt{adam}, \texttt{sgd}, \texttt{muon}\} \\
        \texttt{lr} & {[}2.5e-4{]} & learning rate(s) \\
        \texttt{lambda0} & {[}0.95{]} & GAE parameter $\lambda$ \\
        \texttt{vf\_coeff} & {[}1{]} & value function loss coefficient \\
        \texttt{weight\_decay} & 0.0 & $L2$ weight decay \\
        \texttt{beta\_1} & 0.9 & Adam $\beta_1$ coefficient \\
        \texttt{beta\_2} & 0.999 & Adam $\beta_2$ coefficient \\
        \hline
        \multicolumn{3}{l}{\textbf{continual backpropagation}} \\
        \hline
        \texttt{cont\_backprop} & False & enable CBP \\
        \texttt{replacement\_rate} & $1\times 10^{-4}$ & CBP replacement probability per step \\
        \texttt{decay\_rate} & 0.99 & exponential decay for CBP statistics \\
        \texttt{maturity\_threshold} & $1\times 10^4$ & steps before a unit is considered mature \\
        \hline
        \multicolumn{3}{l}{\textbf{effective rank}} \\
        \hline
        \texttt{er} & False & enable ER regularization \\
        \texttt{er\_lr} & {[}0.01{]} & ER learning rate \\
        \texttt{er\_batch} & 128 & batch size for ER computation \\
        \texttt{er\_step} & 1 & ER update frequency \\
        \hline
        \multicolumn{3}{l}{\textbf{hessian computation}} \\
        \hline
        \texttt{compute\_hessian\_init} & False & compute Hessian at initialization \\
        \texttt{compute\_hessian\_end} & False & compute Hessian at the end \\
        \texttt{compute\_hessian\_size} & 2000 & number of samples for Hessian computation \\
        \texttt{compute\_hessian\_interval} & 1 & interval (in tasks) between Hessian runs \\
        \hline
        \multicolumn{3}{l}{\textbf{PPO}} \\
        \hline
        \texttt{hidden\_size} & 256 & size of hidden layers \\
        \texttt{total\_steps} & $5\times 10^6$ & total number of training steps \\
        \texttt{entropy\_coeff} & 0.01 & entropy regularization coefficient \\
        \texttt{clip\_eps} & 0.2 & PPO clipping parameter \\
        \texttt{max\_grad\_norm} & $1\times 10^9$ & gradient clipping threshold \\
        \texttt{anneal\_lr} & False & anneal learning rate over training \\
        \hline
        \multicolumn{3}{l}{\textbf{evaluation}} \\
        \hline
        \texttt{steps\_log\_freq} & 4 & logging frequency in steps \\
        \texttt{update\_log\_freq} & 8 & logging frequency in updates \\
        \texttt{save\_checkpoints} & False & save checkpoints during training \\
        \texttt{save\_runner\_state} & False & save final runner state \\
        \texttt{seed} & 2020 & random seed \\
        \texttt{n\_seeds} & 5 & number of random seeds \\
        \texttt{platform} & \texttt{gpu} & \{\texttt{cpu}, \texttt{gpu}\} \\
        \texttt{debug} & False & enable debug mode \\
        \texttt{show\_discounted} & False & show discounted returns in logs \\
        \texttt{study\_name} & \texttt{batch\_ppo\_test} & experiment name \\
        \texttt{change\_every} & $2\times 10^6$ & steps between environment changes \\
        \texttt{friction\_seed} & 0 & seed for friction schedule \\
        \hline
    \end{tabular}
    \vspace{0.5em}
    \caption{Non-stationary policy default hyperparameters}
    \label{tab:nonstationary_policy_hyperparams}
\end{table}

\subsection{Experiments}
We swept the environment over 5 seeds for all hyperparameters in \Cref{table:rl_combined}. Following the architecture of CBP in \cite{dohare_loss_2024}, we use CBP together with $L2$ normalization instead of just CBP. With the additional sweep on weight decays, we fixed the learning rate of CBP constant. The best hyperparameters are reported in \Cref{table:rl_combined}.  

\section{Runtime}
\begin{figure}[htbp]
    \centering
    \includegraphics[width=0.9\linewidth]{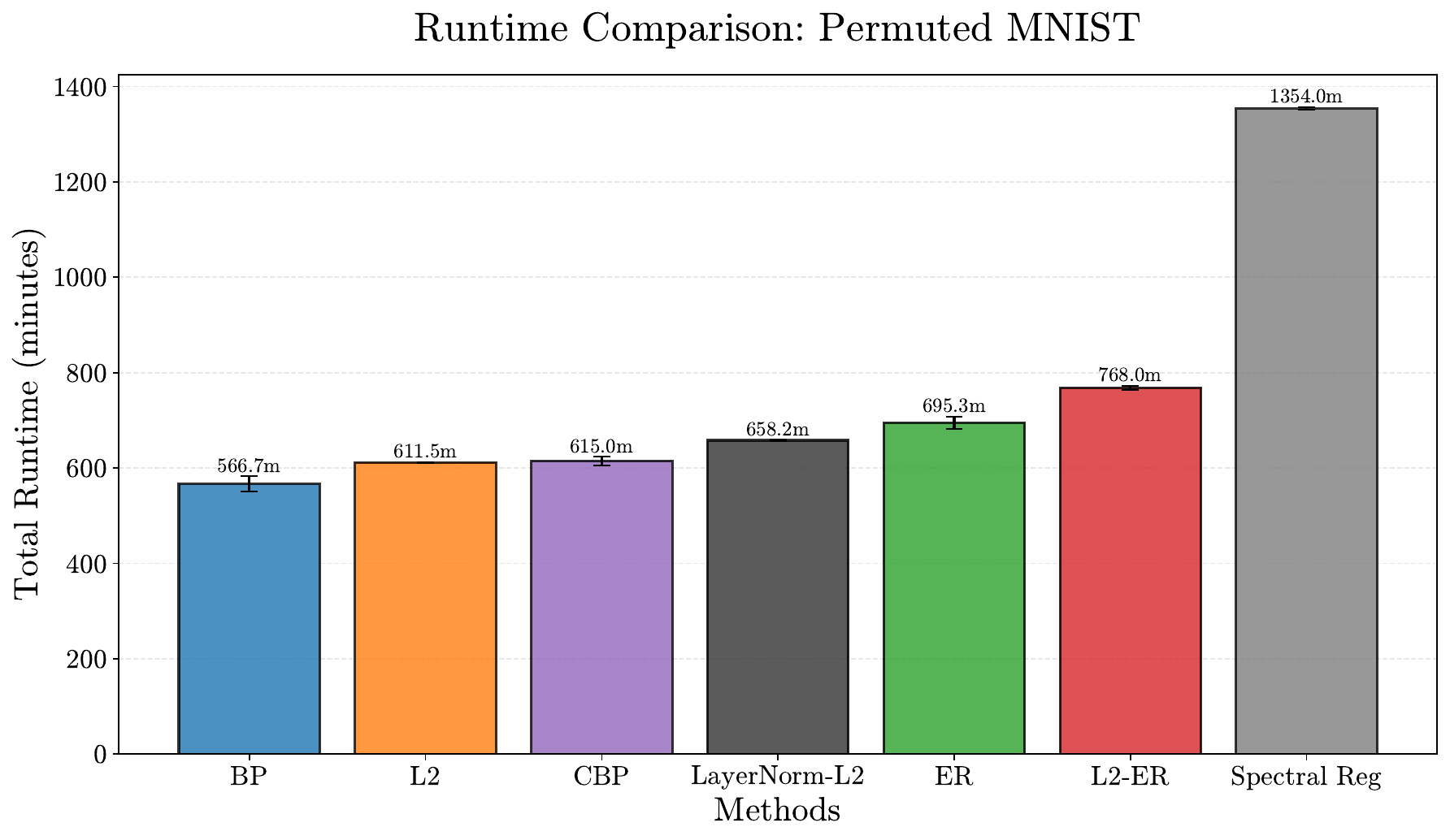}
    \caption{Runtime in Permuted MNIST Across all methods}
    \label{fig:runtime}
\end{figure}

\begin{figure}[htbp]
    \centering
    \includegraphics[width=0.9\linewidth]{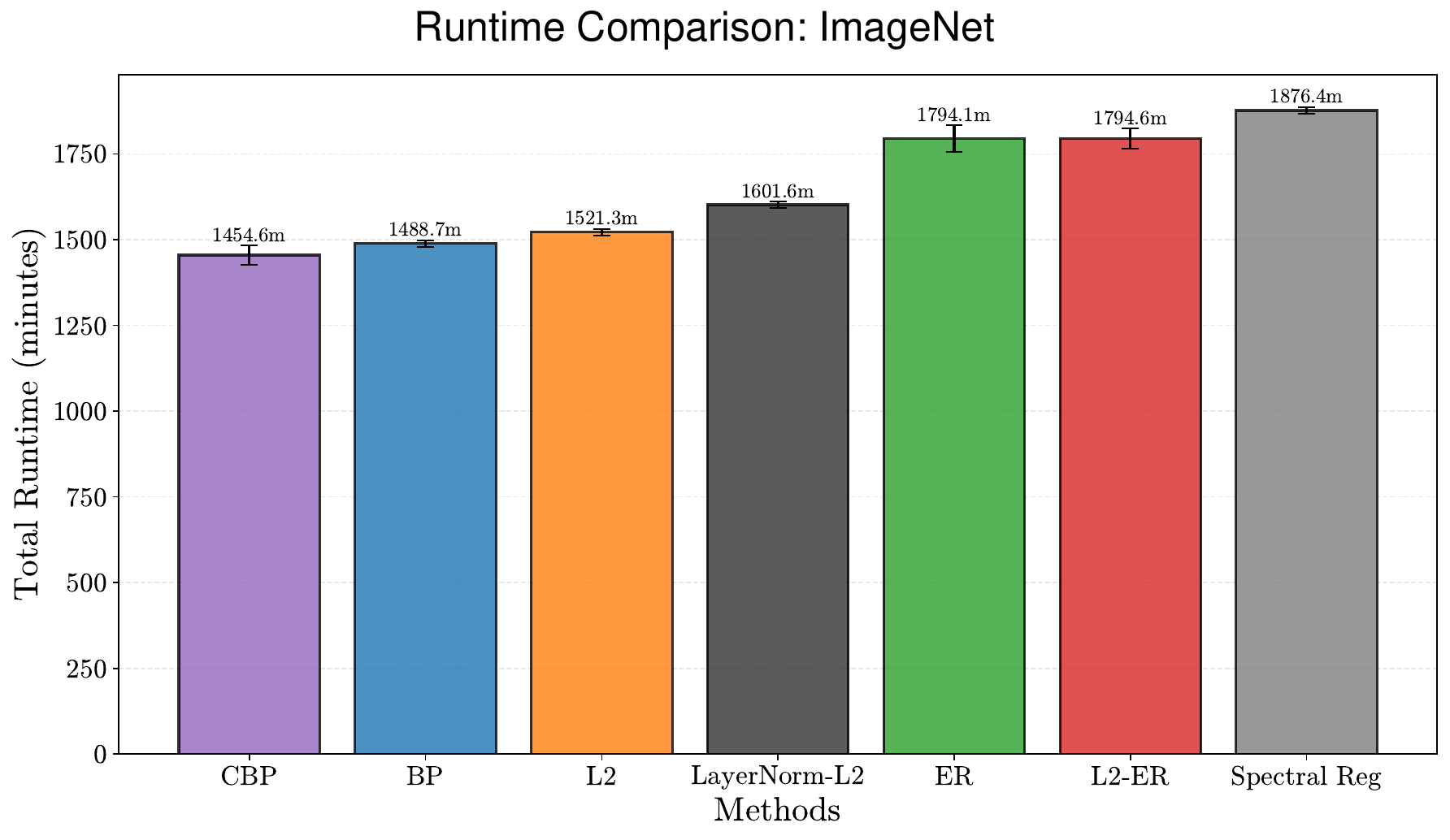}
    \caption{Runtime in ImageNet Across all methods}
    \label{fig:runtime}
\end{figure}

\begin{figure}[htbp]
    \centering
    \includegraphics[width=0.9\linewidth]{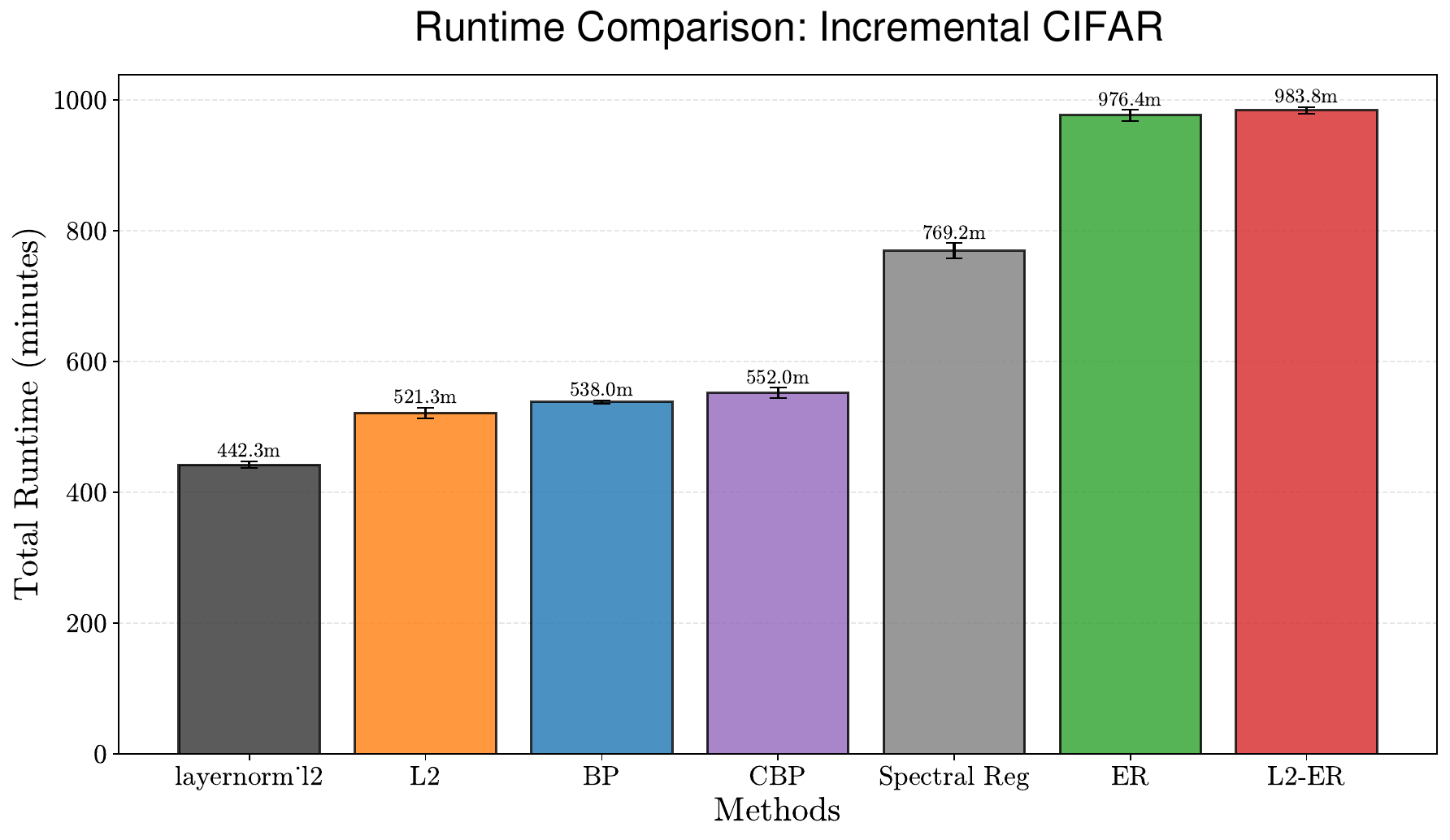}
    \caption{Runtime in Incremental CIFAR Across all methods}
    \label{fig:runtime}
\end{figure}


\end{document}